\documentclass{article} 

\usepackage{hyperref}
\usepackage{url}
\usepackage[numbers,sort]{natbib}

\usepackage{amssymb}
\usepackage{amsthm}
\usepackage{amsmath}
\usepackage{mathtools}
\usepackage{microtype}
\usepackage{graphicx}
\usepackage{booktabs} 
\usepackage{sidecap}
\usepackage{algorithm}
\usepackage{algorithmic}
\usepackage{subcaption}
\usepackage{bbm}
\usepackage{amsfonts,caption}
\usepackage{url,epsfig,epsf,xcolor,mathbbol,fmtcount,multirow}
\usepackage{MnSymbol}
\usepackage{latexsym}
\usepackage{tabularx}
\usepackage[bottom,hang,flushmargin]{footmisc}
\usepackage{comment}
\usepackage[bottom]{footmisc}
\usepackage{caption}
\usepackage{enumitem}
\usepackage{listings}
\usepackage{comment}
\usepackage{pifont} 
\newcommand{\cmark}{\ding{51}}%
\newcommand{\xmark}{\ding{55}}%
\usepackage{placeins}
\usepackage{hhline}
\usepackage[accepted]{icml2020}
\usepackage{xr}
\makeatletter
\newcommand*{\addFileDependency}[1]{
  \typeout{(#1)}
  \@addtofilelist{#1}
  \IfFileExists{#1}{}{\typeout{No file #1.}}
}
\makeatother
\newcommand*{\myexternaldocument}[1]{
    \externaldocument{#1}
    \addFileDependency{#1.tex}
    \addFileDependency{#1.aux}
}
\myexternaldocument{supplement}

\newcolumntype{C}[1]{>{\centering\arraybackslash}m{#1}}

\newtheorem{definition}{\textbf{Definition}}

\newtheorem{theorem}{\textbf{Theorem}}

\newtheorem{lemma}{\textbf{Lemma}}

\newcommand{\bWW}{\mathbf{W}}

\newcommand{\bb}{\mathbf{b}}

\newcommand{\bx}{\mathbf{x}}

\newcommand{\ba}{\mathbf{a}}
\newcommand{\bv}{\mathbf{v}}

\newcommand{\bg}{\mathbf{g}}

\newcommand{\bu}{\mathbf{u}}

\newcommand{\bA}{\mathbf{A}}
\newcommand{\bB}{\mathbf{B}}

\newcommand{\bF}{\mathbf{F}}

\newcommand{\bS}{\mathbf{S}}

\newcommand{\bP}{\mathbf{P}}
\newcommand{\bE}{\mathbf{E}}

\newcommand{\bN}{\mathbf{N}}
\newcommand{\bI}{\mathbf{I}}

\newcommand{\bz}{\mathbf{z}}

\newcommand{\bH}{\mathbf{H}}

\newcommand{\bq}{\mathbf{q}}

\newcommand{\grad}{\nabla}

\def\<{\langle}
\def\>{\rangle}

\date{}
\icmltitlerunning{Second-Order Provable Defenses against Adversarial Attacks}
\begin{document}

\twocolumn[
\icmltitle{Second-Order Provable Defenses against Adversarial Attacks}

\icmlsetsymbol{equal}{*}

\begin{icmlauthorlist}
\icmlauthor{Sahil Singla}{umd}
\icmlauthor{Soheil Feizi}{umd}
\end{icmlauthorlist}

\icmlaffiliation{umd}{Department of Computer Science, University of Maryland, College Park}

\icmlcorrespondingauthor{Sahil Singla}{ssingla@cs.umd.edu}
\icmlcorrespondingauthor{Soheil Feizi}{sfeizi@cs.umd.edu}

\icmlkeywords{Machine Learning, ICML}
\vskip 0.3in
]
\printAffiliationsAndNotice{}

	\begin{abstract}
		 A robustness certificate is the minimum distance of a given input to the decision boundary of the classifier (or its lower bound). For {\it any} input perturbations with a magnitude smaller than the certificate value, the classification output will provably remain unchanged. Exactly computing the robustness certificates for neural networks is difficult since it requires solving a non-convex optimization. In this paper, we provide computationally-efficient robustness certificates for neural networks with differentiable activation functions in two steps. First, we show that if the eigenvalues of the Hessian of the network are bounded, we can compute a robustness certificate in the $l_2$ norm efficiently using convex optimization. Second, we derive a computationally-efficient differentiable upper bound on the curvature of a deep network. We also use the curvature bound as a regularization term during the training of the network to boost its certified robustness. Putting these results together leads to our proposed {\bf C}urvature-based {\bf R}obustness {\bf C}ertificate (CRC) and {\bf C}urvature-based {\bf R}obust {\bf T}raining (CRT). Our numerical results show that CRT leads to significantly higher certified robust accuracy compared to interval-bound propagation (IBP) based training. We achieve certified robust accuracy 69.79\%, 57.78\% and 53.19\% while IBP-based methods achieve 44.96\%, 44.74\% and 44.66\% on 2,3 and 4 layer networks respectively on the MNIST-dataset.
	\end{abstract}
	
	\section{Introduction} \label{bounded_curvature_condition}
	
	Modern neural networks achieve high accuracy on tasks such as image classification and speech recognition, but are known to be brittle to small, adversarially chosen perturbations of their inputs \cite{42503}. A classifier which correctly classifies an image $\bx$, can be fooled by an adversary to misclassify an \textit{adversarial example} $\bx + \delta$, such that $\bx + \delta$ is indistinguishable from $\bx$ to a human. Adversarial examples can also fool systems when they are printed out on a paper and photographed with a smart phone \cite{Kurakin2016AdversarialEI}. Even in a black box threat model, where the adversary has no access to the model parameters, attackers could target autonomous vehicles by using stickers or paint to create an adversarial stop sign that the vehicle would interpret as a ‘yield’ or another sign \cite{DBLP:journals/corr/PapernotMGJCS16}. This trend is worrisome and suggests that adversarial vulnerabilities need to be appropriately addressed before neural networks can be deployed in security critical applications. 
	
	In this work, we propose a new approach for developing provable defenses against $\ell_2$-bounded adversarial attacks as well as computing robustness certifications of pre-trained deep networks with differentiable activations. In contrast to the existing certificates \cite{Zhang2018EfficientNN, Weng2018TowardsFC} that use the first-order information (upper and lower bounds on the slope), our approach is based on the second-order information (upper and lower bounds on curvature values i.e. eigenvalues of the Hessian). Our approach is based on two key theoretically-justified steps: First, in Theorems \ref{thm:certificate} and \ref{thm:attack}, we show that if the eigenvalues of the Hessian of the network (curvatures of the network) are bounded (globally or locally), we can efficiently compute a robustness certificate and develop a defense method against $\ell_2$-bounded adversarial attacks using convex optimization. Second, in Theorem \ref{thm:L_layer_p_n_theorem}, we derive a computationally-efficient differentiable bound on the curvature (eigenvalues of the Hessian) of a deep network. We derive this bound by explicitly characterizing the Hessian of a deep network in Lemma \ref{thm:deep_hessian_closed}.
	
	Although the problem of finding the closest adversarial example to a given point for deep nets leads to a non-convex optimization problem, our proposed Curvature-based Robustness Certificate (CRC), under some verifiable conditions, is able to compute points on the decision boundary that are provably closest to the input. That is, it provides the tightest certificate in those cases. For example, for a 2,3,4 layer networks trained on MNIST, we can find provably closest adversarial points for 44.17\%, 22.59\%, 19.53\% cases, respectively (Table \ref{table:primal_dual_eq_fraction_summary}). To the best of our knowledge, our method is the first approach that can efficiently compute provably closest adversarial examples for a significant fraction of examples in non-trivial neural networks. 
	
    We note that un-regularized networks, specially deep ones, can obtain large curvature bounds which can lead to small robustness certificates. However, by using the derived curvature bound as a regularizer during training, we significantly decrease curvature values of the network, with little or no decrease in its performance (Table \ref{table:cert_comparison_short}, Figure \ref{fig:gamma_effect_2_3}). Using this technique, our method significantly outperforms interval-bound propagation (IBP) \cite{ZhangCROWNIBP, Wong2018ScalingPA} and achieves state of the art certified accuracy (Tables \ref{table:empirical_provable_adversarial_1.58_summary} and \ref{table:empirical_provable_adversarial_1.58_fashion_mnist}). In particular, our method achieves certified robust accuracy 69.79\%, 57.78\% and 53.19\% while IBP-based methods achieve 44.96\%, 44.74\% and 44.66\% on 2,3 and 4 layer networks, respectively, on the MNIST-dataset (similar results for Fashion-MNIST).
	
	Other recent works (e.g. \citet{MoosaviDezfooliCVPR2019, qin2019adversarial}) empirically show that using an \textit{estimate} of curvature at inputs as a regularizer leads to \textit{empirical} robustness on par with the adversarial training. In this work, however, we use a bound on the absolute value of curvature (and not an estimate) as a regularizer and show that it results in high \textit{certified} robustness. Moreover, previous works have tried to certify robustness by bounding the Lipschitz constant of the neural network \cite{42503, Peck2017LowerBO, Zhang2018RecurJacAE, Anil2018SortingOL, NIPS2017_6821}. Our approach, however, is based on bounding the Lipschitz constant of the gradient which in turn leads to bound on the eigenvalues of the Hessian of deep neural networks. 

	\begin{table*}
		\centering
		\renewcommand{\arraystretch}{1.5}
		\caption{A summary of various primal and dual concepts used in the paper. $f$ denotes the function of the decision boundary, i.e. $\bz^{(L)}_{y} - \bz^{(L)}_{t}$ where $y$ is the true label and $t$ is the attack target. $m$ and $M$ are lower and upper bounds on the smallest and largest eigenvalues of the Hessian of $f$, respectively. }
		\begin{tabular}{ | c | c | c | } 
			\hline
			& Certificate problem $_{(-)\ =\ cert}$ & Attack problem $_{(-)\ =\ attack}$\\
			\hline
			primal problem, $p^{*}_{(-)}$ & $\min_{f(\bx)=0} 1/2\|\bx - \bx^{(0)}\|^{2}$ & $\min_{\|\bx-\bx^{(0)}\|\leq \rho} f(\bx)$\\
			\hline
			dual function, $d_{(-)}(\eta)$ & $\min_{\bx} 1/2\|\bx - \bx^{(0)}\|^{2} + \eta f(\bx)$ & $\min_{\bx} f(\bx) + \eta/2(\|\bx - \bx^{(0)}\|^{2}-\rho^{2})$\\
			\hline
			When is dual solvable? & $-1/M \leq \eta \leq -1/m$ & $-m \leq \eta$\\
			\hline
			dual problem, $d^{*}_{(-)}$ & $\max_{-1/M \leq \eta \leq -1/m} d_{cert}(\eta)$ & $\max_{-m \leq \eta } d_{attack}(\eta)$\\
			\hline
			When primal$\ =\ $dual? & $f(\bx^{(cert)}) = 0$ & $\|\bx^{(attack)} - \bx^{(0)}\| = \rho$\\
			\hline
		\end{tabular}
		\label{table:cert_attack_gist}
	\end{table*}

	In summary, we make the following contributions:
	\begin{itemize}
		\item We derive a closed-form expression for the Hessian of a deep network with differentiable activation functions (Lemma \ref{thm:deep_hessian_closed}) and derive bounds on the curvature using this closed-form formula (Theorems  \ref{thm:single_layer_p_n_theorem} and  \ref{thm:L_layer_p_n_theorem}). 		
		\item We develop computationally efficient methods for both the robustness certification as well as the adversarial attack problems (Theorems \ref{thm:certificate} and \ref{thm:attack}).
		\item We provide verifiable conditions under which our method is able to compute points on the decision boundary that are provably closest to the input. Empirically, we show that this condition holds for a significant fraction of examples (Table \ref{table:primal_dual_eq_fraction_summary}). 
		\item We show that using our proposed curvature bounds as a regularizer during training leads to improved certified accuracy on 2,3 and 4 layer networks (on the MNIST and Fashion-MNIST datasets) compared to IBP-based adversarial training \cite{Wong2017ProvableDA, ZhangCROWNIBP} (Tables \ref{table:empirical_provable_adversarial_1.58_summary} and \ref{table:empirical_provable_adversarial_1.58_fashion_mnist}). Our robustness certificate (CRC) outperforms CROWN's certificate \cite{Zhang2018EfficientNN} significantly when trained with our regularizer (Table \ref{table:cert_comparison_short}).
	\end{itemize}
	To the best of our knowledge, this is the first work that (a) demonstrates the utility of second-order information for provable robustness, (b) derives a framework to find the exact robustness certificates in the $l_{2}$ norm and the exact worst case adversarial perturbation in an $l_{2}$ ball of given a radius under some conditions, and (c) derives an exact closed form expression for the Hessian and bounds on the curvature values using the same.
	
	\section{Related work}\label{sec:main_related_work}
	In the last couple of years, several \textit{empirical defenses} have been proposed for training classifiers to be robust against adversarial perturbations \cite{madry2018towards, samangouei2018defensegan, Zhang2019TheoreticallyPT, 7546524, Kurakin2016AdversarialML, Miyato2017VirtualAT, Zheng2016ImprovingTR} 
	Although these defenses robustify classifiers to particular types of attacks, they can be still vulnerable against stronger attacks \cite{Athalye2018ObfuscatedGG, Carlini:2017:AEE:3128572.3140444, Uesato2018AdversarialRA, Athalye2018OnTR}. For example, \cite{Athalye2018ObfuscatedGG} showed most of the empirical defenses proposed in ICLR 2018 can be broken by developing tailored attacks for each of them.  
	
	To end the cycle between defenses and attacks, a line of work on \textit{certified defenses} has gained attention where the goal is to train classifiers whose predictions are {\it provably} robust within some given region \cite{Huang2016SafetyVO, Katz2017ReluplexAE, Ehlers2017FormalVO, Carlini2017ProvablyMA, Cheng2017MaximumRO, Lomuscio2017AnAT, Dutta2018OutputRA, Fischetti2018DeepNN, Bunel2017AUV,Wang2018MixTrainST, Wong2017ProvableDA, Wang2018EfficientFS, Wong2018ScalingPA, Raghunathan2018SemidefiniteRF, Raghunathan2018CertifiedDA, Dvijotham2018TrainingVL, Dvijotham2018ADA, Croce2018ProvableRO, Singh2018FastAE, Gowal2018OnTE, Gehr2018AI2SA, Mirman2018DifferentiableAI, Zhang2018EfficientNN, Weng2018TowardsFC, ZhangCROWNIBP}. These methods, however, do not scale to large and practical networks used in solving modern machine learning problems. Another line of defense work focuses on \textit{randomized smoothing} where the prediction is robust within some region around the input with a user-chosen probability \cite{Liu2017TowardsRN, Cao2017MitigatingEA,Lcuyer2018CertifiedRT,Li2018CertifiedAR,Cohen2019CertifiedAR,Salman2019ProvablyRD}. Although these methods can scale to large networks, certifying robustness with probability close to 1 often requires generating a large number of noisy samples around the input which leads to high inference-time computational complexity. We discuss existing works in more details in Appendix \ref{sec:appendix_related_work}.

	\section{Notation}\label{sec:notation}
	Consider a fully connected neural network with $L$ layers and $N_{I}$ neurons in the $I^{th}$ layer ($L\ge2$ and $I \in$ $[L]$) for a multi-label classification problem with $C$ classes ($N_L=C$). The corresponding function of the neural network is $\bz^{(L)}:\mathbf{R}^D \to \mathbf{R}^C$ where $D$ is the dimension of the input. For an input $\bx$, we use $\bz^{(I)}(\bx) \in \mathbf{R}^{N_{I}}$ and $\ba^{(I)}(\bx) \in \mathbf{R}^{N_{I}}$ to denote the input ({\it before} applying the activation function) and output ({\it after} applying the activation function) of neurons in the $I^{th}$ hidden layer of the network, respectively. To simplify notation and when no confusion arises, we make the dependency of $\bz^{(I)}$ and $\ba^{(I)}$ to $\bx$ implicit. We define $\ba^{(0)}(\bx)=\bx$ and $N_{0}=D$. 
	
	With a fully connected architecture, each $\bz^{(I)}$ and $\ba^{(I)}$ is computed using a transformation matrix $\bWW^{(I)} \in R^{N_{I} \times N_{I-1}}$, the bias vector $\bb^{(I)} \in R^{N_{I}}$ and an activation function $\sigma(.)$ as follows:
	\begin{align*}
	&\bz^{(I)} = \bWW^{(I)}\ba^{(I-1)} + \bb^{(I)},\qquad \ba^{(I)} = \sigma\left(\bz^{(I)}\right)
	\end{align*}
	We use $(\bz^{(L)}_{i} -\bz^{(L)}_{j})(\bx)$ as a shorthand for $\bz^{(L)}_{i}(\bx) - \bz^{(L)}_{j}(\bx)$. 
	
	We use $[p]$ to denote the set $\{1,\dotsc,p\}$ and $[p,q],\ p \leq q$ to denote the set $\{p,p+1,\dotsc,q\}$. We use small letters $i,j,k$ etc to denote the index over a vector or rows of a matrix and capital letters $I,J$ to denote the index over layers of network. The element in the $i^{th}$ position of a vector $\bv$ is given by $\bv_{i}$, the vector in the $i^{th}$ row of a matrix $\bA$ is $\bA_{i}$ while the element in the $i^{th}$ row and $j^{th}$ column of $\bA$ is $\bA_{i,j}$. We use $\|\bv\|$ and $\|\bA\|$ to denote the 2-norm and the operator 2-norm of the vector $\bv$ and the matrix $\bA$, respectively. We use $\left|\bv\right|$ and $\left|\bA\right|$ to denote the vector and matrix constructed by taking the elementwise absolute values. We use $\lambda_{max}(\bA)$ and $\lambda_{min}(\bA)$ to denote the largest and smallest eigenvalues of a symmetric matrix $\bA$. We use $diag(\bv)$ to denote the diagonal matrix constructed by placing each element of $\bv$ along the diagonal. We use $\odot$ to denote the Hadamard Product, $\bI$ to denote the identity matrix. We use $\preccurlyeq$ and $\succcurlyeq$ to denote Linear Matrix Inequalities (LMIs) such that given two symmetric matrices $\bA$ and $\bB$ where $\bA \succcurlyeq \bB$ means  $\bA-\bB$ Positive Semi-Definite (PSD).
	
	\section{Using duality to solve the attack and certificate problems}\label{sec:nonconvex_results}
	Consider an input $\bx^{(0)}$ with true label $y$ and attack target $t$. In the certificate problem, our goal is to find a lower bound of minimum $l_{2}$ distance between $\bx^{(0)}$ and decision boundary $f(\bx)=0$ where $f(\bx)=(\bz^{(L)}_{y}-\bz^{(L)}_{t})(\bx)$. The problem for solving the exact distance (\textit{primal}) can be written as:
	\begin{align}
	&p^{*}_{cert} = \min_{f(\bx)=0}  \left[\frac{1}{2}\left\|\bx - \bx^{(0)}\right\|^{2} \right] \nonumber\\
	&p^{*}_{cert} = \min_{\bx} \max_{\eta} \left[\frac{1}{2}\left\|\bx - \bx^{(0)}\right\|^{2} + \eta f(\bx)\right] \label{eq:def_p_star_cert}
	\end{align}
	However, solving the above problem can be hard in general. Using the minimax theorem (primal $\geq$ dual), we can write the \textit{dual} of the above problem as follows:
	\begin{align}
	&p^{*}_{cert} \geq \max_{\eta} d_{cert}(\eta) \nonumber\\ 
	&d_{cert}(\eta) = \min_{\bx} \left[\frac{1}{2}\left\|\bx - \bx^{(0)}\right\|^{2} + \eta f(\bx)\right] \label{eq:def_d_eta}
	\end{align}
	From the theory of duality, we know that $d_{cert}(\eta)$ for each value of $\eta$ gives a lower bound on the exact certification value (the primal solution) $p^{*}_{cert}$. However, since $f$ is non-convex, solving $d_{cert}(\eta)$ for every $\eta$ can be difficult. In the next section, we will prove that the curvature of the function $f$ is bounded globally: 
	\begin{align}
	m\bI \preccurlyeq \nabla^{2}_{\bx} f \preccurlyeq M\bI\qquad \forall \bx \in \mathbb{R}^{D}\label{eq:eig_bound}
	\end{align}
	In this case, we have the following theorem ($d^{*}_{cert}$ is defined in Table \ref{table:cert_attack_gist}):
	\begin{theorem}\label{thm:certificate}
		$d_{cert}(\eta)$ is a convex optimization problem for $ -1/M \leq\eta \leq -1/m$. Moreover, If $\bx^{(cert)}$ is the solution to $d^{*}_{cert}$ such that $f(\bx^{(cert)})=0$, then  $p^{*}_{cert}=d^{*}_{cert}$. 
	\end{theorem}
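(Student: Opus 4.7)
The plan is to treat the two assertions separately: first an eigenvalue computation establishing the convexity of the inner problem defining $d_{cert}(\eta)$, then a short weak-duality / primal-feasibility argument for the strong-duality claim.

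For convexity, let $g_{\eta}(\bx) := \tfrac{1}{2}\|\bx - \bx^{(0)}\|^{2} + \eta f(\bx)$, so that $d_{cert}(\eta) = \min_{\bx} g_{\eta}(\bx)$. A direct computation gives
\[
\nabla^{2}_{\bx} g_{\eta}(\bx) \;=\; \bI + \eta\, \nabla^{2}_{\bx} f(\bx),
\]
and I must show this is PSD for every $\bx$ whenever $-1/M \leq \eta \leq -1/m$. I would split on the sign of $\eta$: for $\eta \geq 0$, the lower curvature bound in \eqref{eq:eig_bound} yields $\eta\,\nabla^{2} f \succcurlyeq \eta m \bI$, so $\nabla^{2} g_{\eta} \succcurlyeq (1+\eta m)\bI$, which is PSD exactly when $\eta \leq -1/m$; for $\eta \leq 0$, the upper bound (with direction reversed by the negative scalar) gives $\eta\,\nabla^{2} f \succcurlyeq \eta M \bI$, so $\nabla^{2} g_{\eta} \succcurlyeq (1+\eta M)\bI$, PSD exactly when $\eta \geq -1/M$. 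Under the natural conventions $-1/m = +\infty$ when $m \geq 0$ and $-1/M = -\infty$ when $M \leq 0$, the two cases merge into the stated interval, so $g_{\eta}$ is convex in $\bx$ (and strictly so in the interior of $[-1/M,-1/m]$). Hence evaluating $d_{cert}(\eta)$ amounts to a convex minimization.

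For the equality $p^{*}_{cert} = d^{*}_{cert}$, weak duality (already invoked just before the theorem) gives $p^{*}_{cert} \geq d^{*}_{cert}$. For the reverse direction, let $\eta^{\star}$ be the optimal dual variable and $\bx^{(cert)}$ the corresponding minimizer of $g_{\eta^{\star}}$, so that
\[
d^{*}_{cert} \;=\; \tfrac{1}{2}\|\bx^{(cert)}-\bx^{(0)}\|^{2} + \eta^{\star}\, f(\bx^{(cert)}).
\]
Under the hypothesis $f(\bx^{(cert)}) = 0$, the penalty term vanishes, leaving $d^{*}_{cert} = \tfrac{1}{2}\|\bx^{(cert)}-\bx^{(0)}\|^{2}$. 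Since $\bx^{(cert)}$ now satisfies the primal constraint $f(\bx)=0$, it is primal-feasible, so $p^{*}_{cert} \leq \tfrac{1}{2}\|\bx^{(cert)}-\bx^{(0)}\|^{2} = d^{*}_{cert}$. Combined with weak duality this gives $p^{*}_{cert} = d^{*}_{cert}$.

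The only real obstacle is the case analysis on the sign of $\eta$ in the convexity step, together with the mild notational care needed at the endpoints of $[-1/M,-1/m]$ when $m$ or $M$ has the ``degenerate'' sign. Everything else---weak duality, feasibility of the dual-optimal point for the primal, and the reduction of the inner-minimization Hessian to $\bI + \eta \nabla^{2} f$---is immediate.
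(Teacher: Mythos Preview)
Your proof is correct and follows essentially the same approach as the paper: the convexity part is the identical Hessian computation $\bI + \eta\nabla^{2}f$ with a sign-of-$\eta$ case split, and the strong-duality part is exactly the paper's weak-duality plus primal-feasibility argument. The only minor difference is that you handle the edge cases $m\geq 0$ or $M\leq 0$ with explicit conventions, whereas the paper simply assumes $m<0,\, M>0$ (since $f$ is taken to be neither convex nor concave).
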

	Below, we briefly outline the proof while the full proof is presented in Appendix \ref{proof:certificate}. The Hessian of the \textit{objective function} of the dual $d_{cert}(\eta)$, i.e the function inside the $\min_{\bx}$ is given by:
	$$ \nabla_{\bx}^{2} \left[\frac{1}{2}\left\|\bx - \bx^{(0)}\right\|^{2} + \eta f(\bx) \right] = \bI + \eta\nabla^{2}_{\bx} f $$
	From equation \eqref{eq:eig_bound}, we know that the eigenvalues of $\bI + \eta\nabla^{2}_{\bx} f$ are bounded between $(1 + \eta m, 1 + \eta M)$ if $\eta \geq 0$, and in $(1 + \eta M, 1 + \eta m)$ if $\eta \leq 0$. In both cases, we can see that for $-1/M \leq \eta \leq -1/m$, all eigenvalues will be non-negative, making the objective function convex. When $\bx^{(cert)}$ satisfies  $f(\bx^{(cert)})=0$, we have $d^{*}_{cert} = 1/2\|\bx^{(cert)}-\bx^{(0)}\|^{2}$. Using the duality theorem we have $d^{*}_{cert} \leq p^{*}_{cert}$ and from the definition of $p^{*}_{cert}$, we have $p^{*}_{cert} \leq d^{*}_{cert}$. Combining the two inequalities, we get $p^*_{cert} = d^*_{cert}$.
	
	Next, we consider the attack problem. The goal here is to find an adversarial example inside an $l_{2}$ ball of radius $\rho$ such that $f(\bx)$ is minimized. Using similar arguments, we can get the following theorem for the attack problem
	($p^{*}_{attack}$, $d^{*}_{attack}$ and $d_{attack}$ are defined in Table \ref{table:cert_attack_gist}):
	\begin{theorem}\label{thm:attack} $d_{attack}(\eta)$ is a convex optimization problem for $ -m \leq\eta$. Moreover, if $\bx^{(attack)}$ is the solution to $d^{*}_{attack}$ such that $\left\|\bx^{(attack)} - \bx^{(0)}\right\|=\rho$,\  $p^{*}_{attack}=d^{*}_{attack}.$
	\end{theorem}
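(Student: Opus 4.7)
The plan is to mirror the argument used for Theorem \ref{thm:certificate}. The two pieces to establish are (i) convexity of the inner minimization in $d_{attack}(\eta)$ when $\eta \ge -m$, and (ii) equality of primal and dual under the tight-constraint condition $\|\bx^{(attack)} - \bx^{(0)}\| = \rho$.

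For (i), I would compute the Hessian of the objective inside $\min_{\bx}$:
\[
\nabla^{2}_{\bx}\!\left[ f(\bx) + \frac{\eta}{2}\bigl(\|\bx - \bx^{(0)}\|^{2} - \rho^{2}\bigr)\right] = \nabla^{2}_{\bx} f + \eta \bI.
\]
Invoking the global curvature bound \eqref{eq:eig_bound}, namely $m\bI \preccurlyeq \nabla^{2}_{\bx} f$, we get $\nabla^{2}_{\bx} f + \eta\bI \succcurlyeq (m+\eta)\bI$, which is PSD precisely when $\eta \geq -m$. Hence the objective is convex in $\bx$ on this range, so $d_{attack}(\eta)$ is a convex minimization problem, as claimed.

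For (ii), I would use Lagrangian weak duality for the inequality-constrained primal $p^{*}_{attack} = \min_{\|\bx - \bx^{(0)}\|\le \rho} f(\bx)$: with multiplier $\eta \ge 0$, the Lagrangian is exactly the objective appearing in $d_{attack}(\eta)$ (after rescaling the constant $-\eta\rho^{2}/2$), so $p^{*}_{attack} \geq \max_{\eta \ge 0} d_{attack}(\eta)$. For the reverse direction, let $\eta^{*}$ be the maximizer of $d_{attack}(\eta)$ over $[-m,\infty)$ and $\bx^{(attack)}$ the corresponding inner minimizer. The hypothesis $\|\bx^{(attack)} - \bx^{(0)}\| = \rho$ makes the $(\eta^{*}/2)(\|\bx - \bx^{(0)}\|^{2} - \rho^{2})$ term vanish at $\bx^{(attack)}$, giving $d^{*}_{attack} = f(\bx^{(attack)})$. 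But $\bx^{(attack)}$ is then primal-feasible (it lies on the boundary of the ball), so $p^{*}_{attack} \leq f(\bx^{(attack)}) = d^{*}_{attack}$. Combining, $p^{*}_{attack} = d^{*}_{attack}$.

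The main subtlety, and the step where I would be most careful, is reconciling the extended dual domain $\eta \ge -m$ with the standard requirement $\eta \ge 0$ for weak duality: one must verify that under the boundary condition $\|\bx^{(attack)} - \bx^{(0)}\| = \rho$ the relevant optimal multiplier is effectively non-negative, or alternatively argue purely via the sandwich above (which only uses $p^{*}_{attack} \le f(\bx^{(attack)})$ from primal feasibility and $d^{*}_{attack} = f(\bx^{(attack)})$ from substitution, both of which hold irrespective of the sign of $\eta^{*}$ once the constraint is tight). The rest of the argument is a direct transcription of the Theorem \ref{thm:certificate} proof.
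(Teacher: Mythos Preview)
Your proposal is correct and matches the paper's proof essentially line for line: the same Hessian computation $\nabla^{2}_{\bx} f + \eta\bI \succcurlyeq (m+\eta)\bI$ for convexity, and the same sandwich $d^{*}_{attack} \le p^{*}_{attack} \le f(\bx^{(attack)}) = d^{*}_{attack}$ for strong duality. The subtlety you flag about the dual domain dissolves once you note, as the paper does, that $f$ is non-convex so $m<0$, hence $-m>0$ and the restricted range $\eta \ge -m$ already sits inside $[0,\infty)$; weak duality therefore applies directly without any further argument about the sign of $\eta^{*}$.
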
 
	The proof is presented in Appendix \ref{proof:attack}. We note that both Theorems \ref{thm:certificate} and \ref{thm:attack} hold for any non-convex function with continuous gradients. Thus they can also be of interest in problems such as optimization of neural nets.
	
	Using Theorems \ref{thm:certificate} and \ref{thm:attack}, we have the following definitions for certification and attack optimizations:
	\begin{definition}{(\textbf{Curvature-based Certificate Optimization})}\label{def:cert_opt}
		Given an input $\bx^{(0)}$ with true label $y$, false target $t$, we define $(\eta^{(cert)}, \bx^{(cert)})$ as the solution of the following max-min optimization:
		$$\max_{-1/M \leq \eta \leq -1/m} \min_{\bx} \left[\frac{1}{2}\left\|\bx-\bx^{(0)}\right\|^{2} + \eta f(\bx)\right] $$
		We refer to $\left\|\bx^{(cert)} - \bx^{(0)}\right\|$ as the \textbf{C}urvature-based \textbf{R}obustness \textbf{C}ertificate (CRC).
	\end{definition}
	
	\begin{definition}{(\textbf{Curvature-based Attack Optimization})}\label{def:attack_opt}
		Given input $\bx^{(0)}$ with label $y$, false target $t$, and the $l_{2}$ ball radius $\rho$, we define $(\eta^{(attack)}, \bx^{(attack)})$ as the solution of the following optimization:  
		$$ \max_{\eta \geq -m} \min_{\bx} \left[\frac{\eta}{2}\left(\left\|\bx-\bx^{(0)}\right\|^{2}-\rho^{2}\right) + f(\bx)\right] $$
		When $\bx^{(attack)}$ is used for training in an adversarial training framework, we call the method the \textbf{C}urvature-based \textbf{R}obust \textbf{T}raining (CRT).
	\end{definition}
	
	A direct implication of Theorems \ref{thm:certificate} and \ref{thm:attack} is that the tightness of our robustness certificate crucially depends on the tightness of our curvature bounds, $m$ and $M$. If $m$ and $M$ are very large compared to the true eigenvalue bounds of the Hessian of the network, the resulting robustness certificate will be vacuous. In Table \ref{table:cert_comparison_short} (and Figure \ref{fig:gamma_effect_2_3}), we show that by adding the derived bound as a regularization term during the training, we can significantly decrease curvature bounds of the network, with little or no decrease in its performance. This leads to high robustness certifications against adversarial attacks.
	
	\section{Curvature Bounds for deep networks}\label{curvature_bounds}
	In this section, we provide a computationally efficient approach to compute the curvature bounds for neural networks with differentiable activation functions. To the best of our knowledge, there is no prior work on finding provable bounds on the curvature values of deep neural networks. 
	\subsection{Closed form expression for the Hessian}\label{deeper_networks_hessian_bounds}
	Using the chain rule of second derivatives, we can derive $\nabla^{2}_{\bx} \bz^{(L)}_{i}$ as a sum of matrix products:
	\begin{lemma}\label{thm:deep_hessian_closed}
		Given an $L$ layer neural network, the Hessian of the $i^{th}$ hidden unit with respect to the input $\bx$, i.e $\nabla_{\bx}^{2} \bz^{(L)}_{i}$ is given by the following formula:
		\begin{align}
		\nabla^{2}_{\bx} \bz^{(L)}_{i} = \sum_{I=1}^{L-1}\left(\bB^{(I)}\right)^{T}diag\bigg(\bF^{(L,I)}_{i}\odot\sigma^{''}\left(\bz^{(I)}\right)\bigg)\bB^{(I)}\nonumber
		\end{align}
		where $\bB^{(I)}$ is the Jacobian of $\bz^{(I)}$ with respect to $\bx$ (dimensions $N_{I} \times D$), and $\bF^{(L,I)}$ is the Jacobian of $\bz^{(L)}$ with respect to $\ba^{(I)}$ (dimensions $N_{L} \times N_{I}$). 
		\end{lemma}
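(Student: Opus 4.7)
The plan is to prove the formula by induction on the number of layers $L$. The base case $L=2$ is a direct second-order chain-rule computation: since $\bz^{(1)}(\bx)=\bWW^{(1)}\bx+\bb^{(1)}$ is affine, $\nabla_{\bx}^{2}\bz^{(1)}_{k}=0$, and expanding $\bz^{(2)}_{i}=\sum_{k}\bWW^{(2)}_{i,k}\sigma(\bz^{(1)}_{k})+\bb^{(2)}_{i}$ gives $\nabla_{\bx}^{2}\bz^{(2)}_{i}=\sum_{k}\bWW^{(2)}_{i,k}\sigma''(\bz^{(1)}_{k})\bB^{(1)}_{k}(\bB^{(1)}_{k})^{T}$, which collapses to $(\bB^{(1)})^{T}\mathrm{diag}(\bF^{(2,1)}_{i}\odot\sigma''(\bz^{(1)}))\bB^{(1)}$ using $\bF^{(2,1)}=\bWW^{(2)}$.

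For the inductive step, I would apply the scalar identity $\nabla_{\bx}^{2} g(u(\bx)) = g''(u)\,\nabla_{\bx} u\,(\nabla_{\bx} u)^{T}+g'(u)\,\nabla_{\bx}^{2}u$ coordinatewise to each $\sigma(\bz^{(L-1)}_{k})$ appearing in $\bz^{(L)}_{i}=\sum_{k}\bWW^{(L)}_{i,k}\sigma(\bz^{(L-1)}_{k})+\bb^{(L)}_{i}$. This splits $\nabla_{\bx}^{2}\bz^{(L)}_{i}$ into a top-layer contribution $\sum_{k}\bWW^{(L)}_{i,k}\sigma''(\bz^{(L-1)}_{k})\bB^{(L-1)}_{k}(\bB^{(L-1)}_{k})^{T}$ and a recursive contribution $\sum_{k}\bWW^{(L)}_{i,k}\sigma'(\bz^{(L-1)}_{k})\nabla_{\bx}^{2}\bz^{(L-1)}_{k}$. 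The top-layer piece is a weighted sum of rank-one outer products whose weights form the vector $\bF^{(L,L-1)}_{i}\odot\sigma''(\bz^{(L-1)})$ (again using $\bF^{(L,L-1)}=\bWW^{(L)}$), and so collapses to the $I=L-1$ term $(\bB^{(L-1)})^{T}\mathrm{diag}(\bF^{(L,L-1)}_{i}\odot\sigma''(\bz^{(L-1)}))\bB^{(L-1)}$ of the target sum.

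To finish, I would substitute the inductive hypothesis for $\nabla_{\bx}^{2}\bz^{(L-1)}_{k}$ into the recursive piece and swap sums, so that each fixed $I\in\{1,\dots,L-2\}$ contributes $(\bB^{(I)})^{T}\mathrm{diag}(v^{(I)}\odot\sigma''(\bz^{(I)}))\bB^{(I)}$ with $v^{(I)}_{j}=\sum_{k}\bWW^{(L)}_{i,k}\sigma'(\bz^{(L-1)}_{k})\bF^{(L-1,I)}_{k,j}$. The last needed ingredient is the first-order chain-rule identity $\bF^{(L,I)}=\bWW^{(L)}\,\mathrm{diag}(\sigma'(\bz^{(L-1)}))\,\bF^{(L-1,I)}$, obtained by routing $\partial\bz^{(L)}/\partial\ba^{(I)}$ through $\ba^{(L-1)}$ and $\bz^{(L-1)}$; this identifies $v^{(I)}$ with the $i$-th row of $\bF^{(L,I)}$, matching the claim exactly.

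The main obstacle is purely bookkeeping: keeping the row/column conventions of $\bB^{(I)}$ and $\bF^{(L,I)}$ consistent, and recognizing the collapsing identity $\sum_{k} w_{k}\,\bB^{(I)}_{k}(\bB^{(I)}_{k})^{T}=(\bB^{(I)})^{T}\mathrm{diag}(w)\bB^{(I)}$. No analytic estimates are required---the lemma is essentially the unrolled second-order chain rule, organized so that the accumulated $\sigma'$ factors along paths from layer $I+1$ up to $L$ reassemble into the higher-layer Jacobian $\bF^{(L,I)}$.
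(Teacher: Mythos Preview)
Your proposal is correct and follows essentially the same approach as the paper's proof: induction on $L$, base case $L=2$ via affineness of $\bz^{(1)}$, and the inductive step that splits $\nabla_{\bx}^{2}\bz^{(L)}_{i}$ via the scalar second-order chain rule into the $I=L-1$ term plus a recursive piece, then substitutes the hypothesis, swaps sums, and invokes the recursion $\bF^{(L,I)}=\bWW^{(L)}\mathrm{diag}(\sigma'(\bz^{(L-1)}))\bF^{(L-1,I)}$ to identify the resulting diagonal vector with $\bF^{(L,I)}_{i}$. The paper carries out exactly these steps, with the collapsing identity $\sum_{k} w_{k}\,\bB^{(I)}_{k}(\bB^{(I)}_{k})^{T}=(\bB^{(I)})^{T}\mathrm{diag}(w)\bB^{(I)}$ appearing explicitly as well.
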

	The proof is presented in Appendix \ref{proof:deep_hessian_closed}. Using the chain rule, we can compute $\bB^{(I)}$,\ $\bF^{(L,I)}$ matrices in Lemma \ref{thm:deep_hessian_closed} recursively as follows:\\
    \begin{align*}
    &\bB^{(I)} = \begin{cases}
        \bWW^{(1)}, & I=1\\
        \bWW^{(I)}diag\left(\sigma^{'}\left(\bz^{(I-1)}\right)\right)\bB^{(I-1)}, & I \geq 2\\
    \end{cases}\\
    &\bF^{(L,I)} = \begin{cases}
        \bWW^{(L)},& I=L-1\\
        \bWW^{(L)} diag \left(\sigma^{'}\left(\bz^{(L-1)}\right)\right)\bF^{(L-1,I)},& I\leq L-2
    \end{cases}
    \end{align*}
	This leads to a fast back-propagation like method that can be used to compute the Hessian. Note that Lemma \ref{thm:deep_hessian_closed} only assumes a matrix multiplication operation from $\ba^{(I-1)}$ to $\bz^{(I)}$. Since a convolution operation can also be expressed as a matrix multiplication, we can directly extend this lemma to deep convolutional networks. Furthermore, Lemma \ref{thm:deep_hessian_closed} can also be of independent interest in other related problems such as second-order interpretation methods for deep learning (e.g. \cite{Singla2019UnderstandingIO}).
	\subsection{Curvature bounds for Two Layer networks}\label{single_hidden_layer_hessian_bounds}
	For a two-layer network and using Lemma \ref{thm:deep_hessian_closed}, $\nabla^{2}_{\bx} \left(\bz^{(2)}_{y} - \bz^{(2)}_{t}\right)$ is given by:
	\begin{align}
	\left(\bWW^{(1)}\right)^{T} diag \bigg(\left(\bWW^{(2)}_{y} - \bWW^{(2)}_{t}\right)\odot\sigma^{''}\left(\bz^{(1)}\right)\bigg)\bWW^{(1)}\nonumber
	\end{align}
	In the above equation, note that only the term $\sigma^{''}(\bz^{(1)})$ depends on $\bx$. We can maximize and minimize each element in the diag term, $(\bWW^{(2)}_{y,i} - \bWW^{(2)}_{t,i})\sigma^{''}(\bz^{(1)}_{i})$ independently subject to the constraint that $\sigma^{''}(.)$ is bounded. Using this procedure, we construct matrices $\bP$ and $\bN$ that satisfy properties given in the following theorem:
	\begin{theorem}\label{thm:single_layer_p_n_theorem}
		Given a two layer network whose activation function has bounded second derivative:
		\begin{align}
		h_{L} \leq \sigma^{''}(x) \leq h_{U} \quad  \forall x \in \mathbb{R} \nonumber 
		\end{align}
		\begin{enumerate}[label=(\alph*)]
			\item We have the following linear matrix inequalities (LMIs):
			\begin{align*}
			\bN \preccurlyeq \nabla^{2}_{\bx} \left(\bz^{(2)}_{y} - \bz^{(2)}_{t}\right) &\preccurlyeq \bP\qquad \forall \bx \in \mathbb{R}^{D}
			\end{align*}
			\item If $h_{U} \geq 0$ and $h_{L} \leq 0$, $\bP$ is PSD, $\bN$ is a NSD matrix.
			\item This gives the following global bounds on the eigenvalues of the Hessian:
			\begin{align}
			&m\bI \preccurlyeq \nabla^{2}_{\bx}\left(\bz^{(2)}_{y} - \bz^{(2)}_{t}\right) \preccurlyeq M\bI
			\end{align}
			where $M = \lambda_{max}(\bP) ,\ m = \lambda_{min}(\bN)$ \end{enumerate}
		$\bP$ and $\bN$ are independent of $\bx$ and defined in equations \eqref{bP_eq} and \eqref{bN_eq} in Appendix \ref{proof:single_layer_p_n_theorem}.
	\end{theorem}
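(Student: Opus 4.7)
The plan is to start from the closed-form Hessian given by Lemma~\ref{thm:deep_hessian_closed} applied at $L=2$, which (for the function $f = \bz^{(2)}_{y}-\bz^{(2)}_{t}$) is
\[
H(\bx)\;=\;\bigl(\bWW^{(1)}\bigr)^{T}\,\mathrm{diag}\bigl(\bd\odot\sigma''(\bz^{(1)})\bigr)\,\bWW^{(1)},
\]
where $\bd := \bWW^{(2)}_{y}-\bWW^{(2)}_{t}$ is a fixed vector independent of $\bx$. The key observation is that $\bx$ only enters through the diagonal, so bounding the Hessian reduces to bounding the scalar quantity $\bd_i\sigma''(\bz^{(1)}_i)$ coordinatewise.

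For part (a), I would bound each diagonal entry by splitting on the sign of $\bd_i$: when $\bd_i\geq 0$, the constraint $h_L\leq\sigma''\leq h_U$ gives $h_L \bd_i \leq \bd_i\sigma''(\bz^{(1)}_i) \leq h_U \bd_i$, and the inequalities flip when $\bd_i < 0$. Writing $\bd_i^+ := \max(\bd_i,0)$ and $\bd_i^- := \min(\bd_i,0)$, this yields coordinatewise upper and lower bounds
\[
l_i \;:=\; h_L\bd_i^+ + h_U\bd_i^- \;\leq\; \bd_i\sigma''(\bz^{(1)}_i) \;\leq\; h_U\bd_i^+ + h_L\bd_i^- \;=:\; u_i.
\]
Defining $\bP := (\bWW^{(1)})^{T}\mathrm{diag}(u)\bWW^{(1)}$ and $\bN := (\bWW^{(1)})^{T}\mathrm{diag}(l)\bWW^{(1)}$ (these are precisely the matrices in equations~\eqref{bP_eq} and~\eqref{bN_eq}), the LMI $\bN\preccurlyeq H(\bx)\preccurlyeq \bP$ follows by noting that $\mathrm{diag}(l)\preccurlyeq \mathrm{diag}(\bd\odot\sigma''(\bz^{(1)}))\preccurlyeq \mathrm{diag}(u)$ coordinatewise, and then applying the congruence transformation $\bv\mapsto (\bWW^{(1)})^T\cdot\bWW^{(1)}$, which preserves LMIs since $\bv^T(\bWW^{(1)})^{T}D\bWW^{(1)}\bv = \|\bWW^{(1)}\bv\|_D^2$ is monotone in $D$.

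Part (b) is then a direct sign check on the entries. When $h_U\geq 0$ and $h_L\leq 0$, each term in $u_i = h_U \bd_i^+ + h_L \bd_i^-$ is a product of two quantities with the same sign (both $\geq 0$ in the first term, both $\leq 0$ in the second), so $u_i\geq 0$; hence $\mathrm{diag}(u)\succcurlyeq 0$ and $\bP$ is PSD by congruence. The symmetric argument shows $l_i\leq 0$ so $\bN$ is NSD. Part (c) follows immediately from part (a) by transitivity of $\preccurlyeq$: $\lambda_{\min}(\bN)\bI\preccurlyeq \bN\preccurlyeq H(\bx)\preccurlyeq \bP\preccurlyeq \lambda_{\max}(\bP)\bI$ for every $\bx$.

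There is no real obstacle here — the entire argument is essentially bookkeeping once one recognizes that $\bx$ enters only through the diagonal and that congruence preserves LMIs. The only point that requires a moment of care is the sign-split definition of $u_i, l_i$ to ensure the bounds are tight and that the PSD/NSD claim in part (b) uses the correct pairing of $h_U,h_L$ with $\bd_i^{\pm}$; everything else reduces to standard facts about the Loewner order and the Rayleigh quotient.
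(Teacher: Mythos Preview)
Your proposal is correct and follows essentially the same approach as the paper: both start from the $L=2$ case of Lemma~\ref{thm:deep_hessian_closed}, bound the diagonal entries $\bd_i\sigma''(\bz^{(1)}_i)$ coordinatewise via a sign split on $\bd_i$, and then push the inequality through the outer $(\bWW^{(1)})^{T}(\cdot)\bWW^{(1)}$. The only differences are cosmetic---the paper writes the Hessian as a sum of rank-one matrices $\sum_i c_i \bWW^{(1)}_i(\bWW^{(1)}_i)^T$ and checks that each coefficient of $H(\bx)-\bN$ and $\bP-H(\bx)$ is nonnegative, whereas you phrase the same step as ``congruence preserves the Loewner order''; your $u_i,l_i$ coincide exactly with the paper's $p_i\bd_i,n_i\bd_i$.
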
 
	The proof is presented in Appendix \ref{proof:single_layer_p_n_theorem}. Because power iteration finds the eigenvalue with largest magnitude, we can use it to find $m$ and $M$ only when $\bP$ is PSD and $\bN$ is NSD. We solve for $h_{U}$ and $h_{L}$ for sigmoid, tanh, softplus activation functions in Appendix \ref{grad_hess_bounds_appendix} and show that this is in fact the case for them. 
	
	We note that this result does not hold for ReLU networks since the ReLU function is not differentiable everywhere. However, in Appendix $\ref{relu_quad_bound}$, we devise a method to compute the certificate for a two layer ReLU network by finding a quadratic function that is a provable lower bound for $\bz^{(2)}_{y} - \bz^{(2)}_{t}$. We show that the resulting method significantly outperforms CROWN-Ada (see Appendix Table \ref{table:cert_compare_diff_targets}). 
	\subsection{Curvature bounds for Deep networks}\label{deep_hessian_bounds}
	Using Lemma \ref{thm:deep_hessian_closed}, we know that $\nabla_{\bx}^{2} \bz^{(L)}_{i}$ is a sum product of matrices $\bB^{(I)}$ and $\bF^{(L,I)}_{i}$. Thus, if we can find upper bounds for $\|\bB^{(I)}\|$ and $\|\bF^{(L,I)}_{i}\|_{\infty}$, we can get upper bounds for $\|\nabla_{\bx}^{2} \bz^{(L)}_{i}\|$. Using this intuition (proof is presented in Appendix \ref{proof:L_layer_p_n_theorem}), we have the following result:  
	\begin{theorem}\label{thm:L_layer_p_n_theorem}
		Given an $L$ layer neural network whose activation function satifies:
		\begin{align}
		&|\sigma^{'}(x)| \leq g,\ |\sigma^{''}(x)| \leq h \qquad \forall x \in \mathbb{R},\nonumber
		\end{align}	
		the absolute value of eigenvalues of  $\ \nabla^{2}_{\bx} \bz^{(L)}_{i}$ is globally bounded by the following quantity:
		\begin{align}
		&\left\|\nabla^{2}_{\bx} \bz^{(L)}_{i}\right\| \leq h\sum_{I=1}^{L-1} \left(r^{(I)}\right)^{2}\max_{j}\left(\bS^{(L,I)}_{i,j}\right),\quad \forall \bx \in \mathbb{R}^{D} \nonumber
		\end{align}
		where $r^{(I)}$ and $\bS^{(L,I)}$ are independent of $\bx$ and defined recursively as:
        \begin{align}
        &r^{(I)} = \begin{cases}
            \left\|\bWW^{(1)}\right\|, & I=1\\
            g\left\|\bWW^{(I)}\right\|r^{(I-1)}, & I \geq 2\\
        \end{cases}\label{eq:r_eff}\\
        &\bS^{(L,I)} = \begin{cases}
            \left|\bWW^{(L)}\right|,& I=L-1\\
            g\left|\bWW^{(L)}\right|\bS^{(L-1,I)},& I\leq L-2
        \end{cases}\label{eq:S_eff}
        \end{align}
	\end{theorem}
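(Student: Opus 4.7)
The plan is to start from the closed-form Hessian given by Lemma \ref{thm:deep_hessian_closed}, apply the triangle inequality for the operator norm to the sum over layers, and then bound each summand by a product of three quantities: (i) a factor of $\|\bB^{(I)}\|^2$ coming from the sandwich $(\bB^{(I)})^T (\cdot) \bB^{(I)}$, (ii) a factor of $h$ from the uniform bound on $\sigma''$, and (iii) the maximum absolute entry of the row $\bF^{(L,I)}_i$. The quantities $r^{(I)}$ and $\bS^{(L,I)}$ will come out exactly as the recursive upper bounds on $\|\bB^{(I)}\|$ and on the absolute values of the entries of $\bF^{(L,I)}$, respectively.

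Concretely, first I would write
\begin{align*}
\left\|\nabla^2_\bx \bz^{(L)}_i\right\|
\;\le\; \sum_{I=1}^{L-1} \left\|(\bB^{(I)})^T \operatorname{diag}\!\bigl(\bF^{(L,I)}_i \odot \sigma''(\bz^{(I)})\bigr)\bB^{(I)}\right\|.
\end{align*}
For each summand, submultiplicativity together with the identity $\|\bA^T D \bA\|\le\|\bA\|^2\|D\|$ for diagonal $D$ (and the fact that $\|\operatorname{diag}(\bv)\|=\|\bv\|_\infty$) yields
\begin{align*}
\left\|(\bB^{(I)})^T \operatorname{diag}\!\bigl(\bF^{(L,I)}_i \odot \sigma''(\bz^{(I)})\bigr)\bB^{(I)}\right\|
\;\le\; \|\bB^{(I)}\|^2 \cdot \max_j \bigl|\bF^{(L,I)}_{i,j}\,\sigma''(\bz^{(I)}_j)\bigr|.
\end{align*}
Using $|\sigma''| \le h$ globally, the $\max_j$ factor is bounded by $h\cdot \max_j |\bF^{(L,I)}_{i,j}|$, and pulling $h$ out of the sum gives exactly the claimed form provided we show $\|\bB^{(I)}\|\le r^{(I)}$ and $|\bF^{(L,I)}_{i,j}|\le \bS^{(L,I)}_{i,j}$.

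Both bounds follow by induction on $I$ from the recursions stated after Lemma \ref{thm:deep_hessian_closed}. For $\bB^{(I)}$: the base case is $\|\bB^{(1)}\|=\|\bWW^{(1)}\|=r^{(1)}$; for the step, use submultiplicativity together with $\|\operatorname{diag}(\sigma'(\bz^{(I-1)}))\|=\|\sigma'(\bz^{(I-1)})\|_\infty\le g$ to get $\|\bB^{(I)}\|\le g\|\bWW^{(I)}\|\,r^{(I-1)} = r^{(I)}$. For $\bF^{(L,I)}$: proceed by reverse induction with base case $\bF^{(L,L-1)}=\bWW^{(L)}$, so $|\bF^{(L,L-1)}|=|\bWW^{(L)}|=\bS^{(L,L-1)}$; for the step, apply the elementwise triangle inequality $|\bA\bB|\le|\bA|\,|\bB|$ together with $|\operatorname{diag}(\sigma'(\bz^{(L-1)}))|\le g\,\bI$ elementwise to conclude $|\bF^{(L,I)}|\le g|\bWW^{(L)}|\,\bS^{(L-1,I)}=\bS^{(L,I)}$, and then take the $i$-th row and maximize over $j$.

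The main obstacle is bookkeeping, not any single deep inequality: one must be careful to mix an operator-norm bound (for $\bB^{(I)}$, which gets squeezed on both sides of a diagonal matrix) with an elementwise absolute-value bound (for $\bF^{(L,I)}_i$, which only enters through its $\ell_\infty$ norm on the diagonal). The reason this asymmetry is both necessary and tight is that $\bB^{(I)}$ is a genuine operator acting on $\mathbb{R}^D$, whereas only the scalar entries of $\bF^{(L,I)}_i$ matter because they are consumed by the Hadamard product inside the diagonal. Keeping those two notions of ``size'' separate throughout the induction is the one place where a careless step would inflate the bound by extra factors such as $\sqrt{N_I}$, so that is where I would focus the proof-writing care.
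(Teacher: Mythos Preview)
Your proposal is correct and follows essentially the same approach as the paper's proof: the paper also proceeds by applying the triangle inequality to the Hessian formula from Lemma~\ref{thm:deep_hessian_closed}, bounding each summand by $\|\bB^{(I)}\|^2\max_j|\bF^{(L,I)}_{i,j}\sigma''(\bz^{(I)}_j)|$, and then establishing the two auxiliary bounds $\|\bB^{(I)}\|\le r^{(I)}$ and $|\bF^{(L,I)}_{i,j}|\le\bS^{(L,I)}_{i,j}$ separately by induction (the paper frames the latter as induction on the depth $L$ rather than ``reverse induction'' on $I$, but the content is identical). The only cosmetic difference is the order of presentation---the paper proves the two inductive lemmas first and assembles the main bound last---and your remark about keeping the operator-norm and elementwise viewpoints separate accurately captures the only subtlety in the argument.
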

	The above expressions allows for an extremely efficient computation of the curvature bounds for deep networks. We consider simplification of this result for sigmoid, tanh, softplus activations in Appendix \ref{grad_hess_bounds_appendix}. 
	The curvature bounds for $\bz^{(L)}_{y}-\bz^{(L)}_{t}$ can be computed by replacing $\bWW^{(L)}_{i}$ with $\bWW^{(L)}_{y} - \bWW^{(L)}_{t}$ in Theorem \ref{thm:L_layer_p_n_theorem}. The resulting bound is independent of $\bx$, and only depends on network weights $\bWW^{(I)}$, the true label $y$, and the target $t$. We denote it with $K(\bWW, y, t)$.\ To simplify notation, when no confusion arises we denote it with $K$. In our experiments, for two layer networks, we use $M$, $m$ from Theorem \ref{thm:single_layer_p_n_theorem} since it provides tighter curvature bounds. For deeper networks ($L \geq 3$), we use $M=K,\ m=-K$. 

	\begin{figure*}
		\centering
		\begin{subfigure}{.5\textwidth}
			\centering
			\includegraphics[width=0.9\textwidth]{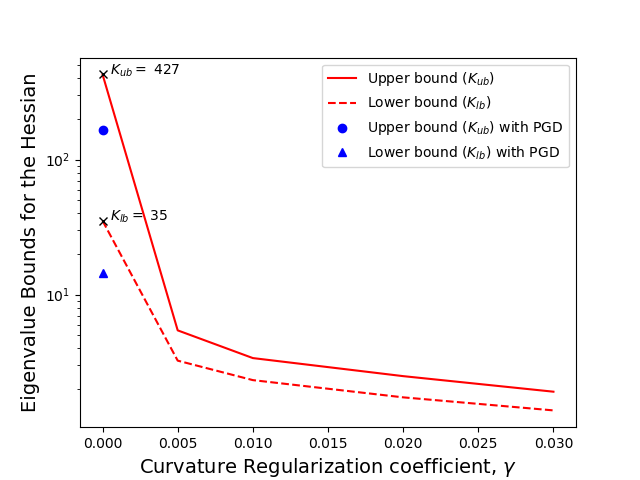}
			\caption{2 layer network}
			\label{fig:sub1}
		\end{subfigure}%
		\begin{subfigure}{.5\textwidth}
			\centering
			\includegraphics[width=.9\linewidth]{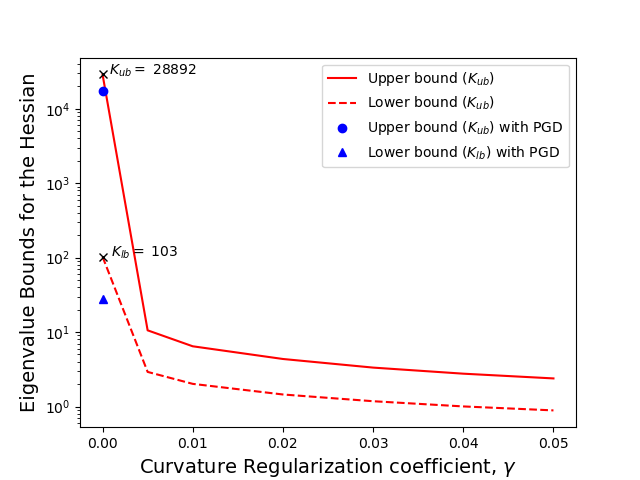}
			\caption{3 layer network}
			\label{fig:sub2}
		\end{subfigure}
		\caption{Illustration of lower ($K_{lb}$) and upper ($K_{ub}$) bounds on the curvature of 2 and 3 layer networks with sigmoid activations trained on MNIST. Without any curvature regularization ($\gamma=0$), curvature bounds increase significantly for deeper networks. Similarly with $\gamma=0$, networks adversarially trained with PGD have high curvature as well (note the log scale of the $y$-axis). However, using our curvature bound as a regularizer, the bound becomes tight and CRC gives high certificate values (Table \ref{table:cert_comparison_short}). We report the curvature bounds ($K_{lb}$ and $K_{ub}$) for networks with different depths in Appendix Table \ref{table:hyperparam_search}.}
				\label{fig:gamma_effect_2_3}
	\end{figure*}

	\section{Adversarial training with curvature regularization} \label{adversarial_training}
	Since the term $\bB^{(I)}$ in Lemma \ref{thm:deep_hessian_closed} is the Jacobian of $\bz^{(I)}$ with respect to $\bx$,\ $\|\bB^{(I)}\|$, it is equal to the lipschitz constant of the neural network constructed from the first $I$ layers of the original network. Finding tight bounds on the lipschitz constant is an active area of research \cite{DBLP:journals/corr/abs-1906-04893, Weng2018TowardsFC, scaman2018lipschitz} and the product of the operator norm of weight matrices is known to be a loose bound on the lipschitz constant for deep networks. Since we use the same product to compute the bound for $\|\bB^{(I)}\|$ in Theorem \ref{thm:L_layer_p_n_theorem}, the resulting curvature bound is likely to be loose for very deep networks. 

	In Figure \ref{fig:gamma_effect_2_3}, we observe the same trend: as the depth of the network increases, the upper bound $K_{ub}$ computed using Theorem \ref{thm:L_layer_p_n_theorem} becomes significantly larger than the lower bound $K_{lb}$ (computed by taking the maximum of the largest eigenvalue of the Hessian across all test images with label $y$ and the second largest logit $t$, then averaging across different $(y, t)$). However, by regularizing the network to have small curvature during training, the bound becomes significantly tighter. Interestingly, using curvature regularization, even with this loose curvature bound for deep nets, we achieve significantly higher robust accuracy than the current state of the art methods while enjoying significantly higher standard accuracy as well (see Tables \ref{table:empirical_provable_adversarial_1.58_summary} and \ref{table:empirical_provable_adversarial_1.58_fashion_mnist}). 
	
	To regularize the network to have small curvature values, we penalize the curvature bound $K$ during training. To compute the gradient of $K$ with respect to the network weights, note that using Theorem \ref{thm:L_layer_p_n_theorem}, we can compute $K$ using absolute value, matrix multiplications, and operator norm. Since the gradient of operator norm does not exist in standard libraries, we created a new layer where the gradient of $\|\bWW^{(I)}\|$  i.e $\nabla_{\bWW^{(I)}} \|\bWW^{(I)}\|$ is given by:
	\begin{align*}
	&\nabla_{\bWW^{(I)}} \|\bWW^{(I)}\| =  \bu^{(I)}\left(\bv^{(I)}\right)^{T}\\ 
	&\text{ where } \ \ \bWW^{(I)}\bv^{(I)} = \|\bWW^{(I)}\|\bu^{(I)}
	\end{align*}
	Note that $\|\bWW^{(I)}\|$, $\bu^{(I)}$ and $\bv^{(I)}$ can be computed using power iteration. Since the network weights do not change significantly during a single training step, we can use the singular vectors $\bu^{(I)}$ and $\bv^{(I)}$ computed in the previous training step to update $\bWW^{(I)}$ using one iteration of power method. This approach to compute the gradient of the largest singular value of a matrix has also been used in previous published work \cite{miyato2018spectral}. 
	Thus, the per-sample loss for training with curvature regularization is given by:
	\begin{align}
	\ell\left(\bz^{(L)}(\bx^{(0)}),\ y\right) + \gamma K(\bWW,\ y,\ t) \label{eq:loss_curv_reg}
	\end{align}
	where $\ell$ denotes the cross entropy loss, $y$ is the true label of the input $\bx^{(0)}$, $t$ is the attack target and $\gamma$ is the regularization coefficient for penalizing large curvature values. Similar to the adversarial training, in CRT, we use $\bx^{(attack)}$ instead of $\bx^{(0)}$ in equation \eqref{eq:loss_curv_reg}.
	
	\section{Experiments} \label{experiments}
	The {\it certified robust accuracy} means the fraction of correctly classified test samples whose robustness certificates (computed using CRC) are greater than a pre-specified radius $\rho$. Unless otherwise specified, we use the class with the second largest logit as the attack target (i.e. the class $t$). The notation ($L \times [1024]$, activation) denotes a neural network with $L$ layers with the specified activation, ($\gamma=c$) denotes standard training with $\gamma$ set to $c$, while (CRT, $c$) denotes CRT training with $\gamma=c$. Certificates are computed over 150 randomly chosen correctly classified images. We use a single NVIDIA GeForce RTX 2080 Ti GPU.

	\subsection{Fraction of inputs with tightest robustness certificate}
    Using the verifiable condition of Theorems \ref{thm:certificate} and \ref{thm:attack}, our approach is able to (1) find points that are provably the worst case adversarial perturbations (in the $l_{2}$ norm) in the attack problem and (2) find points on the decision boundary that are provably closest to the input in the $l_{2}$ norm in the certification problem. In particular, in Table \ref{table:primal_dual_eq_fraction_summary}, we observe that for curvature regularized networks, our approach finds provably worst-case adversarial perturbations for {\it all} of the inputs with a small drop in the accuracy. Moreover, for 2,3,and 4 layer networks, our method finds provably closest adversarial examples for 44.17\%, 22.59\% and 19.53\% of inputs in the MNIST test set, respectively.
    
    \begin{table}[h!]
		\centering
		\renewcommand{\arraystretch}{1.15} 
		\caption{Certificate success rate denotes the fraction of points satisfying $\bz_{y}-\bz_{t}=0$, Attack success rate denotes the fraction of points ($\bx^{(0)})$ satisfying $\|\bx^{(attack)} - \bx^{(0)}\|_{2} = \rho=0.5$ implying \textit{primal}=\textit{dual} in Theorems \ref{thm:certificate} and \ref{thm:attack} respectively. We use the MNIST dataset.}
		\begin{tabular}{ | l | l |  l | l | l | } 
			\hline
			\multicolumn{1}{|c|}{\multirow{2}{*}{Network}} & \multicolumn{1}{c|}{\multirow{2}{*}{$\gamma$}} & \multicolumn{1}{c|}{\multirow{2}{*}{Accuracy}} & \multirow{2}{3em}{Attack success} & \multirow{2}{4em}{Certificate success} \\ 
			& & & & \\
			\hline
			\multirow{2}{4em}{$2\times[1024]$, sigmoid} & 0. & 98.77\%  & 5.05\% & 2.24\% \\ 
			\cline{2-5}
			& 0.03 & 98.30\% & 100\% & 44.17\%\\ 
			\hline
			\multirow{2}{4em}{$3\times[1024]$, sigmoid} & 0. & 98.52\% & 0.\% & 0.12\% \\ 
			\cline{2-5}
			& 0.05 & 97.60\% & 100\% & 22.59\%\\ 
			\hline
			\multirow{2}{4em}{$4\times[1024]$, sigmoid} & 0. & 98.22\% & 0.\% & 0.01\%\\ 
			\cline{2-5}
			& 0.07 & 95.24\% & 100\% & 19.53\% \\ 
			\hline
		\end{tabular}
		\label{table:primal_dual_eq_fraction_summary}
	\end{table}	

	Note that the technique presented in this work is not applicable to ReLU networks due to the absence of curvature information. However, since verifying the robustness property in an $l_{2}$ ball around the input is known to be an NP-complete problem for ReLU networks \cite{Katz2017ReluplexAE}, it is computationally infeasible to even verify that a given adversarial perturbation is the worst case perturbation in polynomial time unless P=NP. We however show that it is possible to find (and not just verify) the exact worst case perturbation (and robustness certificate) for neural networks with smooth activation functions. We believe that these theoretical and empirical results provide evidence that bounding the curvature of the network and using smooth activation functions can be critical to achieve high robustness guarantees.
	
    \subsection{Comparison with existing provable defenses} 

	We compare against certified defense techniques proposed in \citet{Wong2018ScalingPA} and \citet{ZhangCROWNIBP} in Table \ref{table:empirical_provable_adversarial_1.58_summary} for the MNIST dataset \cite{lecunMNIST} and Table \ref{table:empirical_provable_adversarial_1.58_fashion_mnist} for the Fashion-MNIST dataset \cite{fashionMNIST} with $l_{2}$ radius $\rho=1.58$. Even though our proposed CRT requires fully differentiable activation functions such as softplus, sigmoid, tanh etc, we include comparison with ReLU networks because the methods proposed in \citet{Wong2018ScalingPA, ZhangCROWNIBP} use ReLU. Since CROWN-IBP can be trained using the softplus activation function, we include it in our comparison. Similar comparison with $l_{2}$ radius $\rho=0.5$ is given in Appendix Table \ref{table:empirical_provable_adversarial_0.5} (MNIST dataset) and Table \ref{table:empirical_provable_adversarial_0.5_fashion_mnist} (Fashion-MNIST dataset). We observe that CRT (certified with CRC) gives significantly higher certified robust accuracy as well as standard accuracy compared to either of the methods on both MNIST and Fashion-MNIST datasets for both different values of $\rho$. Since shallow fully connected networks are known to perform poorly on the CIFAR-10 dataset, we do not include those results in our comparison. 
	\begin{table}[h!]
		\centering
		\renewcommand{\arraystretch}{1.15}
		\caption{Comparison with interval-bound propagation based adversarial training methods: COAP i.e Convex Outer Adversarial Polytope \cite{Wong2018ScalingPA}, CROWN-IBP \cite{ZhangCROWNIBP}) and Curvature-based Robust Training (Ours) with attack radius $\rho=1.58$ on MNIST. For CROWN-IBP, we vary the final\_beta hyperparameter between 0.8 and 3, and use the model with best certified accuracy. Results with $\rho=0.5$ are in Appendix Table \ref{table:empirical_provable_adversarial_0.5}.}
		\begin{tabular}{ | l | l | l | l | } 
			\hline
			\multicolumn{1}{|c|}{\multirow{3}{4.5em}{Network}} & \multicolumn{1}{c|}{\multirow{3}{*}{Training}} & \multicolumn{1}{c|}{\multirow{3}{3.5em}{Standard Accuracy}} & 
			\multicolumn{1}{c|}{\multirow{3}{3.5em}{Certified Robust Accuracy}} \\ 
			& & & \\ 
			& & & \\ 
			\hline
			\multirow{2}{4em}{$2\times[1024]$, softplus} & \multirow{1}{4.5em}{\textbf{CRT, 0.01}} & \multirow{1}{4em}{\textbf{98.68\%}} & \multirow{1}{4em}{\textbf{69.79\%}} \\ 
			\cline{2-4}
			& CROWN-IBP & 88.48\% & 42.36\% \\ 
			\hline 
			\multirow{2}{4em}{$2\times[1024]$, relu} &  COAP & 89.33\% & 44.29\% \\ 
			\cline{2-4}
			& CROWN-IBP & 89.49\% & 44.96\% \\ 
			\hline 
			\hline
			\multirow{2}{4em}{$3\times[1024]$, softplus} &  \multirow{1}{*}{\textbf{CRT, 0.05}} &  \multirow{1}{*}{\textbf{97.43\%}} &  \multirow{1}{*}{\textbf{57.78\%}} \\ 
			\cline{2-4}
			& CROWN-IBP & 86.58\% & 42.14\% \\ 
			\hline 
			\multirow{2}{4em}{$3\times[1024]$, relu} &  COAP  & 89.12\% & 44.21\% \\ 
			\cline{2-4}
			& CROWN-IBP & 87.77\% & 44.74\% \\ 
			\hline 
			\hline
			\multirow{2}{4em}{$4\times[1024]$, softplus} & \multirow{1}{*}{\textbf{CRT, 0.07}} & \multirow{1}{*}{\textbf{95.60\%}} & \multirow{1}{*}{\textbf{53.19\%}} \\ 
            \cline{2-4}
			& CROWN-IBP & 82.74\% & 41.34\% \\ 
			\hline 
			\multirow{2}{4em}{$4\times[1024]$, relu} & COAP & 90.17\% & 44.66\% \\ 
			\cline{2-4}
			& CROWN-IBP & 84.4\% & 43.83\% \\ 
			\hline 
		\end{tabular}
		\label{table:empirical_provable_adversarial_1.58_summary}
	\end{table}

    \begin{table}[h!]
		\centering
		\renewcommand{\arraystretch}{1.15}
		\caption{Comparison between COAP\ \cite{Wong2018ScalingPA}, CROWN-IBP \cite{ZhangCROWNIBP}) and Curvature-based Robust Training (Ours) with attack radius $\rho=1.58$ on Fashion-MNIST. Results with $\rho=0.5$ for are in Appendix Table \ref{table:empirical_provable_adversarial_0.5_fashion_mnist}. }
		\begin{tabular}{ | l | l |l | l | } 
			\hline
			\multicolumn{1}{|c|}{\multirow{3}{4.5em}{Network}} & \multicolumn{1}{c|}{\multirow{3}{*}{Training}} &		\multicolumn{1}{c|}{\multirow{3}{3.5em}{Standard Accuracy}} & 
			\multicolumn{1}{c|}{\multirow{3}{3.5em}{Certified Robust Accuracy}} \\ 
			& & & \\ 
			& & & \\ 
			\hline
			\multirow{2}{4em}{$2\times[1024]$, softplus} & 
			\multirow{1}{*}{\textbf{CRT, 0.01}} &
			\multirow{1}{4em}{\textbf{80.31\%}} & \multirow{1}{4em}{\textbf{54.39\%}} \\ 
			\cline{2-4}
			& CROWN-IBP & 69.23\% & 47.19\% \\ 
			\hline
			\multirow{2}{4em}{$2\times[1024]$, relu} & COAP & 74.1\% & 46.3\% \\ 
			\cline{2-4}
			& CROWN-IBP & 70.73\% & 48.61\% \\ 
			\hline 
			\hline 
			\multirow{2}{4em}{$3\times[1024]$, softplus} &
			\multirow{1}{*}{\textbf{CRT, 0.05}} &
			\multirow{1}{4em}{\textbf{78.39\%}} & \multirow{1}{4em}{\textbf{53.4\%}} \\ 
			\cline{2-4}
			& CROWN-IBP & 68.72\% & 46.52\% \\
			\hline
			\multirow{2}{4em}{$3\times[1024]$, relu} &  COAP & 73.9\% & 46.3\% \\ 
			\cline{2-4}
			& CROWN-IBP & 70.79\% & 48.69\% \\ 
			\hline 
			\hline 
			\multirow{2}{4em}{$4\times[1024]$, softplus} & 
			\multirow{1}{*}{\textbf{CRT, 0.07}} &
			\multirow{1}{4em}{\textbf{75.61\%}} & \multirow{1}{4.5em}{\textbf{49.6\%}} \\ 
			\cline{2-4}
			& CROWN-IBP & 68.31\% & 46.21\%\\
			\hline 
			\multirow{2}{4em}{$4\times[1024]$, relu} &  COAP & 73.6\% & 45.1\% \\ 
			\cline{2-4}
			& CROWN-IBP & 70.21\% & 48.08\% \\ 
			\hline 
		\end{tabular}
		\vskip -0.4cm
		\label{table:empirical_provable_adversarial_1.58_fashion_mnist}
	\end{table}

	In Appendix Table \ref{table:smoothing_adversarial}, we compare CRT with Randomized Smoothing \cite{Cohen2019CertifiedAR}. For 2 \& 3 layer networks, we achieve higher robust accuracy. However, we note that since our certificate is deterministic while the smoothing-based certificate is probabilistic (although with high probability), the results are not directly comparable. 	As a separate result, we also prove that randomized smoothing bounds the curvature of the network (Theorem \ref{thm:rand_smooth} in Appendix \ref{proof:rand_smooth}). We also include comparison with empirical defense methods namely PGD and TRADES in Appendix Table \ref{table:empirical_adversarial_appendix}. 
    \begin{table}[h]
    \caption{Effect of curvature regularization and CRT on certified robust accuracy and robustness certificate}
    \begin{subtable}[h]{0.45\textwidth}
        \centering
		\renewcommand{\arraystretch}{1.15}
		\begin{tabular}{ | l | l | l | l | } 
			\hline
			\multicolumn{1}{|c|}{\multirow{3}{4em}{Network}} & \multicolumn{1}{c|}{\multirow{3}{4em}{Training}} & \multicolumn{1}{c|}{\multirow{3}{4em}{Standard Accuracy}} & \multicolumn{1}{c|}{\multirow{3}{4em}{Certified Robust Accuracy}} \\ 
			& & & \multicolumn{1}{c|}{} \\ 
			& & & \\ 
			\hline			
			\multirow{3}{4em}{$2\times[1024]$, sigmoid} &  \multirow{1}{*}{standard} & 98.37\% & 54.17\%\\ 
			\cline{2-4}
			&  $\gamma=0.01$ & 98.08\% & 83.53\%\\ 
			\cline{2-4}
			&  \textbf{CRT, 0.01} & 98.57\% & \textbf{95.59\%}\\ 
			\hline
			\multirow{3}{4em}{$3\times[1024]$, sigmoid} & standard & 98.37\% & 0.00\%\\ 
			\cline{2-4} 
			&  $\gamma=0.01$ & 97.71\% & 88.33\%\\ 
			\cline{2-4}
			&  \textbf{CRT, 0.01} & 97.23\% & \textbf{94.99\%}\\ 
			\hline 
			\multirow{3}{4em}{$4\times[1024]$, sigmoid} & standard & 98.39\%  & 0.00\%\\ 
			\cline{2-4} 
			&  $\gamma=0.01$ & 97.41\% & 89.61\%\\ 
			\cline{2-4}
			&  \textbf{CRT, 0.01} & 97.83\% & \textbf{93.41\%} \\ 
			\hline 
		\end{tabular}
        \caption{Effect of $\gamma$ on certified robust accuracy}
        \label{tab:week1}
    \end{subtable}
    \hfill
    \begin{subtable}[h]{0.45\textwidth}
        \centering
		\renewcommand{\arraystretch}{1.15}
		\begin{tabular}{ | l | l | l | l | } 
			\hline
			\multicolumn{1}{|c|}{\multirow{2}{4em}{Network}} & \multicolumn{1}{c|}{\multirow{2}{4em}{Training}} & 
			\multicolumn{2}{c|}{\multirow{1}{8em}{Certificate (mean)}}\\
			\cline{3-4}
			& & \multirow{1}{4em}{CROWN} & \multirow{1}{4em}{CRC} \\
			\hline			
			\multirow{3}{4em}{$2\times[1024]$, sigmoid} &  \multirow{1}{*}{standard} & 0.28395 & \textbf{0.48500}\\
			\cline{2-4}
			&  $\gamma=0.01$ & 0.32548 & \textbf{0.84719}\\ 
			\cline{2-4}
			&  \textbf{CRT, 0.01} & 0.43061 & \textbf{1.54673}\\ 
			\hline
			\multirow{3}{4em}{$3\times[1024]$, sigmoid} & standard & \textbf{0.24644} & 0.06874\\ 
			\cline{2-4} 
			&  $\gamma=0.01$ & 0.39799 & \textbf{1.07842}\\ 
			\cline{2-4}
			&  \textbf{CRT, 0.01} & 0.39799 & \textbf{1.07842}\\ 
			\hline 
			\multirow{3}{4em}{$4\times[1024]$, sigmoid} & standard & \textbf{0.19501} & 0.00454\\ 
			\cline{2-4} 
			&  $\gamma=0.01$ & 0.40620 & \textbf{1.05323}\\ 
			\cline{2-4}
			&  \textbf{CRT, 0.01} & 0.40327 & \textbf{1.06208} \\ 
			\hline 
		\end{tabular}
        \caption{Comparison between CROWN-general\ \cite{NIPS2018_7742} and CRC.}
    \end{subtable}
    \vskip -0.4cm
    \label{table:cert_comparison_short}
\end{table}
	\subsection{Comparison with existing certificates}
	In Table \ref{table:cert_comparison_short}, we compare CRC with CROWN-general \cite{NIPS2018_7742}. For 2-layer networks, CRC outperforms CROWN significantly. For deeper networks, CRC works better than CROWN when the network is trained with curvature regularization. However, with small $\gamma=0.01$, we see a significant increase in CRC but a very small drop in the test accuracy (without any adversarial training). We can see that with $\gamma=0.01$, non-trivial certified accuracies of $83.53\%,\ 88.33\%,\ 89.61\%$ can be achieved on $2,3,4$ layer sigmoid networks, respectively, without any adversarial training. Adversarial training using CRT further boosts certified accuracy to $95.59\%,\ 94.99\%$ and $93.41\%$, respectively. We show some results on CIFAR-10 dataset in Appendix Table \ref{table:cifar_results}. We again observe improvements in the robustness certificate and certified robust accuracy using CRC and CRT. 

	\subsection{Results using local curvature bounds}\label{subsec:main_local_bounds}
	From Theorems \ref{thm:certificate} and \ref{thm:attack}, we can observe that if the curvature is {\it locally} bounded within a convex region around the input (we call it the "safe" region), then the corresponding dual problems ($d^{*}_{cert}$, $d^{*}_{attack}$) are again convex optimization problems provided the optimization trajectory does not escape the safe region. 
	
	Theorem \ref{thm:single_layer_p_n_theorem} can be directly extended to compute the local curvature bound using bounds on the second derivatives, i.e. $\sigma^{''}(\bz^{(1)})$ in the local region. In Table \ref{table:local_global}, we show significant improvements for the CRC certificate for two-layer sigmoid networks on the MNIST dataset for $\gamma=0$. However, with the curvature regularization, the difference is insignificant. We also observe that the certified accuracy for (CRT, 0.0) improves from 95.04\% to 95.31\% and for standard improves from 54.17\% to 58.06\%. The certified accuracy remains the same for other cases. Implementation details are in the Appendix Section \ref{appendix_local_global}.

	Computing local curvature bounds for deeper networks, however, is more challenging due to the presence of terms involving multiplication of first and second derivatives. A straightforward extension of Theorem \ref{thm:L_layer_p_n_theorem} \ref{thm:L_layer_p_n_theorem}, wherein we compute the upper bound on $\sigma^{'}$ and $\sigma{''}$ in a local region around the input across all neurons in all layers does not yield significant improvements over the global method, therefore we do not include those results in our comparison.

	\begin{table}[h!]
	    \vskip -0.4cm
		\centering
		\renewcommand{\arraystretch}{1.20}
		\caption{Comparison between Certified Robust accuracy and CRC for 2 layer sigmoid and tanh networks using global and local curvature bounds on MNIST dataset with $\rho=0.5$}
		\begin{tabular}{ | l | l | l | l | } 
			\hline
			\multicolumn{1}{|c|}{\multirow{2}{*}{Network}} & \multicolumn{1}{c|}{\multirow{2}{*}{Training}} & \multicolumn{1}{c|}{\multirow{2}{5em}{CRC (Global)}} & 
			\multicolumn{1}{c|}{\multirow{2}{5em}{CRC (Local)}}  \\ 
			& & & \\ 
			\hline
			\multirow{4}{4em}{$2\times[1024]$, sigmoid} & standard & 0.5013 & \textbf{0.5847} \\
            \cline{2-4}
             & CRT, $0.0$ & 1.0011 & \textbf{1.1741} \\
			\cline{2-4} 
			& CRT, $0.01$ & 1.5705 & \textbf{1.6047} \\
			\cline{2-4} 
			& CRT, $0.02$ & 1.6720 & \textbf{1.6831} \\
			\hline 
		\end{tabular}
		\label{table:local_global}
    \vskip -0.4cm
	\end{table}

	\section{Conclusion}\label{conclusion}
    In this paper, we develop computationally-efficient convex relaxations for robustness certification and adversarial attack problems given the classifier has a bounded curvature. We also show that this convex relaxation is tight under some general verifiable conditions. To be able to use proposed certification and attack convex optimizations, we derive global curvature bounds for deep networks with differentiable activation functions. This result is a consequence of a closed-form expression that we derive for the Hessian of a deep network. Adversarial training using our attack method coupled with curvature regularization results in a significantly higher certified robust accuracy than the existing provable defense methods. Our proposed curvature-based robustness certificate significantly outperforms the CROWN certificate when trained with our regularizer. Scaling up our proposed curvature-based robustness certification and training methods as well as further tightening the derived curvature bounds are among interesting directions for the future work. 
    
	{\bf \LARGE Appendix}
	\appendix
	\section{Related work}\label{sec:appendix_related_work}
	Many defenses have been proposed to make neural networks robust against adversarial examples. These methods can be classified into empirical defenses which empirically seem to be robust against known adversarial attacks, and certified defenses, which are provably robust against such attacks. 
	
	\textbf{Empirical defenses} The best known empirical defense is adversarial training \cite{Kurakin2016AdversarialML, madry2018towards, Zhang2019TheoreticallyPT}. In this method, a neural network is trained to minimize the worst-case loss over a region around the input. Although such defenses seem to work on existing attacks, there is no guarantee that a more powerful attack would not break them. In fact, most such defenses proposed in the literature were later broken by stronger attacks \cite{Athalye2018ObfuscatedGG, Carlini:2017:AEE:3128572.3140444, Uesato2018AdversarialRA, Athalye2018OnTR}. To end this arms race between defenses and attacks, a number of works have tried to focus on certified defenses that have formal robustness guarantees.
	
	\textbf{Certified defenses} A classifier is said to be certifiably robust if one can easily obtain a guarantee that a classifier's prediction remains constant within some region around the input. Such defenses typically rely on certification methods which are either exact or conservative. Exact methods report whether or not there exists a adversarial perturbation inside some $l_{p}$ norm ball. In contrast, conservative methods either certify that no adversarial perturbation exists or decline to make a certification; they may decline even when no such perturbation exists.
	Exact methods are usually based on Satisfiability Modulo Theories \cite{Huang2016SafetyVO, Katz2017ReluplexAE, Ehlers2017FormalVO, Carlini2017ProvablyMA} and Mixed Integer linear programming \cite{Cheng2017MaximumRO, Lomuscio2017AnAT, Dutta2018OutputRA, Fischetti2018DeepNN, Bunel2017AUV}. Unfortunately, they are computationally inefficient and
	difficult to scale up to even moderately sized neural networks. In contrast, conservative methods are more scalable and efficient which makes them useful for building certified defenses \cite{Wang2018MixTrainST, Wong2017ProvableDA, Wang2018EfficientFS, Wong2018ScalingPA, Raghunathan2018SemidefiniteRF, Raghunathan2018CertifiedDA, Dvijotham2018TrainingVL, Dvijotham2018ADA, Croce2018ProvableRO, Singh2018FastAE, Gowal2018OnTE, Gehr2018AI2SA, Mirman2018DifferentiableAI, Zhang2018EfficientNN, Weng2018TowardsFC}. However, even these methods have not been shown to scale to practical networks that are large and expressive enough to perform well on ImageNet, for example. To scale to such large networks, randomized
	smoothing has been proposed as a \textit{probabilistically} certified defense.
	
	\textbf{Randomized smoothing} Randomized smoothing was previously proposed by several works \cite{Liu2017TowardsRN, Cao2017MitigatingEA} as a empirical defense without any formal guarantees. \cite{Lcuyer2018CertifiedRT} first proved
	robustness guarantees for randomized smoothing classifier using inequalities from differential privacy. \cite{Li2018CertifiedAR} improved upon the same using tools from information theory. Recently, \cite{Cohen2019CertifiedAR} provided a even tighter robustness guarantee for randomized smoothing. \cite{Salman2019ProvablyRD} proposed a method of adversarial training for the randomized smoothing classifier giving state of the art results in the $l_{2}$ norm metric.

	\section{The Attack problem} \label{attack_problem}
	
	For a given input $\bx^{(0)}$ with true label $y$ and attack target $t$, consider the attack problem. We are given that the eigenvalues of the Hessian $\nabla^{2}_{\bx}(\bz^{(L)}_{y} - \bz^{(L)}_{t}) $ are bounded below i.e: 
	$$m\bI \preccurlyeq \nabla^{2}_{\bx} \left(\bz^{(L)}_{y} - \bz^{(L)}_{t}\right) \qquad \forall \bx \in \mathbb{R}^{D}$$
	Here $m<0$ (since $\bz^{(L)}_{y} - \bz^{(L)}_{t}$ is not convex in general). 
	
	The goal here is to find an adversarial example inside a $l_{2}$ ball of radius $\rho$ such that $(\bz^{(L)}_{y}-\bz^{(L)}_{t})(\bx)$ is minimized. That is, we want to solve the following optimization:
	\begin{align}
	&p^{*}_{attack} = \min_{\left\|\bx - \bx^{(0)}\right\| \leq \rho} \bigg[\left(\bz^{(L)}_{y} - \bz^{(L)}_{t}\right)(\bx)\bigg] \nonumber\\
	& = \min_{\bx} \max_{\eta \geq 0} \bigg[\left(\bz^{(L)}_{y} - \bz^{(L)}_{t}\right)(\bx) + \frac{\eta}{2}\left(\left\|\bx - \bx^{(0)}\right\|^{2} - \rho^{2}\right)\bigg] \label{eq:pstar_attack_def}
	\end{align}
	This optimization can be hard in general. Using the max-min inequality (primal $\geq$ dual), we have:
	\begin{align}
	p^{*}_{attack} &\geq \max_{\eta \geq 0} d_{attack}(\eta)\nonumber\\ 
	d_{attack}(\eta) &= \min_{\bx} \bigg[\left(\bz^{(L)}_{y} - \bz^{(L)}_{t}\right)(\bx) \nonumber\\
	&+ \frac{\eta}{2}\left(\left\|\bx - \bx^{(0)}\right\|^{2} - \rho^{2}\right)\bigg] \label{eq:d_attack_def}
	\end{align}
	We know that for every $\eta \geq 0,\ d_{attack}(\eta)$ gives a lower bound to the primal solution $p^{*}_{attack}$. But solving $d_{attack}(\eta)$ for any $\eta \geq 0$ can be hard unless the objective is convex. We prove that if the eigenvalues of the Hessian are bounded below i.e: 
	\begin{align}
	m\bI \preccurlyeq \nabla^{2}_{\bx} \left(\bz^{(L)}_{y} - \bz^{(L)}_{t}\right) \qquad \forall \bx \in \mathbb{R}^{D}\nonumber
	\end{align}
	In general $m < 0$, since $(\bz^{(L)}_{y} - \bz^{(L)}_{t})$ is non-convex. \\
	$d_{attack}(\eta)$ is a convex optimization problem for $-m \leq \eta$. Equivalently the objective function, i.e the function inside the $\min_{\bx}$:
	\begin{align}
	&\bigg[\left(\bz^{(L)}_{y} - \bz^{(L)}_{t}\right)(\bx) + \frac{\eta}{2}\left(\left\|\bx - \bx^{(0)}\right\|^{2} - \rho^{2}\right)\bigg] \nonumber
	\end{align}
	is a convex function in $\bx$ for $ -m \leq \eta$.\\
	The Hessian of the above function is given by:
	\begin{align}
	&\nabla^{2}_{\bx} \left(\bz^{(L)}_{y} - \bz^{(L)}_{t}\right) + \eta \bI \nonumber
	\end{align}
	
	Since we know that eigenvalues of $\nabla^{2}_{\bx} (\bz^{(L)}_{y} - \bz^{(L)}_{t}) \succcurlyeq  m\bI$, we know that eigenvalues of the above Hessian are 
	$\geq \eta + m$. For $\eta \geq -m$, the eigenvalues are positive implying that the objective function is convex.
	
	Since $d_{attack}(\eta)$ gives a lower bound to $p^{*}_{attack}$ for every $\eta \geq 0$, we get the following result: 
	\begin{align}
	p^{*}_{attack} \geq d^{*}_{attack} \text{ where } d^{*} _{attack} = \max_{-m \leq \eta}d_{attack}(\eta)\label{eq:dstar_attack_def}
	\end{align}
	Note that if $\bx^{(attack)}$ is the solution to $d^{*}_{attack}$ such that: $\left\|\bx^{(attack)} - \bx^{(0)}\right\|=\rho$, by the definition of $d^{*}_{attack}$:
	$$ d^{*}_{attack} = \left(\bz^{(L)}_{y} - \bz^{(L)}_{t}\right)(\bx^{(attack)}) $$
	But then by the definition of $p^{*}_{attack},\ p^{*}_{attack} \leq d^{*}_{attack}$, implying that the duality gap is zero, i.e $p^{*}_{attack}=d^{*}_{attack}$. This procedure leads to the Theorem \ref{thm:attack}.
	
	\section{Implementation Details} \label{implementation}
	\subsection{Computing the derivative of largest singular value} \label{subsec:singular_value_compute}
	Our objective is to compute derivative of the largest singular value, i.e $\|\bWW^{(I)}\|$ with respect to $\bWW^{(I)}$. Let $\bu^{(I)}, \bv^{(I)}$ be the singular vectors such that $\bWW^{(I)}\bv^{(I)} = \|\bWW^{(I)}\|\bu^{(I)}$. Then the derivative is given by:
	\begin{align*}
	\nabla_{\bWW^{(I)}} \|\bWW^{(I)}\| =  \bu^{(I)}\left(\bv^{(I)}\right)^{T}
	\end{align*}
	$\bv^{(I)},\ \|\bWW^{(I)}\|^{2}$ can be computed by running power iteration on $\left(\bWW^{(I)}\right)^{T}\bWW^{(I)}$. $\bu^{(I)}$ can be computed using the identity:
	$$ \bu^{(I)} = \frac{\bWW^{(I)}\bv^{(I)}}{\gamma^{(I)}} $$ 
	We use 25 iterations of the power method to compute the above quantities.
	\subsection{Update equation for the certificate problem}\label{subsec:certificate_update_derive}
	Our goal is to minimize $\left\|\bx -\bx^{(0)}\right\|$ such that $\left(\bz^{(L)}_{y} - \bz^{(L)}_{t}\right)(\bx) = 0$. We know that the Hessian satisfies the following LMIs:
	\begin{align}
	m\bI \preccurlyeq \nabla^{2}_{\bx} \left(\bz^{(L)}_{y} - \bz^{(L)}_{t}\right) \preccurlyeq M\bI \label{eq:m_M_bound_cert}
	\end{align}
	$K$ is given by Theorem \ref{thm:L_layer_p_n_theorem} for neural network of any depth ($L\ge 2$). For $2$ layer networks, $M$ and $m$ are given by Theorem \ref{thm:single_layer_p_n_theorem}. But for deeper networks ($L \geq 3$), $M=K$, $m=-K$. In either case, $K \geq \max(|m|, |M|)$. Thus, we also have:
	\begin{align}
	-K\bI \preccurlyeq \nabla^{2}_{\bx} \left(\bz^{(L)}_{y} - \bz^{(L)}_{t}\right) \preccurlyeq K\bI \label{eq:K_bound_cert}
	\end{align}
	We will solve the dual ($d^{*}_{cert}$) of the attack problem ($p^{*}_{cert}$). 
	
	The primal problem $(p^{*}_{cert})$ is given by:
	\begin{align*}
	&p^{*}_{cert} = \min_{\bz^{(L)}_{y}(\bx) = \bz^{(L)}_{t}(\bx)} \bigg[\frac{1}{2}\left\|\bx - \bx^{(0)}\right\|^{2}\bigg] \\
	&p^{*}_{cert} = \min_{\bx} \max_{\eta} \bigg[\frac{1}{2}\left\|\bx - \bx^{(0)}\right\|^{2} + \eta \left(\bz^{(L)}_{y} - \bz^{(L)}_{t}\right)(\bx)\bigg]
	\end{align*}
	Using inequality \eqref{eq:m_M_bound_cert} and Theorem \ref{thm:certificate} part (a), we know that the dual of the above problem is convex when $-1/M\leq \eta \leq -1/m$.
	
	The corresponding dual problem ($d^{*}_{cert}$) is given by:
	\begin{align*}
	& d^{*}_{cert} = \max_{-1/M \le \eta \le -1/m} d_{cert}(\eta)\\ & d_{cert}(\eta) = \min_{\bx}  \bigg[\frac{1}{2}\left\|\bx - \bx^{(0)}\right\|^{2} + \eta \left(\bz^{(L)}_{y} - \bz^{(L)}_{t}\right)\left(\bx\right)\bigg]
	\end{align*} 
	
	For a given $\eta$, we have the following optimization:
	$$d_{cert}(\eta) = \min_{\bx} \bigg[\frac{1}{2}\|\bx - \bx^{(0)}\|^{2} + \eta \left(\bz^{(L)}_{y} - \bz^{(L)}_{t}\right)(\bx)\bigg]$$
	We will use majorization-minimization to solve this optimization. 
	
	At a point $\bx^{(k)}$, we aim to solve for the point $\bx^{(k+1)}$ that decreases the objective function. Using the Taylor's theorem at point $\bx^{(k)}$, we have:
	\begin{align}
	&\left(\bz^{(L)}_{y} - \bz^{(L)}_{t}\right)\left(\bx\right) \nonumber\\
	&=  \left(\bz^{(L)}_{y} - \bz^{(L)}_{t}\right)\left(\bx^{(k)}\right) + \left(\bg^{(k)}\right)^{T}\left(\bx - \bx^{(k)}\right) \nonumber 
	\\&+ \frac{1}{2}\left(\bx - \bx^{(k)}\right)^{T}\bH^{(\xi)}\left(\bx - \bx^{(k)}\right) \nonumber
	\end{align}
	where $\bg^{(k)}$ is the gradient of $(\bz^{(L)}_{y} - \bz^{(L)}_{t})$ at $\bx^{(k)}$ and $\bH^{(\xi)}$ is the Hessian at a point $\xi$ on the line connecting $\bx$ and $\bx^{(k)}$.
	
	Multiplying both sides by $\eta$, we get the following equation:
	\begin{align}
	&\eta\left(\bz^{(L)}_{y} - \bz^{(L)}_{t}\right)\left(\bx\right) \nonumber\\
	&=  \eta\left(\bz^{(L)}_{y} - \bz^{(L)}_{t}\right)\left(\bx^{(k)}\right) + \eta\left(\bg^{(k)}\right)^{T}\left(\bx - \bx^{(k)}\right) \nonumber \\
	&+ \frac{\eta}{2}\left(\bx - \bx^{(k)}\right)^{T}\bH^{(\xi)}\left(\bx - \bx^{(k)}\right) \label{eq:taylor_thm_cert}
	\end{align}
	Using inequality \eqref{eq:K_bound_cert}, we know that $-K\bI \preccurlyeq \bH^{(\xi)} \preccurlyeq K\bI\quad \forall \xi \in \mathbb{R}^{D}$,
	\begin{align}
	&\frac{\eta}{2}\left(\bx - \bx^{(k)}\right)^{T}\bH^{(\xi)}\left(\bx - \bx^{(k)}\right) \leq \frac{\left|\eta K\right|}{2}\left\|\bx - \bx^{(k)}\right\|^{2} \label{eq:hess_inequality_cert}
	\end{align}
	Using equation \eqref{eq:taylor_thm_cert} and inequality \eqref{eq:hess_inequality_cert}:
	\begin{align*}
	&\eta\left(\bz^{(L)}_{y} - \bz^{(L)}_{t}\right)(\bx) \\
	&\leq \ \bigg[ \eta\left(\bz^{(L)}_{y} - \bz^{(L)}_{t}\right)(\bx^{(k)}) + \eta\left(\bg^{(k)}\right)^{T}(\bx - \bx^{(k)}) \\
	&+ \frac{|\eta K|}{2}\left\|\bx - \bx^{(k)}\right\|^{2} \bigg]
	\end{align*}
	Adding $1/2 \|\bx - \bx^{(0)}\|^{2}$ to both sides, we get the following inequality:
	\begin{align*}
	&\frac{1}{2}\left\|\bx - \bx^{(0)}\right\|^{2} +  \eta\left(\bz^{(L)}_{y} - \bz^{(L)}_{t}\right)(\bx) \\
	&\leq \bigg[ \frac{1}{2}\left\|\bx - \bx^{(0)}\right\|^{2} + \eta\left(\bz^{(L)}_{y} - \bz^{(L)}_{t}\right)(\bx^{(k)}) \\
	&+ \eta\left(\bg^{(k)}\right)^{T}(\bx - \bx^{(k)}) + \frac{|\eta K|}{2} \left\|\bx - \bx^{(k)}\right\|^{2} \bigg]
	\end{align*}
	LHS is the objective function of $d_{cert}(\eta)$ and RHS is an upper bound. In majorization-minimization, we minimize an upper bound on the objective function. Thus we set the gradient of RHS with respect to $\bx$ to zero and solve for $\bx$:
	\begin{align*}
	\nabla_{\bx} \bigg[&\frac{1}{2}\left\|\bx - \bx^{(0)}\right\|^{2} + \eta\left(\bz^{(L)}_{y} - \bz^{(L)}_{t}\right)(\bx^{(k)}) \\
	&+ \eta\left(\bg^{(k)}\right)^{T}(\bx - \bx^{(k)}) \ + \frac{|\eta K|}{2}\left\|\bx - \bx^{(k)}\right\|^{2} \bigg] = 0
	\end{align*}
	$$\bx - \bx^{(0)} + \eta \bg^{(k)} + \left| \eta K \right| \left(\bx -\bx^{(k)} \right) = 0$$
	$$(1 + \left|\eta K\right|)\bx - \bx^{(0)} + \eta \bg^{(k)} - \left| \eta K \right| \bx^{(k)} = 0$$
	$$\bx = -(1+|\eta K|)^{-1}\left(\eta\bg^{(k)} - |\eta K| \bx^{(k)} - \bx^{(0)} \right)$$
	This gives the following iterative equation:
	\begin{align}
	\bx^{(k+1)} = -(1+|\eta K|)^{-1}\left(\eta\bg^{(k)} - |\eta K| \bx^{(k)} - \bx^{(0)} \right) \label{xk_update_cert}
	\end{align}
	
	\subsection{Update equation for the attack problem}\label{subsec:attack_update_derive}
	Our goal is to minimize $\bz^{(L)}_{y} - \bz^{(L)}_{t}$ within an $l_{2}$ ball of radius of $\rho$. We know that the Hessian satisfies the following LMIs:
	\begin{align}
	m\bI \preccurlyeq \nabla^{2}_{\bx} \left(\bz^{(L)}_{y} - \bz^{(L)}_{t}\right) \preccurlyeq M\bI \label{eq:m_M_bound_attack}
	\end{align}
	$K$ is given by Theorem \ref{thm:L_layer_p_n_theorem}  for neural network of any depth ($L\ge 2$). For $2$ layer networks, $M$ and $m$ are given by Theorem \ref{thm:single_layer_p_n_theorem}. But for deeper networks ($L \geq 3$), $M=K$, $m=-K$. In either case, $K \geq \max(|m|, |M|)$. Thus, we also have:
	\begin{align}
	-K\bI \preccurlyeq \nabla^{2}_{\bx} \left(\bz^{(L)}_{y} - \bz^{(L)}_{t}\right) \preccurlyeq K\bI \label{eq:K_bound_attack}
	\end{align}
	We solve the dual ($d^{*}_{attack}$) of the attack problem ($p^{*}_{attack}$) for the given radius $\rho$.
	
	The primal problem $(p^{*}_{attack})$ is given by:
	\begin{align*}
	&p^{*}_{attack} = \min_{\left\|\bx - \bx^{(0)}\right\| \le \rho} \bz^{(L)}_{y} - \bz^{(L)}_{t} \\
	&p^{*}_{attack} = \min_{\bx} \max_{\eta 
		\ge 0} \bigg[ \bz^{(L)}_{y} - \bz^{(L)}_{t} + \frac{\eta}{2}\left( \left\|\bx - \bx^{(0)}\right\|^{2} - \rho^{2} \right)\bigg]
	\end{align*}
	Using inequality \eqref{eq:m_M_bound_attack} and Theorem \ref{thm:attack} part (a), we know that the dual of the above problem is convex when $-m\leq \eta$.
	
	The corresponding dual problem $(d^{*}_{cert})$ is given by:
	\begin{align*}
	&d^{*}_{attack} = \max_{\eta \ge -m} d_{attack}(\eta)
	\end{align*}
	where $d_{attack}(\eta)$ is given as follows:
	\begin{align*}
	d_{attack}(\eta) = \min_{\bx} \bigg[&\left(\bz^{(L)}_{y} - \bz^{(L)}_{t}\right)(\bx) \\
	&+ \frac{\eta}{2}\left(\left\|\bx - \bx^{(0)}\right\|^{2} - \rho^{2}\right)\bigg]
	\end{align*}
	For a given $\eta$, we have the following optimization:
	\begin{align*}
	d_{attack}(\eta) = \min_{\bx} \bigg[ &\left(\bz^{(L)}_{y} - \bz^{(L)}_{t}\right)\left(\bx\right) \\
	&+ \frac{\eta}{2}\left( \left\|\bx - \bx^{(0)}\right\|^{2} - \rho^{2} \right)\bigg]
	\end{align*}
	We will use majorization-minimization to solve this optimization.
	
	At a point $\bx^{(k)}$, we have to solve for the point $\bx^{(k+1)}$ that decreases the objective function. Using the Taylor's theorem at point $\bx^{(k)}$, we have:
	\begin{align}
	&\left(\bz^{(L)}_{y} - \bz^{(L)}_{t}\right)\left(\bx\right) \nonumber\\
	&=  \left(\bz^{(L)}_{y} - \bz^{(L)}_{t}\right)\left(\bx^{(k)}\right) + \left(\bg^{(k)}\right)^{T}\left(\bx - \bx^{(k)}\right) \nonumber\\ 
	&+ \frac{1}{2}\left(\bx - \bx^{(k)}\right)^{T}\bH^{(\xi)}\left(\bx - \bx^{(k)}\right)  \label{eq:taylor_thm_attack}
	\end{align}
	where $\bg^{(k)}$ is the gradient of $(\bz^{(L)}_{y} - \bz^{(L)}_{t})$ at $\bx^{(k)}$ and $\bH^{(\xi)}$ is the Hessian at a point $\xi$ on the line connecting $\bx$ and $\bx^{(k)}$.
	
	Using inequality \eqref{eq:K_bound_attack}, we know that $-K\bI \preccurlyeq \bH^{(\xi)} \preccurlyeq K\bI\quad \forall \xi \in \mathbb{R}^{D}$,
	\begin{align}
	&\frac{1}{2}\left(\bx - \bx^{(k)}\right)^{T}\bH^{(\xi)}\left(\bx - \bx^{(k)}\right) \leq \frac{K}{2}\left\|\bx - \bx^{(k)}\right\|^{2} \label{eq:hess_inequality_attack}
	\end{align}
	Using equation \eqref{eq:taylor_thm_attack} and inequality \eqref{eq:hess_inequality_attack}:
	\begin{align*}
	&\left(\bz^{(L)}_{y} - \bz^{(L)}_{t}\right)(\bx) \\
	&\leq \bigg[\left(\bz^{(L)}_{y} - \bz^{(L)}_{t}\right)(\bx^{(k)}) \\
	&+ \left(\bg^{(k)}\right)^{T}\left(\bx - \bx^{(k)}\right) \ + \frac{K}{2}\left\|\bx - \bx^{(k)}\right\|^{2} \bigg]
	\end{align*}
	Adding $\eta/2( \|\bx - \bx^{(0)}\|^{2} - \rho^{2})$ to both sides, we get the following inequality:
	\begin{align}
	&\left(\bz^{(L)}_{y} - \bz^{(L)}_{t}\right)(\bx) + \frac{\eta}{2}\left( \left\|\bx - \bx^{(0)}\right\|^{2} - \rho^{2}\right) \nonumber \\
	&\leq \  \bigg[\left(\bz^{(L)}_{y} - \bz^{(L)}_{t}\right)(\bx^{(k)}) + \left(\bg^{(k)}\right)^{T}\left(\bx - \bx^{(k)}\right) \nonumber\\
	&+ \frac{K}{2}\left\|\bx - \bx^{(k)}\right\|^{2} + \frac{\eta}{2}\left( \left\|\bx - \bx^{(0)}\right\|^{2} - \rho^{2}\right) \bigg]\nonumber
	\end{align}
	LHS is the objective function of $d_{attack}(\eta)$ and RHS is an upper bound. In majorization-minimization, we minimize an upper bound on the objective function. Thus we set the gradient of RHS with respect to $\bx$ to zero and solve for $\bx$:
	\begin{align}
	\nabla_{\bx} \bigg[&\left(\bz^{(L)}_{y} - \bz^{(L)}_{t}\right)(\bx^{(k)}) + \left(\bg^{(k)}\right)^{T}\left(\bx - \bx^{(k)}\right)  \nonumber\\
	&+ \frac{K}{2}\left\|\bx - \bx^{(k)}\right\|^{2}
	+ \frac{\eta}{2}\left( \left\|\bx - \bx^{(0)}\right\|^{2} - \rho^{2}\right)\bigg] = 0\nonumber
	\end{align}
    Rearranging the above equation, we get:
	\begin{align*}
	&\bg^{(k)} + K\left(\bx - \bx^{(k)}\right) + \eta\left( \bx - \bx^{(0)}\right) = 0\\
    & (K + \eta)\bx + \bg^{(k)} - K \bx^{(k)} - \eta \bx^{(0)} = 0\\
    & \bx = -(K + \eta)^{-1}\left(\bg^{(k)} - K\bx^{(k)} - \eta\bx^{(0)}\right)
	\end{align*}
	This gives the following iterative equation:
	\begin{align}
	\bx^{(k+1)} = -(K + \eta)^{-1}\left(\bg^{(k)} - K\bx^{(k)} - \eta\bx^{(0)}\right) \label{xk_update_attack}
	\end{align}
	
	\subsection{Algorithm to compute the certificate}\label{subsec:algorithm_certificate}
	We start with the following initial values of $\bx,\  \eta,\ \eta_{min},\ \eta_{max}$: 
	\begin{align*}
	&\eta_{min}=-1/M,\qquad \eta_{max}=-1/m\\
	&\eta = \frac{1}{2}(\eta_{min} + \eta_{max}),\qquad\bx = \bx^{(0)}
	\end{align*}
	To solve the dual for a given value of $\eta$, we run $20$ iterations of the following update (derived in Appendix \ref{subsec:certificate_update_derive}):
	\begin{align*}
	&\bx^{(k+1)} = -(1+|\eta K|)^{-1}\left(\eta\bg^{(k)} - |\eta K| \bx^{(k)} - \bx^{(0)} \right) 
	\end{align*}
	
	To maximize the dual $d_{cert}(\eta)$ over $\eta$ in the range $[-1/M,\ -1/m]$, we use a bisection method: If the solution $\bx$ for a given value of $\eta$, $(\bz^{(L)}_{y} - \bz^{(L)}_{t})(\bx) > 0$, set $\eta_{min} = \eta$, else set $\eta_{max} = \eta$. Set the new $\eta = (\eta_{min} + \eta_{max})/2$ and repeat. The maximum number of updates to $\eta$ are set to 30. This method satisfied linear convergence. 
	The routine to compute the certificate example is given in Algorithm \ref{alg:cert}.
	\begin{algorithm}[h]
		\caption{Certificate optimization}
		\begin{algorithmic}\label{alg:cert}
			\REQUIRE input $\bx^{(0)}$, label $y$, target $t$ 
			\STATE $m,M,K \leftarrow compute\_bounds(\bz^{(L)}_{y} - \bz^{(L)}_{t})$
			\STATE $\eta_{min} \leftarrow -1/M$ 
			\STATE $\eta_{max} \leftarrow -1/m$ 
			\STATE $\eta \leftarrow 1/2(\eta_{min} + \eta_{max})$ 
			\STATE $\bx \leftarrow \bx^{(0)}$
			\FOR{$i$ in $[1,\dots,30]$}
			\FOR{$j$ in $[1,\dots,20]$}
			\STATE $\bg \leftarrow compute\_gradient(\bz^{(L)}_{y} - \bz^{(L)}_{t}, \bx)$ 
			\IF{$\|\eta\bg + (\bx - \bx^{(0)})\| < 10^{-5} $}
			\STATE \textbf{break}
			\ENDIF
			\STATE 	$\bx \leftarrow -(1+|\eta K|)^{-1}\left(\eta\bg - |\eta K| \bx - \bx^{(0)} \right)$ 
			\ENDFOR
			\IF{$(\bz^{(L)}_{y} - \bz^{(L)}_{t})(\bx) > 0 $}
			\STATE $\eta_{min} \leftarrow \eta$
			\ELSE
			\STATE $\eta_{max} \leftarrow \eta$
			\ENDIF
			\STATE $\eta \leftarrow (\eta_{min} + \eta_{max})/2$
			\ENDFOR\\
			\textbf{return} $\bx$
		\end{algorithmic}
	\end{algorithm}
	
	\subsection{Algorithm to compute the attack}\label{subsec:algorithm_attack}
	We start with the following initial values of $\bx,\  \eta,\ \eta_{min},\ \eta_{max}$: 
	\begin{align*}
	&\eta_{min}=-m,\qquad \eta_{max}=20(1-m)\\
	&\eta = \frac{1}{2}(\eta_{min} + \eta_{max}),\qquad\bx = \bx^{(0)}
	\end{align*}
	To solve the dual for a given value of $\eta$, we run $20$ iterations of the following update (derived in Appendix \ref{subsec:attack_update_derive}):
	\begin{align*}
	&\bx^{(k+1)} = -(K + \eta)^{-1}\left(\bg^{(k)} - K\bx^{(k)} - \eta\bx^{(0)}\right) 
	\end{align*}
	
	To maximize the dual $d_{cert}(\eta)$ over $\eta$ in the range $[-m,\ 20(1-m)]$, we use a bisection method: If the solution $\bx$ for a given value of $\eta$,  $\|\bx - \bx^{(0)}\| \leq \rho$, set $\eta_{max} = \eta$, else set $\eta_{min} = \eta$. Set new $\eta = (\eta_{min} + \eta_{max})/2$ and repeat. The maximum number of updates to $\eta$ are set to 30. This method satisfied linear convergence. The routine to compute the attack example is given in Algorithm \ref{alg:attack}.
	
	\begin{algorithm}[h]
		\caption{Attack optimization}
		\begin{algorithmic}\label{alg:attack}
			\REQUIRE input $\bx^{(0)}$, label $y$, target $t$ , radius $\rho$
			\STATE $m,M,K \leftarrow compute\_bounds(\bz^{(L)}_{y} - \bz^{(L)}_{t})$ 
			\STATE $\eta_{min} \leftarrow -m$ 
			\STATE $\eta_{max} \leftarrow 20(1-m)$
			\STATE $\eta \leftarrow 1/2(\eta_{min} + \eta_{max})$ 
			\STATE $\bx \leftarrow \bx^{(0)}$
			\FOR{$i$ in $[1,\dots,30]$}
			\FOR{$j$ in $[1,\dots,20]$}
			\STATE $\bg \leftarrow compute\_gradient(\bz^{(L)}_{y} - \bz^{(L)}_{t}, \bx)$ 
			\IF{$\|\bg + \eta (\bx - \bx^{(0)})\| < 10^{-5} $}
			\STATE \textbf{break}
			\ENDIF
			\STATE 	$\bx \leftarrow -(K + \eta)^{-1}\left(\bg - K \bx - \eta\bx^{(0)} \right)$ 
			\ENDFOR
			\IF{$\|\bx - \bx^{(0)}\| < \rho $}
			\STATE $\eta_{max} \leftarrow \eta$
			\ELSE
			\STATE $\eta_{min} \leftarrow \eta$
			\ENDIF
			\STATE $\eta \leftarrow (\eta_{min} + \eta_{max})/2$
			\ENDFOR\\
			\textbf{return} $\bx$
		\end{algorithmic}
	\end{algorithm}

    \subsection{Computing certificate using local curvature bounds}\label{appendix_local_global}
	To compute the robustness certificate in a local region around the input, we first compute the certificate using the global bounds on the curvature. Using the same certificate as the initial $l_{2}$ radius of the safe region, we can refine our certificate. Due to the reduction in curvature, this will surely increase the value of the certificate. We then use the new robustness certificate as the new $l_{2}$ radius of the safe region and repeat. We iterate over this process $5$ times to compute the local version of our robustness certificate.

	To ensure that the optimization trajectory does not escape the safe region, whenever the gradient descent step lies outside the "safe" region, we reduce the step size by a factor of two until it lies inside the region.
	
	\section{Summary Table comparing out certification method against existing methods}
	Table \ref{table:summary_table} provides a summary table comparing our certification method against the existing methods.
	\begin{table*}[h!]
		\centering
		\renewcommand{\arraystretch}{1.15}
		\caption{Comparison of methods for providing provable robustness certification. Note that \cite{Cohen2019CertifiedAR} is a probabilistic certificate.}
		\begin{tabular}{ | c | c | c | c | c | c | } 
			\hline
			Method & \multirow{2}{5em}{Non-trivial bound}  & \multirow{2}{4.5em}{Multi-layer} & \multirow{2}{5em}{Activation functions} & \multirow{2}{4.5em}{Norm}\\
			& & & & \\
			\hline
			\cite{42503} & \xmark & \cmark & All & $l_{2}$\\
			\hline
			\cite{Katz2017ReluplexAE} & \cmark & \cmark & ReLU & $l_{\infty}$\\
			\hline
			\cite{NIPS2017_6821} & \cmark & \xmark & Differentiable & $l_{2}$\\
			\hline
			\cite{Raghunathan2018CertifiedDA} & \cmark & \xmark & ReLU & $l_{\infty}$\\
			\hline
			\cite{Wong2017ProvableDA} & \cmark & \cmark & ReLU & $l_{\infty}$\\
			\hline
			\cite{Weng2018TowardsFC} & \cmark & \cmark & ReLU & $l_1,l_2,l_{\infty}$\\
			\hline
			\cite{Zhang2018EfficientNN} & \cmark & \cmark & All & $l_1,l_2,l_{\infty}$\\
			\hline
			\cite{Cohen2019CertifiedAR} & \cmark & \cmark & All & $l_{2}$\\
			\hline
			Ours & \cmark & \cmark & Differentiable & $l_{2}$ \\
			\hline
		\end{tabular}
		\label{table:summary_table}
	\end{table*}
	
	\section{Proofs} \label{proofs}
	\subsection{Proof of Theorem \ref{thm:certificate}}\label{proof:certificate}
	\begin{enumerate}[label=(\alph*)]
		\item 
	    \begin{align*}
	    d_{cert}(\eta) = \min_{\bx} \bigg[&\frac{1}{2}\left\|\bx - \bx^{(0)}\right\|^{2} \\
	    & + \eta \left(\bz^{(L)}_{y}(\bx) - \bz^{(L)}_{t}(\bx)\right)\bigg]
	    \end{align*}
		\begin{align*}
		&\nabla^{2}_{\bx} \bigg[\frac{1}{2}\left\|\bx - \bx^{(0)}\right\|^{2} + \eta \left(\bz^{(L)}_{y}(\bx) - \bz^{(L)}_{t}(\bx)\right)\bigg] \\
		&= \bI + \eta\nabla^{2}_{\bx}\left(\bz^{(L)}_{y} - \bz^{(L)}_{t}\right)
		\end{align*}
		We are given that the Hessian $\nabla^{2}_{\bx} (\bz^{(L)}_{y} - \bz^{(L)}_{t})$ satisfies the following LMIs:  
		$$m \bI \preccurlyeq \nabla^{2}_{\bx} \left(\bz^{(L)}_{y} - \bz^{(L)}_{t}\right) \preccurlyeq M \bI\qquad \forall \bx \in \mathbb{R}^{n}$$
		The eigenvalues of $\bI + \eta\nabla^{2}_{\bx}(\bz^{(L)}_{y} - \bz^{(L)}_{t})$ are bounded between: 
		$$(1 + \eta M,\ 1 + \eta m), \text{ if } \eta < 0$$
		$$(1 + \eta m,\ 1 + \eta M), \text{ if } \eta > 0$$
		We are given that $\eta$ satisfies the following inequalities where $m<0, M>0$ since $(\bz^{(L)}_{y} - \bz^{(L)}_{t})$ is neither convex, nor concave as a function of $\bx$:
		$$ \frac{-1}{M} \le \eta \le \frac{-1}{m},\qquad m<0, M>0$$
		We have the following inequalities:
		$$ 1+ \eta M \ge 0,\ 1 + \eta m \ge 0$$
		Thus, $\bI + \eta\nabla^{2}_{\bx}(\bz^{(L)}_{y} - \bz^{(L)}_{t})$ is a PSD matrix for all $\bx \in \mathbb{R}^{D}$ when $-1/M \leq \eta \leq -1/m$ .\\
		Thus $1/2\|\bx - \bx^{(0)}\|^{2} + \eta (\bz^{(L)}_{y} - \bz^{(L)}_{t})(\bx)$ is a convex function in $\bx$ and $d_{cert}(\eta)$ is a convex optimization problem.
		\item For every value of $\eta$, $d_{cert}(\eta)$ is a lower bound for $p^{*}_{cert}$. Thus $d^{*}_{cert} = \max_{-1/M \leq\ \eta \ \leq -1/m} d_{cert}(\eta)$ is a lower bound for $p^{*}_{cert}$, i.e:
		\begin{align}
		d^{*}_{cert} \leq p^{*}_{cert} \label{d_leq_p_cert}
		\end{align}
		Let $\eta^{(cert)},\bx^{(cert)}$ be the solution of the above dual optimization ($d^{*}_{cert}$) such that \begin{align}
		\bz^{(L)}_{y}(\bx^{(cert)})=\bz^{(L)}_{t}(\bx^{(cert)}) \label{fl_eq_ft_cert}
		\end{align}
		$d^{*}_{cert}$ is given by the following:
		\begin{align}
		d^{*}_{cert} = \bigg[&\frac{1}{2}\left\|\bx^{(cert)} - \bx^{(0)}\right\|^{2} \nonumber\\
		&+ \eta^{(cert)}\underbrace{\left(\bz^{(L)}_{y}(\bx^{(cert)}) - \bz^{(L)}_{t}(\bx^{(cert)})\right)}_{ = 0}\bigg]\nonumber
		\end{align}
		Since we are given that $\bz^{(L)}_{y}(\bx^{(cert)}) = \bz^{(L)}_{t}(\bx^{(cert)})$, we get the following equation for $d^{*}_{cert}$:
		\begin{align}
		d^{*}_{cert} &= \frac{1}{2}\left\|\bx^{(cert)} - \bx^{(0)}\right\|^{2}
		\label{d_star_dist_cert}
		\end{align} 
		Since $p^{*}_{cert}$ is given by the following equation:
		\begin{align}
		p^{*}_{cert} = \min_{\bz^{(L)}_{y}(\bx) = \bz^{(L)}_{t}(\bx)} \bigg[\frac{1}{2}\left\|\bx - \bx^{(0)}\right\|^{2}\bigg]\label{p_star_cert}
		\end{align}

		Using equations \eqref{fl_eq_ft_cert} and \eqref{p_star_cert}, $p^{*}_{cert}$ is the minimum value of $1/2\|\bx - \bx^{(0)}\|^{2}\quad \forall \bx: \bz^{(L)}_{y}(\bx) = \bz^{(L)}_{t}(\bx)$:
		\begin{align}
		p^{*}_{cert} \leq \frac{1}{2}\left\|\bx^{(cert)} - \bx^{(0)}\right\|^{2} \label{p_leq_dist_cert}
		\end{align}
		From equation \eqref{d_star_dist_cert}, we know that $d^{*}_{cert}=1/2\|\bx^{(cert)} - \bx^{(0)}\|^{2}$. Thus, we get:
		\begin{align}
		p^{*}_{cert} \leq d^{*}_{cert} \label{p_leq_d_cert}
		\end{align}
		Using equation \eqref{d_leq_p_cert} we have $d^{*}_{cert} \leq p^{*}_{cert}$ and using \eqref{p_leq_d_cert}, $p^{*}_{cert} \leq d^{*}_{cert}$
		$$p^{*}_{cert} = d^{*}_{cert}$$
	\end{enumerate}
	\subsection{Proof of Theorem \ref{thm:attack}}\label{proof:attack}
	\begin{enumerate}[label=(\alph*)]
		\item 
		\begin{align*}
		d_{attack}(\eta) = \min_{\bx} \bigg[&\left(\bz^{(L)}_{y} - \bz^{(L)}_{t}\right)(\bx) \\
		&+ \frac{\eta}{2}\left(\left\|\bx - \bx^{(0)}\right\|^{2} - \rho^{2}\right)\bigg]
		\end{align*}
		\begin{align*}
		&\nabla^{2}_{\bx} \bigg[\left(\bz^{(L)}_{y} - \bz^{(L)}_{t}\right)(\bx) + \frac{\eta}{2}\left\|\bx - \bx^{(0)}\right\|^{2}\bigg] \\
		&= \nabla^{2}_{\bx}\left(\bz^{(L)}_{y} - \bz^{(L)}_{t}\right) + \eta \bI
		\end{align*}
		Since the Hessian $\nabla^{2}_{\bx} (\bz^{(L)}_{y} - \bz^{(L)}_{t})$ is bounded below:  
		$$m \bI \preccurlyeq \nabla^{2}_{\bx} \left(\bz^{(L)}_{y} - \bz^{(L)}_{t}\right)\qquad \forall \bx \in \mathbb{R}^{n}$$
		The eigenvalues of $\nabla^{2}_{\bx}(\bz^{(L)}_{y} - \bz^{(L)}_{t}) + \eta \bI$ are bounded below: $$(m + \eta)\bI \preccurlyeq \nabla^{2}_{\bx}\left(\bz^{(L)}_{y} - \bz^{(L)}_{t}\right) + \eta \bI $$
		Since $\eta \ge -m$.
		$$ \eta + m \ge 0$$
		Thus $\nabla^{2}_{\bx}(\bz^{(L)}_{y} - \bz^{(L)}_{t}) + \eta \bI$ is a PSD matrix for all $\bx \in \mathbb{R}^{D}$ when $\eta \ge -m$.\\
		Thus $(\bz^{(L)}_{y} - \bz^{(L)}_{t})(\bx) + \eta/2(\|\bx - \bx^{(0)}\|^{2} - \rho^{2})$ is a convex function in $\bx$ and $d_{attack}(\eta)$ is a convex optimization problem.
		\item For every value of $\eta$, $d_{attack}(\eta)$ is a lower bound for $p^{*}_{attack}$. Thus $d^{*}_{attack} = \max_{-m \leq \eta} d_{attack}(\eta)$ is a lower bound for $p^{*}_{attack}$:
		\begin{align}
		d^{*}_{attack} \leq p^{*}_{attack} \label{d_leq_p_attack}
		\end{align}
		Let $\eta^{(attack)},\bx^{(attack)}$ be the solution of the above dual optimization ($d^{*}_{attack}$) such that \begin{align}
		\left\|\bx^{(attack)} - \bx^{(0)}\right\| = \rho \label{fl_eq_ft_attack}
		\end{align}
		$d^{*}_{attack}$ is given by the following:
		\begin{align}
		d^{*}_{attack} = \bigg[&\left(\bz^{(L)}_{y} - \bz^{(L)}_{t}\right)(\bx^{(attack)}) \\
		&+ \frac{\eta^{(attack)}}{2}\underbrace{\left(\left\|\bx^{(attack)} - \bx^{(0)}\right\|^{2} - \rho^{2}\right)}_{ = 0}\bigg]\nonumber
		\end{align} 
		Since we are given that $\|\bx^{(attack)} - \bx^{(0)}\| = \rho$, we get the following equation for $d^{*}_{attack}$:
		\begin{align}
		d^{*}_{attack} &= \left(\bz^{(L)}_{y} - \bz^{(L)}_{t}\right)(\bx^{(attack)})
		\label{d_star_dist_attack}
		\end{align} 
		Since $p^{*}_{attack}$ is given by the following equation:
		\begin{align}
		p^{*}_{attack} = \min_{\left\|\bx - \bx^{(0)}\right\| \leq \rho} \bigg[\left(\bz^{(L)}_{y} - \bz^{(L)}_{t}\right)(\bx)\bigg]\label{p_star_attack}
		\end{align}
		Using equations \eqref{fl_eq_ft_attack} and \eqref{p_star_attack}, $p^{*}_{attack}$ is the minimum value of $(\bz^{(L)}_{y} - \bz^{(L)}_{t})(\bx)\quad\forall \left\|\bx-\bx^{(0)}\right\|\leq \rho$:
		\begin{align}
		p^{*}_{attack} \leq \left(\bz^{(L)}_{y} - \bz^{(L)}_{t}\right)(\bx^{(attack)}) \label{p_leq_dist_attack}
		\end{align}
		From equation \eqref{d_star_dist_attack}, we know that $d^{*}_{attack}=(\bz^{(L)}_{y} - \bz^{(L)}_{t})(\bx^{(attack)})$. Thus, we get:
		\begin{align}
		p^{*}_{attack} \leq d^{*}_{attack} \label{p_leq_d_attack}
		\end{align}
		Using equation \eqref{d_leq_p_attack} we have $d^{*}_{attack} \leq p^{*}_{attack}$ and using \eqref{p_leq_d_attack}, $p^{*}_{attack} \leq d^{*}_{attack}$
		$$p^{*}_{attack} = d^{*}_{attack}$$
	\end{enumerate}
	
	\subsection{Proof of Lemma \ref{thm:deep_hessian_closed}}\label{proof:deep_hessian_closed}
	We have to prove that for an $L$ layer neural network, the hessian of the $i^{th}$ hidden unit in the $L^{th}$ layer with respect to the input $\bx$, i.e $\nabla_{\bx}^{2} \bz^{(L)}_{i}$ is given by the following formula:
	\begin{align}
	\nabla^{2}_{\bx} \bz^{(L)}_{i} = \sum_{I=1}^{L-1}\left(\bB^{(I)}\right)^{T}diag\bigg(\bF^{(L,I)}_{i}\odot\sigma^{''}\left(\bz^{(I)}\right)\bigg)\bB^{(I)}\label{final_formula}
	\end{align}
	where $\bB^{(I)}$, $I \in [L]$ is a matrix of size $N_{I} \times D$ defined as follows:
	\begin{align}
	\bB^{(I)} = \bigg[\nabla_{\bx} \bz^{(I)}_{1},\nabla_{\bx} \bz^{(I)}_{2},\dotsc,\nabla_{\bx} \bz^{(I)}_{N_{I}}\bigg]^{T} \label{B_def}
	\end{align}
	and $\bF^{(L,I)},\ I \in [L-1]$ is a matrix of size $N_{L} \times N_{I}$ defined as follows:
	\begin{align}
	\bF^{(L,I)} = \bigg[\nabla_{\ba^{(I)}} \bz^{(L)}_{1},\nabla_{\ba^{(I)}} \bz^{(L)}_{2},\dotsc,\nabla_{\ba^{(I)}} \bz^{(L)}_{N_{L}}\bigg]^{T} \label{F_def}
	\end{align}
	
	$\nabla^{2}_{\bx} \bz^{(L)}_{i}$ can be written in terms of the activations of the previous layer using the following formula:
	\begin{align}
	\nabla^{2}_{\bx}\bz^{(L)}_{i} &= \sum_{j=1}^{N_{I-1}}\bWW^{(L)}_{i,j}\left(\nabla^{2}_{\bx} \ba^{(L-1)}_{j}\right)\label{output_hessian_formula_i}
	\end{align}
	Using the chain rule of the Hessian and $\ba^{(I)} = \sigma(\bz^{(I)})$, we can write $\nabla^{2}_{\bx} \ba^{(L-1)}_{j}$ in terms of $\nabla_{\bx} \bz^{(L-1)}_{j}$ and $\nabla^{2}_{\bx} \bz^{(L-1)}_{j}$ as the following:
	\begin{align}
	\nabla^{2}_{\bx} \ba^{(L-1)}_{j} &= \sigma^{''}\left(\bz^{(L-1)}_{j}\right) \left(\nabla_{\bx} \bz^{(L-1)}_{j}\right)\left(\nabla_{\bx} \bz^{(L-1)}_{j}\right)^{T} \nonumber\\
	&+ \sigma^{'}\left(\bz^{(L-1)}_{j}\right) \left(\nabla^{2}_{\bx} \bz^{(L-1)}_{j}\right)\label{act_hessian_formula_j}
	\end{align}
	Replacing $\nabla^{2}_{\bx} \ba^{(L-1)}_{j}$ using equation \eqref{act_hessian_formula_j} into equation \eqref{output_hessian_formula_i}, we get:
	\begin{align}
	&\nabla^{2}_{\bx} \left(\bz^{(L)}_{i}\right) = \nonumber\\
	& \sum_{j=1}^{N_{L-1}}\bWW^{(L)}_{i,j}\bigg[\sigma^{''}\left(\bz^{(L-1)}_{j}\right) \left(\nabla_{\bx} \bz^{(L-1)}_{j}\right)\left(\nabla_{\bx} \bz^{(L-1)}_{j}\right)^{T} \nonumber\\
	&+ \sigma^{'}\left(\bz^{(L-1)}_{j}\right) \left(\nabla^{2}_{\bx} \bz^{(L-1)}_{j}\right)\bigg]\nonumber
	\end{align}
	\begin{align}
	&\nabla^{2}_{\bx} \left(\bz^{(L)}_{i}\right) =\\ &\sum_{j=1}^{N_{L-1}}\bWW^{(L)}_{i,j}\sigma^{''}\left(\bz^{(L-1)}_{j}\right) \left(\nabla_{\bx} \bz^{(L-1)}_{j}\right)\left(\nabla_{\bx} \bz^{(L-1)}_{j}\right)^{T}\nonumber\\ 
	&+ \sum_{j=1}^{N_{L-1}}\bWW^{(L)}_{i,j}\sigma^{'}\left(\bz^{(L-1)}_{j}\right) \left(\nabla^{2}_{\bx} \bz^{(L-1)}_{j}\right)\label{hessian_z_L_raw}
	\end{align}
	For each $I \in [2, L],\ i \in N_{I}$, we define the matrix $\bA^{(I)}_{i}$ as the following:
	\begin{align}
	&\nabla^{2}_{\bx} \left(\bz^{(I)}_{i}\right) \nonumber\\
	&= \underbrace{\sum_{j=1}^{N_{I-1}}\bWW^{(I)}_{i,j}\sigma^{''}\left(\bz^{(I-1)}_{j}\right) \left(\nabla_{\bx} \bz^{(I-1)}_{j}\right)\left(\nabla_{\bx} \bz^{(I-1)}_{j}\right)^{T}}_{\bA^{(I)}_{i}} \nonumber\\
	&+\sum_{j=1}^{N_{I-1}}\bWW^{(I)}_{i,j}\sigma^{'}\left(\bz^{(I-1)}_{j}\right) \left(\nabla^{2}_{\bx} \bz^{(I-1)}_{j}\right)\label{L_R_def}\\
	&\bA^{(I)}_{i} = \sum_{j=1}^{N_{I-1}}\bWW^{(I)}_{i,j}\sigma^{''}\left(\bz^{(I-1)}_{j}\right) \left(\nabla_{\bx} \bz^{(I-1)}_{j}\right)\left(\nabla_{\bx} \bz^{(I-1)}_{j}\right)^{T}\label{L_def}
	\end{align}
	Substituting $\bA^{(L)}_{i}$ using equation $\eqref{L_def}$ into equation \eqref{hessian_z_L_raw}, we get:
	\begin{align}
	\nabla^{2}_{\bx} \left(\bz^{(L)}_{i}\right) &= \bA^{(L)}_{i} + \sum_{j=1}^{N_{I-1}}\bWW^{(I)}_{i,j}\sigma^{'}\left(\bz^{(I-1)}_{j}\right) \left(\nabla^{2}_{\bx} \bz^{(I-1)}_{j}\right)\label{hessian_i_in_L_R}
	\end{align}
	We first simplify the expression for $\bA^{(L)}_{i}$. Note that $\bA^{(L)}_{i}$ is a sum of symmetric rank one matrices $\left(\nabla_{\bx} \bz^{(L-1)}_{j}\right)\left(\nabla_{\bx} \bz^{(L-1)}_{j}\right)^{T}$ with the coefficient $\bWW^{(L)}_{i,j}\sigma^{''}\left(\bz^{(L-1)}_{j}\right)$ for each $j$. We create a diagonal matrix for the coefficients and another matrix $\bB^{(L-1)}$ such that each $j^{th}$ row of $\bB^{(L-1)}$ is the vector $\nabla_{\bx} \bz^{(L-1)}_{j}$. This leads to the following equation:
	\begin{align}
	\bA^{(L)}_{i} &= \sum_{j=1}^{N_{L-1}}\bWW^{(L)}_{i,j}\sigma^{''}\left(\bz^{(L-1)}_{j}\right) \left(\nabla_{\bx} \bz^{(L-1)}_{j}\right)\left(\nabla_{\bx} \bz^{(L-1)}_{j}\right)^{T}\nonumber\\
	&= \left(\bB^{(L-1)}\right)^{T} diag \left(\bWW^{(L)}_{i}\odot\sigma^{''}\left(\bz^{(L-1)}\right)\right)\bB^{(L-1)}\label{L_W_eq}
	\end{align}
	$\bB^{(I)}$ where $I \in [L]$ is a matrix of size $N_{I} \times D$ defined as follows:
	\begin{align}
	\bB^{(I)} &= \bigg[\nabla_{\bx} \bz^{(I)}_{1},\nabla_{\bx} \bz^{(I)}_{2},\dotsc,\nabla_{\bx} \bz^{(I)}_{N_{I}}\bigg]^{T},\qquad I \in [L] \nonumber	\end{align}
	Thus $\bB^{(I)}$ is the jacobian of $\bz^{(I)}$ with respect to the input $\bx$.\\
	Using the chain rule of the gradient, we have the following properties of $\bB^{(I)}$:
	\begin{align}
	&\bB^{(1)} = 
	\bWW^{(1)} \label{B_base}\\
	&\bB^{(I)} = \bWW^{(I)}diag \left(\sigma^{'}\left(\bz^{(I-1)}\right)\right)\bB^{(I-1)} \label{B_induction}
	\end{align}
	Similarly, $\bF^{(I,J)}$ where $I \in [L],\ J \in [I-1]$ is a matrix of size $N_{I} \times N_{J}$ defined as follows:
	\begin{align}
	\bF^{(I,J)} &= \bigg[\nabla_{\ba^{(J)}} \bz^{(I)}_{1},\nabla_{\ba^{(J)}} \bz^{(I)}_{2},\dotsc,\nabla_{\ba^{(J)}} \bz^{(I)}_{N_{I}}\bigg]^{T}\nonumber \end{align}
	Thus $\bF^{(I,J)}$ is the jacobian of $\bz^{(I)}$ with respect to the activations $\ba^{(J)}$. \\
	Using the chain rule of the gradient, we have the following properties for $\bF^{(L,I)}$:
	\begin{align}
	&\bF^{(L,L-1)} = \bWW^{(L)} \label{F_base}\\
	&\bF^{(L,I)} = \bWW^{(L)} diag \left(\sigma^{'}\left(\bz^{(L-1)}\right)\right)\bF^{(L-1,I)} \label{F_induction}
	\end{align}
	Recall that in our notation: For a matrix $\bE$, $\bE_{i}$ denotes the column vector constructed by taking the transpose of the $i^{th}$ row of the matrix $\bE$. Thus $i^{th}$ row of $\bWW^{(L)}$ is $\left(\bWW^{(L)}_{i}\right)^{T}$ and $\bF^{(L,I)}$ is $\left(\bF^{(L,I)}_{i}\right)^{T}$. Equating the $i^{th}$ rows in equation \eqref{F_induction}, we get:
	\begin{align}
	&\left(\bF^{(L,I)}_{i}\right)^{T} = \left(\bWW^{(L)}_{i}\right)^{T} diag \left(\sigma^{'}\left(\bz^{(L-1)}\right)\right)\bF^{(L-1,I)}\nonumber
	\end{align}
	Taking the transpose of both the sides and expressing the RHS as a summation, we get:
	\begin{align}
	&\bF^{(L,I)}_{i} = \left(\left(\bWW^{(L)}_{i}\right)^{T} diag \left(\sigma^{'}\left(\bz^{(L-1)}\right)\right)\bF^{(L-1,I)}\right)^{T} \nonumber\\
	&\bF^{(L,I)}_{i} = \sum_{j=1}^{N_{L-1}} \bWW^{(L)}_{i,j}\sigma^{'}\left(\bz^{(L-1)}_{j}\right)\bF^{(L-1, I)}_{j} \label{F_induction_i}
	\end{align}
	Substituting $\bWW^{(L)}$ using equation \eqref{F_base} into equation \eqref{L_W_eq}, we get:
	\begin{align}
	\bA^{(L)}_{i} &= \left(\bB^{(L-1)}\right)^{T} diag \left(\bF^{(L,L-1)}_{i}\odot\sigma^{''}\left(\bz^{(L-1)}\right)\right)\bB^{(L-1)}\label{L_simple_eq}
	\end{align}
	Substituting $\bA^{(L)}_{i}$ using equation \eqref{L_simple_eq} into \eqref{hessian_i_in_L_R}, we get:
	\begin{align}
	&\nabla^{2}_{\bx} \bz^{(L)}_{i} =\nonumber \\
	&\bigg[\left(\bB^{(L-1)}\right)^{T} diag \left(\bF^{(L,L-1)}_{i}\odot\sigma^{''}\left(\bz^{(L-1)}\right)\right)\bB^{(L-1)}\nonumber\\ &+\sum_{j=1}^{N_{L-1}}\bWW^{(L)}_{i,j}\sigma^{'}\left(\bz^{(L-1)}_{j}\right) \left(\nabla^{2}_{\bx} \bz^{(L-1)}_{j}\right)\bigg]\label{z_simple_recurrence}
	\end{align}
	Thus, equation \eqref{z_simple_recurrence} allows us to write the hessian of $i^{th}$ unit at layer $L$, i.e $\left(\nabla^{2}_{\bx} \bz^{(L)}_{i}\right)$ in terms of the hessian of $j^{th}$ unit at layer $L-1$, i.e $\left(\nabla^{2}_{\bx} \bz^{(L-1)}_{j}\right)$.\\
	We will prove the following using induction:
	\begin{align}
	\nabla^{2}_{\bx} \bz^{(L)}_{i} &= \sum_{I=1}^{L-1}\left(\bB^{(I)}\right)^{T} diag \left(\bF^{(L,I)}_{i}\odot\sigma^{''}\left(\bz^{(I)}\right)\right)\bB^{(I)}\label{induct_hypothesis}
	\end{align}
	Note that for $L=2, \nabla^{2}_{\bx} \bz^{(L-1)}_{j}=0,\ \forall j \in N_{1}$. Thus using \eqref{z_simple_recurrence} we have:
	\begin{align}
	\nabla^{2}_{\bx} \bz^{(2)}_{i} &= \left(\bB^{(1)}\right)^{T} diag \left(\bF^{(2,1)}_{i}\odot\sigma^{''}\left(\bz^{(1)}\right)\right)\bB^{(1)}\nonumber
	\end{align}
	Hence the induction hypothesis \eqref{induct_hypothesis} is true for $L=2$. \\
	Now we will assume \eqref{induct_hypothesis} is true for $L-1$. Thus we have:
	\begin{align}
	&\nabla^{2}_{\bx} \bz^{(L-1)}_{j} \nonumber\\ &=\sum_{I=1}^{L-2}\left(\bB^{(I)}\right)^{T} diag \left(\bF^{(L-1,I)}_{j}\odot\sigma^{''}\left(\bz^{(I)}\right)\right)\bB^{(I)} \nonumber\\ 
	&\quad \forall j \in N_{L-1} \label{true_L_minus_1}
	\end{align}
	We will prove the same for $L$. \\
	Using equation \eqref{z_simple_recurrence}, we have:
	\begin{align}
	&\nabla^{2}_{\bx} \bz^{(L)}_{i} \nonumber\\
	&= \left(\bB^{(L-1)}\right)^{T} diag \left(\bF^{(L,L-1)}_{i}\odot\sigma^{''}\left(\bz^{(L-1)}\right)\right)\bB^{(L-1)}\nonumber\\ &+\sum_{j=1}^{N_{L-1}}\bWW^{(L)}_{i,j}\sigma^{'}\left(\bz^{(L-1)}_{j}\right) \left(\nabla^{2}_{\bx} \bz^{(L-1)}_{j}\right)\nonumber
	\end{align}
	In the next set of steps, we will be working with the second term of the above equation, i.e:\  $\sum_{j=1}^{N_{L-1}}\bWW^{(L)}_{i,j}\sigma^{'}(\bz^{(L-1)}_{j}) (\nabla^{2}_{\bx} \bz^{(L-1)}_{j})$\\
	Substituting $\nabla^{2}_{\bx} \bz^{(L-1)}_{j}$ using equation \eqref{true_L_minus_1} we get:
	\begin{align}
	&\nabla^{2}_{\bx} \bz^{(L)}_{i} \nonumber\\
	&= \left(\bB^{(L-1)}\right)^{T} diag \left(\bF^{(L,L-1)}_{i}\odot\sigma^{''}\left(\bz^{(L-1)}\right)\right)\bB^{(L-1)}\nonumber\\ &+\sum_{j=1}^{N_{L-1}}\bWW^{(L)}_{i,j}\sigma^{'}\left(\bz^{(L-1)}_{j}\right) \bigg[\\
	&\sum_{I=1}^{L-2} \left(\bB^{(I)}\right) diag \left(\bF^{(L-1,I)}_{j}\odot\sigma^{''}\left(\bz^{(I)}\right)\right)\left(\bB^{(I)}\right)^{T} \bigg]\nonumber
	\end{align}
	Combining the two summations in the second term, we get:
	\begin{align}
	&\nabla^{2}_{\bx} \bz^{(L)}_{i} \nonumber\\
	&= \left(\bB^{(L-1)}\right)^{T} diag \left(\bF^{(L,L-1)}_{i}\odot\sigma^{''}\left(\bz^{(L-1)}\right)\right)\bB^{(L-1)}\nonumber\\ 
	&+\sum_{j=1}^{N_{L-1}} \sum_{I=1}^{L-2}\bigg[\bWW^{(L)}_{i,j}\sigma^{'}\left(\bz^{(L-1)}_{j}\right)\nonumber\\
	&\left(\bB^{(I)}\right)^{T} diag \left(\bF^{(L-1,I)}_{j}\odot\sigma^{''}\left(\bz^{(I)}\right)\right)\bB^{(I)}\bigg]\nonumber
	\end{align}
	Exchanging the summation over $I$ and summation over $j$:
	\begin{align}
	&\nabla^{2}_{\bx} \bz^{(L)}_{i} \nonumber\\
	&= \left(\bB^{(L-1)}\right)^{T} diag \left(\bF^{(L,L-1)}_{i}\odot\sigma^{''}\left(\bz^{(L-1)}\right)\right)\bB^{(L-1)}\nonumber\\ &+ \sum_{I=1}^{L-2}\sum_{j=1}^{N_{L-1}}\bWW^{(L)}_{i,j}\sigma^{'}\left(\bz^{(L-1)}_{j}\right)\bigg[\nonumber\\
	&\left(\bB^{(I)}\right)^{T} diag \left(\bF^{(L-1,I)}_{j}\odot\sigma^{''}\left(\bz^{(I)}\right)\right)\bB^{(I)}\bigg]\nonumber
	\end{align}
	Since $\bB^{(I)}$ is independent of $j$, we take it out of the summation over $j$:
	\begin{align}
	&\nabla^{2}_{\bx} \bz^{(L)}_{i} \nonumber\\
	&=\left(\bB^{(L-1)}\right)^{T} diag \left(\bF^{(L,L-1)}_{i}\odot\sigma^{''}\left(\bz^{(L-1)}\right)\right)\bB^{(L-1)}\nonumber\\ 
	&+ \sum_{I=1}^{L-2}\left(\bB^{(I)}\right)^{T}\bigg[\nonumber\\
	&\sum_{j=1}^{N_{L-1}}\bWW^{(L)}_{i,j}\sigma^{'}\left(\bz^{(L-1)}_{j}\right) diag \left(\bF^{(L-1,I)}_{j}\odot\sigma^{''}\left(\bz^{(I)}\right)\right)\bigg]\bB^{(I)}\nonumber
	\end{align}
	Using the property, $\alpha\left(diag(\bu)\right) + \beta\left( diag (\bv)\right) =  diag \left(\alpha \bu + \beta \bv\right)\ \forall \alpha, \beta \in \mathbb{R}, \bu, \bv \in \mathbb{R}^{n}$; we can move the summation inside the diagonal:
	\begin{align}
	\nabla^{2}_{\bx} \bz^{(L)}_{i} &= \left(\bB^{(L-1)}\right)^{T} diag \left(\bF^{(L,L-1)}_{i}\odot\sigma^{''}\left(\bz^{(L-1)}\right)\right)\bB^{(L-1)}\nonumber\\ &+ \sum_{I=1}^{L-2}\left(\bB^{(I)}\right)^{T}diag\bigg[ \nonumber\\
	&\sum_{j=1}^{N_{L-1}}\bWW^{(L)}_{i,j}\sigma^{'}\left(\bz^{(L-1)}_{j}\right)  \bigg(\bF^{(L-1,I)}_{j}\odot\sigma^{''}\left(\bz^{(I)}\right)\bigg)\bigg]\bB^{(I)}\nonumber
	\end{align}
	Since $\sigma^{''}\left(\bz^{(I)}\right)$ is independent of $j$, we can take it out of the summation over $j$:
	\begin{align}
	\nabla^{2}_{\bx} \bz^{(L)}_{i} &= \left(\bB^{(L-1)}\right)^{T} diag \left(\bF^{(L,L-1)}_{i}\odot\sigma^{''}\left(\bz^{(L-1)}\right)\right)\bB^{(L-1)}\nonumber\\ &+ \sum_{I=1}^{L-2}\left(\bB^{(I)}\right)^{T}diag\bigg[ \nonumber\\
	&\bigg(\sum_{j=1}^{N_{L-1}}\bWW^{(L)}_{i,j}\sigma^{'}\left(\bz^{(L-1)}_{j}\right)\bF^{(L-1,I)}_{j}\bigg)\odot\sigma^{''}\left(\bz^{(I)}\right)\bigg]\bB^{(I)}\nonumber
	\end{align}
	Using equation \eqref{F_induction_i}, we can replace $\sum_{j=1}^{N_{L-1}}\bWW^{(L)}_{i,j}\sigma^{'}\left(\bz^{(L-1)}_{j}\right)\bF^{(L-1,I)}_{j}$ with $\bF^{(L, I)}_{i}$:
	\begin{align}
	&\nabla^{2}_{\bx} \bz^{(L)}_{i} \nonumber\\
	&= \left(\bB^{(L-1)}\right)^{T} diag \left(\bF^{(L,L-1)}_{i}\odot\sigma^{''}\left(\bz^{(L-1)}\right)\right)\bB^{(L-1)}\nonumber\\ 
	&+ \sum_{I=1}^{L-2}\left(\bB^{(I)}\right)^{T}diag\bigg(  \bF^{(L, I)}_{i}\odot \sigma^{''}\left(\bz^{(I)}\right)\bigg)\bB^{(I)}\nonumber\\
	&\nabla^{2}_{\bx} \bz^{(L)}_{i} = \sum_{I=1}^{L-1}\left(\bB^{(I)}\right)^{T}diag\bigg(\bF^{(L,I)}_{i}\odot\sigma^{''}\left(\bz^{(I)}\right)\bigg)\bB^{(I)}\nonumber
	\end{align}
	
	\subsection{Proof of Theorem \ref{thm:single_layer_p_n_theorem}}\label{proof:single_layer_p_n_theorem}
	Using Lemma \ref{thm:deep_hessian_closed}, we have the following formula for $\nabla^{2}_{\bx} \left(\bz^{(2)}_{y} - \bz^{(2)}_{t}\right)$:
	\begin{align}
	&\nabla^{2}_{\bx} \left(\bz^{(2)}_{y} - \bz^{(2)}_{t}\right) \nonumber \\
	&= \left(\bWW^{(1)}\right)^{T} diag \bigg(\left(\bWW^{(2)}_{y} - \bWW^{(2)}_{t}\right)\odot\sigma^{''}\left(\bz^{(1)}\right)\bigg)\bWW^{(1)}\nonumber\\
	&= \sum_{i=1}^{N_{1}} \left(\bWW^{(2)}_{y,i} - \bWW^{(2)}_{t,i}\right)\sigma^{''}\left(\bz^{(1)}_{i}\right)\bWW^{(1)}_{i}\big(\bWW^{(1)}_{i}\big)^{T}\label{hessian_single_layer}
	\end{align}
	
	We are also given that the activation function $\sigma$ satisfies the following property:
	\begin{align}
	h_{L} \leq \sigma^{''}(x) \leq h_{U} \quad  \forall x \in \mathbb{R} \label{scalar_hessian_bound}
	\end{align}
	
	\begin{enumerate}[label=(\alph*)]
		\item 
		We have to prove the following linear matrix inequalities (LMIs):
		\begin{align}
		&\bN \preccurlyeq \nabla^{2}_{\bx} \left(\bz^{(2)}_{y} - \bz^{(2)}_{t}\right) \preccurlyeq \bP\qquad \forall \bx \in \mathbb{R}^{D} \label{single_layer_lmis}
		\end{align}
		where $\bP$ and $\bN$ are given as following:
		\begin{align}
		&\bP = \sum_{i=1}^{N_{1}} p_{i}\left(\bWW^{(2)}_{y,i} - \bWW^{(2)}_{t,i}\right)\bWW^{(1)}_{i}\left(\bWW^{(1)}_{i}\right)^{T}\label{bP_eq}\\
		&\bN = \sum_{i=1}^{N_{1}} n_{i}\left(\bWW^{(2)}_{y,i} - \bWW^{(2)}_{t,i}\right)\bWW^{(1)}_{i}\left(\bWW^{(1)}_{i}\right)^{T}\label{bN_eq}\\
		&p_{i} = \left\{\begin{array}{@{}lr@{}}
		h_{U}, & \bWW^{(2)}_{y,i} - \bWW^{(2)}_{t,i} \geq 0\\
		h_{L}, & \bWW^{(2)}_{y,i} - \bWW^{(2)}_{t,i} \leq 0\\
		\end{array}\right\}, \nonumber\\ 
		&n_{i} = \left\{\begin{array}{@{}lr@{}}
		h_{L}, & \bWW^{(2)}_{y,i} - \bWW^{(2)}_{t,i} \geq 0\\
		h_{U}, & \bWW^{(2)}_{y,i} - \bWW^{(2)}_{t,i} \leq 0\\
		\end{array}\right\}\label{p_i_n_i_equation}
		\end{align}
		
		We first prove: $\bN \preccurlyeq \nabla^{2}_{\bx}\left(\bz^{(2)}_{y} - \bz^{(2)}_{t}\right) \quad \forall \bx \in \mathbb{R}^{D}$:\\ 
		We substitute $\nabla^{2}_{\bx}\left(\bz^{(2)}_{y} - \bz^{(2)}_{t}\right)$ and $\bN$ from equations \eqref{hessian_single_layer} and \eqref{bN_eq} respectively in $\nabla^{2}_{\bx}\left(\bz^{(2)}_{y} - \bz^{(2)}_{t}\right) - \bN$:
		\begin{align*}
		&\nabla^{2}_{\bx}\left(\bz^{(2)}_{y} - \bz^{(2)}_{t}\right) - \bN \nonumber\\
		&= \sum_{i=1}^{N_{1}} \left(\bWW^{(2)}_{y,i} - \bWW^{(2)}_{t,i}\right)\left(\sigma^{''}\left(\bz^{(1)}_{i}\right) - n_{i}\right) \bWW^{(1)}_{i}\left(\bWW^{(1)}_{i}\right)^{T}
		\end{align*}
		Thus $\nabla^{2}_{\bx}\left(\bz^{(2)}_{y} - \bz^{(2)}_{t}\right) - \bN$ is a weighted sum of symmetric rank one matrices i.e, $\bWW^{(1)}_{i}\left(\bWW^{(1)}_{i}\right)^{T}$ and it is PSD if and only if coefficient of each rank one matrix i.e, $\left(\bWW^{(2)}_{y,i} - \bWW^{(2)}_{t,i}\right)\left(\sigma^{''}\left(\bz^{(1)}_{i}\right) - n_{i}\right)$ is positive. Using equations \eqref{scalar_hessian_bound} and \eqref{p_i_n_i_equation}, we have the following:
		\begin{align}
		&\left(\bWW^{(2)}_{y,i} - \bWW^{(2)}_{t,i}\right) \geq 0 \implies  n_{i}=h_{L} \nonumber\\
		&\implies \left(\sigma^{''}\left(\bz^{(1)}_{i}\right) - n_{i}\right) \geq 0\qquad \forall i \in [N_{1}],\ \forall \bx \in \mathbb{R}^{D}\nonumber\\
		&\left(\bWW^{(2)}_{y,i} - \bWW^{(2)}_{t,i}\right) \leq 0 \implies  n_{i}=h_{U} \nonumber\\
		&\implies \left(\sigma^{''}\left(\bz^{(1)}_{i}\right) - n_{i}\right) \leq 0\qquad \forall i \in [N_{1}],\ \forall \bx \in \mathbb{R}^{D}\nonumber
		\end{align}
		Putting the above results together we have:
		\begin{align}
		&\left(\bWW^{(2)}_{y,i} - \bWW^{(2)}_{t,i}\right)\left(\sigma^{''}\left(\bz^{(1)}_{i}\right) - n_{i}\right) \geq 0\nonumber\\ 
		&\forall i \in [N_{1}],\ \forall \bx \in \mathbb{R}^{D} \label{n_always_pos}
		\end{align}
		Thus $\nabla^{2}_{\bx}\left(\bz^{(2)}_{y} - \bz^{(2)}_{t}\right) - \bN$ is a PSD matrix i.e:
		\begin{align}
		&\nabla^{2}_{\bx}\left(\bz^{(2)}_{y} - \bz^{(2)}_{t}\right) - \bN \nonumber\\
		&= \sum_{i=1}^{N_{1}} \underbrace{\left(\bWW^{(2)}_{y,i} - \bWW^{(2)}_{t,i}\right)\left(\sigma^{''}\left(\bz^{(1)}_{i}\right) - n_{i}\right)}_{\text{always positive using eq.}\ \eqref{n_always_pos}} \bWW^{(1)}_{i}\left(\bWW^{(1)}_{i}\right)^{T} \nonumber\\ 
		&\implies \bN \preccurlyeq \nabla^{2}_{\bx}\left(\bz^{(2)}_{y} - \bz^{(2)}_{t}\right)\qquad \forall \bx \in \mathbb{R}^{D}\label{neg_proven}
		\end{align}
		Now we prove that $\nabla^{2}_{\bx}\left( \bz^{(2)}_{y} - \bz^{(2)}_{t}\right) \preccurlyeq \bP \quad \forall \bx \in \mathbb{R}^{D}$:\\
		We substitute $\nabla^{2}_{\bx}\left(\bz^{(2)}_{y} - \bz^{(2)}_{t}\right)$ and $\bP$ from equations \eqref{hessian_single_layer} and \eqref{bN_eq} respectively in $\bP - \nabla^{2}_{\bx}\left(\bz^{(2)}_{y} - \bz^{(2)}_{t}\right)$:
		\begin{align*}
		&\bP - \nabla^{2}_{\bx}\left(\bz^{(2)}_{y} - \bz^{(2)}_{t}\right) \nonumber\\
		&= \sum_{i=1}^{N_{1}} \left(\bWW^{(2)}_{y,i} - \bWW^{(2)}_{t,i}\right)\left(p_{i} - \sigma^{''}\left(\bz^{(1)}_{i}\right)\right) \bWW^{(1)}_{i}\big(\bWW^{(1)}_{i}\big)^{T}
		\end{align*}
		Thus $\bP - \nabla^{2}_{\bx}\left(\bz^{(2)}_{y} - \bz^{(2)}_{t}\right)$ is a weighted sum of symmetric rank one matrices i.e, $\bWW^{(1)}_{i}\left(\bWW^{(1)}_{i}\right)^{T}$ and it is PSD if and only if coefficient of each rank one matrix i.e, $\left(\bWW^{(2)}_{y,i} - \bWW^{(2)}_{t,i}\right)\left(p_{i} - \sigma^{''}\left(\bz^{(1)}_{i}\right)\right)$ is positive. Using equations \eqref{scalar_hessian_bound} and \eqref{p_i_n_i_equation}, we have the following:
		\begin{align}
		&\left(\bWW^{(2)}_{y,i} - \bWW^{(2)}_{t,i}\right) \geq 0 \implies  p_{i}=h_{U} \nonumber\\
		&\implies \left(p_{i} - \sigma^{''}\left(\bz^{(1)}_{i}\right)\right) \geq 0\qquad\forall i\in N_{1},\ \bx \in \mathbb{R}^{D}\nonumber\\
		&\left(\bWW^{(2)}_{y,i} - \bWW^{(2)}_{t,i}\right) \leq 0 \implies  p_{i}=h_{L} \nonumber\\
		&\implies \left(p_{i} - \sigma^{''}\left(\bz^{(1)}_{i}\right)\right) \leq 0\qquad\forall i\in N_{1},\ \bx \in \mathbb{R}^{D}\nonumber
		\end{align}
        Putting the above results together we have:
		\begin{align}
		&\implies \left(\bWW^{(2)}_{y,i} - \bWW^{(2)}_{t,i}\right)\left(p_{i} - \sigma^{''}\left(\bz^{(1)}_{i}\right)\right) \geq 0\nonumber\\ 
		&\forall i \in [N_{1}],\ \bx \in \mathbb{R}^{D} \label{p_always_pos}
		\end{align}
		Thus $\bP - \nabla^{2}_{\bx}\left(\bz^{(2)}_{y} - \bz^{(2)}_{t}\right)$ is PSD matrix i.e:
		\begin{align}
		&\bP - \nabla^{2}_{\bx}\left(\bz^{(2)}_{y} - \bz^{(2)}_{t}\right) \nonumber\\
		&= \sum_{i=1}^{N_{1}} \underbrace{\left(\bWW^{(2)}_{y,i} - \bWW^{(2)}_{t,i}\right)\left(p_{i} - \sigma^{''}\left(\bz^{(1)}_{i}\right)\right)}_{\text{always positive using eq.}\ \eqref{p_always_pos}} \bWW^{(1)}_{i}\left(\bWW^{(1)}_{i}\right)^{T} \nonumber\\ 
		&\implies \bP \succcurlyeq \nabla^{2}_{\bx}\left(\bz^{(2)}_{y} - \bz^{(2)}_{t}\right)\qquad \forall \bx \in \mathbb{R}^{D}\label{pos_proven}
		\end{align}
		Thus by proving the LMIs \eqref{neg_proven} and \eqref{pos_proven}, we prove \eqref{single_layer_lmis}.
		\item 
		We have to prove that if $h_{U} \geq 0$ and $h_{L} \leq 0$, $\bP$ is a PSD matrix, $\bN$ is a NSD matrix.\\
		We are given $h_{U} \geq 0$,\ $h_{L} \leq 0$.
		Using equation \eqref{p_i_n_i_equation}, we have the following:
		\begin{align}
		&\left(\bWW^{(2)}_{y,i} - \bWW^{(2)}_{t,i}\right) \geq 0 \implies  p_{i}=h_{U} \geq 0 \nonumber\\
		&\implies p_{i}\left(\bWW^{(2)}_{y,i} - \bWW^{(2)}_{t,i}\right) \geq 0\nonumber\\
		&\left(\bWW^{(2)}_{y,i} - \bWW^{(2)}_{t,i}\right) \leq 0 \implies  p_{i}=h_{L} \leq 0 \nonumber \\
		&\implies p_{i}\left(\bWW^{(2)}_{y,i} - \bWW^{(2)}_{t,i}\right) \geq 0\nonumber
		\end{align}
		Putting these results together we have:
		\begin{align}
		&\implies p_{i}\left(\bWW^{(2)}_{y,i} - \bWW^{(2)}_{t,i}\right) \geq 0\qquad \forall i \in [N_{1}] \label{p_wdiff}
		\end{align}
		Thus $\bP$ is a weighted sum of symmetric rank one matrices i.e, $\bWW^{(1)}_{i}\left(\bWW^{(1)}_{i}\right)^{T}$ and each coefficient $p_{i}\left(\bWW^{(2)}_{y,i} - \bWW^{(2)}_{t,i}\right)$ is positive.
		\begin{align}
		&\bP = \sum_{i=1}^{N_{1}} \underbrace{p_{i}\left(\bWW^{(2)}_{y,i} - \bWW^{(2)}_{t,i}\right)}_{\text{always positive using eq. \eqref{p_wdiff}} }\bWW^{(1)}_{i}\left(\bWW^{(1)}_{i}\right)^{T} \succcurlyeq 0\nonumber
		\end{align}
		Using equation \eqref{p_i_n_i_equation}, we have the following:
		\begin{align}
		&\left(\bWW^{(2)}_{y,i} - \bWW^{(2)}_{t,i}\right) \geq 0 \implies  n_{i}=h_{L} \leq 0 \nonumber\\
		&\implies n_{i}\left(\bWW^{(2)}_{y,i} - \bWW^{(2)}_{t,i}\right) \leq 0\nonumber\\
		&\left(\bWW^{(2)}_{y,i} - \bWW^{(2)}_{t,i}\right) \leq 0 \implies  n_{i}=h_{U} \geq 0 \nonumber\\
		&\implies n_{i}\left(\bWW^{(2)}_{y,i} - \bWW^{(2)}_{t,i}\right) \leq 0\nonumber
		\end{align}
        Putting these results together we have:
		\begin{align}
		&\implies n_{i}\left(\bWW^{(2)}_{y,i} - \bWW^{(2)}_{t,i}\right) \geq 0\qquad \forall i \in [N_{1}] \label{n_wdiff}
		\end{align}
		\begin{align}
		&\bN = \sum_{i=1}^{N_{1}} \underbrace{n_{i}\left(\bWW^{(2)}_{y,i} - \bWW^{(2)}_{t,i}\right)}_{\text{always positive using eq. \eqref{n_wdiff}} }\bWW^{(1)}_{i}\left(\bWW^{(1)}_{i}\right)^{T} \preccurlyeq 0\nonumber
		\end{align}
		Thus $\bP$ is a PSD and $\bN$ is a NSD matrix if $h_{U} \geq 0$ and $h_{L} \leq 0$.
		\item 
		
		We have to prove the following global bounds on the eigenvalues of $\nabla^{2}_{\bx}(\bz^{(2)}_{y} - \bz^{(2)}_{t})$:
		\begin{align}
		&m\bI \preccurlyeq \nabla^{2}_{\bx}\left(\bz^{(2)}_{y} - \bz^{(2)}_{t}\right) \preccurlyeq M\bI,\nonumber\\
		&\text{where } M = \max_{\|\bv\|=1} \bv^{T}\bP\bv,\ m = \min_{\|\bv\|=1} \bv^{T}\bN\bv \nonumber
		\end{align}
		Since $\nabla_{\bx}^{2}\left(\bz^{(2)}_{y} - \bz^{(2)}_{t}\right) \preccurlyeq \bP \quad \forall \bx \in \mathbb{R}^{D}$:
		\begin{align} &\bv^{T}\left[\nabla_{\bx}^{2}\left(\bz^{(2)}_{y} - \bz^{(2)}_{t}\right)\right]\bv \leq \bv^{T}\bP\bv \nonumber\\
		&\forall \bv \in \mathbb{R}^{D},\ \forall \bx \in \mathbb{R}^{D} \label{all_v_x_P}
		\end{align}
		Let $\bv^{*}$,  $\bx^{*}$ be vectors such that:
		\begin{align*}
		&(\bv^{*})^{T}\left[\nabla_{\bx^{*}}^{2}\left(\bz^{(2)}_{y} - \bz^{(2)}_{t}\right)\right]\bv^{*} \\
		&= \max_{\bx} \max_{\|\bv\|=1}\bv^{T}\left[\nabla_{\bx}^{2}\left(\bz^{(2)}_{y} - \bz^{(2)}_{t}\right)\right]\bv
		\end{align*}
		Thus using inequality \eqref{all_v_x_P}:
		\begin{align}
		&(\bv^{*})^{T}\left[\nabla_{\bx^{*}}^{2}\left(\bz^{(2)}_{y} - \bz^{(2)}_{t}\right)\right]\bv^{*} \leq \max_{\|\bv\|=1}\bv^{T}\bP\bv\label{p_M_proven}
		\end{align}
		Since $\bN \preccurlyeq\nabla_{\bx}^{2}\left(\bz^{(2)}_{y} - \bz^{(2)}_{t}\right) \quad \forall \bx \in \mathbb{R}^{D}$:
		\begin{align} 
		&\bv^{T}\bN\bv \leq \bv^{T}\left[\nabla_{\bx}^{2}\left(\bz^{(2)}_{y} - \bz^{(2)}_{t}\right)\right]\bv \nonumber \\ 
		&\forall \bv \in \mathbb{R}^{D},\ \forall \bx \in \mathbb{R}^{D} \label{all_v_x_N}
		\end{align}
		Let $\bv^{*}$,  $\bx^{*}$ be vectors such that:
		\begin{align*}
		&(\bv^{*})^{T}\left[\nabla_{\bx^{*}}^{2}\left(\bz^{(2)}_{y} - \bz^{(2)}_{t}\right)\right]\bv^{*} \\
		&= \min_{\bx} \min_{\|\bv\|=1}\bv^{T}\left[\nabla_{\bx}^{2}\left(\bz^{(2)}_{y} - \bz^{(2)}_{t}\right)\right]\bv
		\end{align*}
		Thus using inequality \eqref{all_v_x_N}:
		\begin{align}
		&(\bv^{*})^{T}\left[\nabla_{\bx^{*}}^{2}\left(\bz^{(2)}_{y} - \bz^{(2)}_{t}\right)\right]\bv^{*} \geq \min_{\|\bv\|=1}\bv^{T}\bN\bv\label{n_m_proven}
		\end{align}
		Using the inequalities \eqref{p_M_proven} and \eqref{n_m_proven}, we get:
		$$ m\bI \preccurlyeq \nabla^{2}_{\bx}\left(\bz^{(2)}_{y} - \bz^{(2)}_{t}\right) \preccurlyeq M\bI$$ 
		where $M = \max_{\|\bv\|=1} \bv^{T}\bP\bv,\ m = \min_{\|\bv\|=1} \bv^{T}\bN\bv$
	\end{enumerate}
	
	\subsection{Proof of Theorem \ref{thm:L_layer_p_n_theorem}}\label{proof:L_layer_p_n_theorem}
	We are given that the activation function $\sigma$ is such that $\sigma^{'},\ \sigma^{''}$ are bounded, i.e:
	\begin{align}
	&|\sigma^{'}(x)| \leq g,\ |\sigma^{''}(x)| \leq h \qquad \forall x \in \mathbb{R}\label{grad_hess_bound}
	\end{align}	
	We have to prove the following:
	\begin{align}
	&\left\|\nabla^{2}_{\bx} \bz^{(L)}_{i}\right\| \leq h\sum_{I=1}^{L-1} \left(r^{(I)}\right)^{2}\max_{j}\left(\bS^{(I)}_{i,j}\right)\ \ \ \forall \bx \in \mathbb{R}^{D}  \nonumber
	\end{align}
	where $\bS^{(L,I)}$ is a matrix of size $N_{L} \times N_{I}$ defined as follows:
	\begin{align}
	\bS^{(L,I)} &=
	\begin{dcases*}
	\left|\bWW^{(L)}\right| &\quad $I = L-1$\\
	g \left|\bWW^{(L)}\right|\bS^{(L-1,I)}
	&\quad $I \in [L-2]$
	\end{dcases*}\label{bq_def}
	\end{align}
	and $r^{(I)}$ is a scalar defined as follows:
	\begin{align}
	r^{(I)} &=
	\begin{dcases*}
	\left\|\bWW^{(1)}\right\| &\quad $I=1$\\
	g\left\|\bWW^{(I)}\right\|r^{(I-1)} &\quad $I \in [2, L-1]$
	\end{dcases*}\label{br_def}
	\end{align}
	We will prove the same in $3$ steps. \\
	In step (a), we will prove:
	\begin{align}
	&\left|\bF^{(L, I)}_{i,j}\right| \leq \bS^{(L,I)}_{i,j}\qquad \forall \bx \in \mathbb{R}^{D} \label{diag_vec_prove}	
	\end{align}
	In step (b), we will prove:
	\begin{align}
	\left\|\bB^{(I)}\right\| \leq r^{(I)},\qquad \forall \bx \in \mathbb{R}^{D}\label{B_norm_res}
	\end{align}
	In step (c), we will use (a) and (b) to prove:
	\begin{align}
	\left\|\nabla^{2}_{\bx} \bz^{(L)}_{i}\right\| \leq   h\sum_{I=1}^{L-1} \left(r^{(I)}\right)^{2}\max_{j}\left(\bS^{(L,I)}_{i,j}\right)\label{hess_norm_general_bound}
	\end{align}	
	
	Note that $\bB^{(I)}$ and $\bF^{(L,I)}$ are defined using \eqref{B_def} and \eqref{F_def} respectively. \\

	\begin{enumerate}[label=(\alph*)]
		\item We have to prove that for $L \geq 2,\ I \in [L-1],\ i \in N_{L},\ j \in N_{I}$:
		\begin{align}
		\left|\bF^{(L, I)}_{i,j}\right| \leq \bS^{(L,I)}_{i,j}\qquad \forall \bx \in \mathbb{R}^{D} \nonumber		
		\end{align}
		where $\bS^{(L,I)}$ is a matrix of size $N_{I} \times N_{J}$ defined as follows:
		\begin{align*}
		\bS^{(L,I)} &=
		\begin{dcases*}
		\left|\bWW^{(L)}\right| &\quad $I = L-1$\\
		g \left|\bWW^{(L)}\right|\bS^{(L-1,J)}
		&\quad $I \in [L-2]$
		\end{dcases*}
		\end{align*}
		
		We first prove the case when $I = L-1$. \\
		Using equation \eqref{F_base}, 
		$\bF^{(L, L-1)}_{i,j} = \bWW^{(L)}_{i,j}$.\\
		Since 
		$\bS^{(L,L-1)}_{i,j} = \left|\bWW^{(L)}_{i,j}\right|$:
		$$
		\left|\bF^{(L, L-1)}_{i,j}\right| = \bS^{(L,L-1)}_{i,j}$$
		Hence for $L\ge2,\ I=L-1$, we have equality in \eqref{diag_vec_prove}. Hence proved.\\
		Now, we will use proof by induction. \\	
		To prove the base case $L=2$, note that $I = L-1 = 1$ is the only possible value for $I$. Thus, using the result for $I=L-1$, the theorem holds for $L=2$. This proves the base case.\\
		Now we assume the induction hypothesis is true for depth $=L-1,\ I \in [L-2]$. and prove for depth $=L,\ I \in [L-1]$. Since for $I=L-1$, we have proven already, we prove for $I \leq L-2$.\\
		Using equation \eqref{F_induction_i}, we have the following formula for $\bF^{(L,I)}_{i}$: 
		\begin{align}
		&\bF^{(L,I)}_{i} = \sum_{k=1}^{N_{L-1}} \bWW^{(L)}_{i,k}\sigma^{'}\left(\bz^{(L-1)}_{k}\right)\bF^{(L-1, I)}_{k}\nonumber
		\end{align}
		Taking the $j^{th}$ element of the vectors on both sides:
		\begin{align}
		&\bF^{(L,I)}_{i,j} = \sum_{k=1}^{N_{L-1}} \bWW^{(L)}_{i,k} \sigma^{'}\left(\bz^{(L-1)}_{k}\right)\bF^{(L-1, I)}_{k,j}\label{F_i_j_eq}
		\end{align}
		By induction hypothesis, we know that:
		\begin{align}
		\left|\bF^{(L-1, I)}_{k,j}\right| \leq \bS^{(L-1, I)}_{k,j}\label{ind_hypo_F_kj}
		\end{align}
		Using the absolute value properties for equation \eqref{F_i_j_eq}, we have:
		\begin{align}
		&\left|\bF^{(L,I)}_{i,j}\right| = 
		\left|\sum_{k=1}^{N_{L-1}} \bWW^{(L)}_{i,k} \sigma^{'}\left(\bz^{(L-1)}_{k}\right)\bF^{(L-1, I)}_{k,j}\right|\nonumber\\
		&\left|\bF^{(L,I)}_{i,j}\right| \leq 
		\sum_{k=1}^{N_{L-1}} \left|\bWW^{(L)}_{i,k} \sigma^{'}\left(\bz^{(L-1)}_{k}\right)\bF^{(L-1, I)}_{k,j}\right|\nonumber\\
		&\left|\bF^{(L,I)}_{i,j}\right| \leq 
		\sum_{k=1}^{N_{L-1}} \left|\bWW^{(L)}_{i,k}\right| \left|\sigma^{'}\left(\bz^{(L-1)}_{k}\right)\right|\left|\bF^{(L-1, I)}_{k,j}\right|\nonumber
		\end{align}
		Using $|\sigma^{'}(x)| \leq g\quad \forall x \in \mathbb{R}\ $ (inequality \eqref{grad_hess_bound}) :
		$$\left|\bF^{(L,I)}_{i,j}\right| \leq 
		g\sum_{k=1}^{N_{L-1}} \left|\bWW^{(L)}_{i,k}\right|\left|\bF^{(L-1, I)}_{k,j}\right|\nonumber$$
		Using the induction hypothesis (inequality \eqref{ind_hypo_F_kj}): 
		$$\left|\bF^{(L,I)}_{i,j}\right| \leq 
		g\sum_{k=1}^{N_{L-1}} \left|\bWW^{(L)}_{i,k}\right|\left|\bS^{(L-1, I)}_{k,j}\right|\nonumber$$
		Using equation \eqref{bq_def} for definition of $\bS^{(L,I)}_{i,j}$:
		$$\left|\bF^{(L,I)}_{i,j}\right| \leq \bS^{(L, I)}_{i,j}\nonumber$$
		Hence we prove \eqref{diag_vec_prove} for all $L \ge 2$ and $I \leq L-1$ using induction.
		\item 
		We have to prove that for $1 \leq I \leq M-1$:
		\begin{align}
		\left\|\bB^{(I)}\right\| \leq r^{(I)},\qquad \forall \bx \in \mathbb{R}^{D}\nonumber
		\end{align}
		where $r^{(I)}$ is a scalar given as follows:
		\begin{align}
		r^{(I)} &=
		\begin{dcases*}
		\left\|\bWW^{(1)}\right\| &\quad $I=1$\\
		g\left\|\bWW^{(I)}\right\|r^{(I-1)} &\quad $I \in [2, L-1]$
		\end{dcases*}\nonumber
		\end{align}

		Using equation \eqref{B_base}, for $I=1$ we have: 
		\begin{align}
		\left\|\bB^{(1)}\right\| &= \left\|\bWW^{(1)}\right\| = r^{(1)} \label{B_base_norm}
		\end{align}
		Using equation \eqref{B_induction}, for $I>1$, we have:
		\begin{align}
		\left\|\bB^{(I)}\right\| &= \left\|\bWW^{(I)}diag \left(\sigma^{'}\left(\bz^{(I-1)}\right)\right)\bB^{(I-1)}\right\|\nonumber
		\end{align}
		\begin{align}
		\left\|\bB^{(I)}\right\|&\leq\left\|\bWW^{(I)}\right\|\left\|diag \left(\sigma^{'}\left(\bz^{(I-1)}\right)\right)\right\|\left\|\bB^{(I-1)}\right\|\nonumber
		\end{align}
		Since $\left\|diag \left(\sigma^{'}\left(\bz^{(I-1)}\right)\right)\right\| = \max_{j} \left|\sigma^{'}\left(\bz^{(I-1)}_{j}\right)\right|$, using equation \eqref{grad_hess_bound}:
		\begin{align}
		&\left\|\bB^{(I)}\right\|&\leq g\left\|\bWW^{(I)}\right\| \left\|\bB^{(I-1)}\right\|\leq g\left\|\bWW^{(I)}\right\| r^{(I-1)} \label{B_induction_norm}
		\end{align}
		Using inequalities \eqref{B_base_norm} and \eqref{B_induction_norm}, the proof follows using induction.
		\item 
		We have to prove that:
		\begin{align}
		\left\|\nabla^{2}_{\bx} \bz^{(L)}_{i}\right\| \leq  h\sum_{I=1}^{L-1} \left(r^{(I)}\right)^{2}\max_{j}\left(\bS^{(I)}_{i,j}\right)\nonumber
		\end{align}	
		Using Lemma \ref{thm:deep_hessian_closed}, we have the following equation for $\nabla^{2}_{\bx} \bz^{(L)}_{i} $:
		\begin{align}
		\nabla^{2}_{\bx} \bz^{(L)}_{i} = \sum_{I=1}^{L-1}\left(\bB^{(I)}\right)^{T}diag\bigg(\bF^{(L,I)}_{i} \odot\sigma^{''}\left(\bz^{(I)}\right)\bigg) \bB^{(I)}\nonumber
		\end{align}
		Using the properties of norm we have:
		\begin{align}
		&\left\|\nabla^{2}_{\bx} \bz^{(L)}_{i}\right\| \nonumber\\
		&= \left\|\sum_{I=1}^{L-1}\left(\bB^{(I)}\right)^{T}diag\left(\bF^{(L,I)}_{i} \odot\sigma^{''}\left(\bz^{(I)}\right)\right) \bB^{(I)}\right\|\nonumber\\
		& \leq \sum_{I=1}^{L-1}\left\|diag\left(\bF^{(L,I)}_{i} \odot\sigma^{''}\left(\bz^{(I)}\right)\right)\right\| \left\|\bB^{(I)}\right\|^{2}\nonumber\\
		&\leq 
		\sum_{I=1}^{L-1}\max_{j}\bigg(\left|\bF^{(L,I)}_{i,j} \sigma^{''}\left(\bz^{(I)}_{j}\right)\right|\bigg) \left\|\bB^{(I)}\right\|^{2}\nonumber
		\end{align}
		In the last inequality, we use the property that norm of a diagonal matrix is the maximum absolute value of the diagonal element. Using the product property of absolute value, we get:
		\begin{align}
		\left\|\nabla^{2}_{\bx} \bz^{(L)}_{i} \right\| &\leq 
		\sum_{I=1}^{L-1}\max_{j}\bigg(\left|\bF^{(L,I)}_{i,j}\right|\left|\sigma^{''}\left(\bz^{(I)}_{j}\right)\right|\bigg) \left\|\bB^{(I)}\right\|^{2}\nonumber
		\end{align}
		Since $\left|\bF^{(L,I)}_{i,j}\right|$ and $\left|\sigma^{''}\left(\bz^{(I)}_{j}\right)\right|$ are positive terms:
		\begin{align}
		&\left\|\nabla^{2}_{\bx} \bz^{(L)}_{i} \right\| \nonumber\\ 
		&\leq \sum_{I=1}^{L-1}\max_{j}\bigg(\left|\bF^{(L,I)}_{i,j} \right|\bigg)\max_{j}\bigg(\left|\sigma^{''}\left(\bz^{(I)}_{j}\right)\right|\bigg) \left\|\bB^{(I)}\right\|^{2}\nonumber
		\end{align}
		Since $\left\|\sigma^{''}\right\|$ is bounded by $h$:
		\begin{align}
		\left\|\nabla^{2}_{\bx} \bz^{(L)}_{i} \right\| &\leq 
		h\sum_{I=1}^{L-1}\max_{j}\bigg(\left|\bF^{(L,I)}_{i,j} \right|\bigg) \left\|\bB^{(I)}\right\|^{2}\nonumber
		\end{align}
		Using inequality \eqref{diag_vec_prove}:
		\begin{align}
		\left\|\nabla^{2}_{\bx} \bz^{(L)}_{i} \right\| &\leq 
		h\sum_{I=1}^{L-1}\max_{j}\left(\bS^{(I)}_{i,j}\right) \left\|\bB^{(I)}\right\|^{2}\nonumber
		\end{align}
		Using inequality \eqref{B_norm_res}:
		\begin{align}
		\left\|\nabla^{2}_{\bx} \bz^{(L)}_{i} \right\| &\leq 
		h\sum_{I=1}^{L-1}\left(r^{(I)}\right)^{2}\max_{j}\left(\bS^{(I)}_{i,j}\right)\quad \forall \bx \in \mathbb{R}^{D} \nonumber
		\end{align}
	\end{enumerate}
	
	\subsection{Proof of Theorem \ref{thm:rand_smooth}}\label{proof:rand_smooth}	
	\begin{theorem}\label{thm:rand_smooth} For a binary classifier $f$, let $g$ denote the indicator function such that $g(\bx)=1 \iff f(\bx) > 0,\ g(\bx)=0\ \text{otherwise} $. Let $\hat{g}$ be the function constructed by applying randomized smoothing on $g$ such that:
		$$ \hat{g}\left(\bu\right) = \frac{1}{(2\pi s^{2})^{n/2}} \int_{\mathbb{R}^{D}}g(\bv)\exp\left(-\frac{\|\bv-\bu\|^{2}}{2s^{2}}\right)d\bv $$	
		then the curvature of the resulting function $\hat{g}$ is bounded i.e:
		$$ -\frac{\bI}{s^{2}}  \preccurlyeq \nabla^{2}_{\bu}\ \hat{g} \preccurlyeq \frac{\bI}{s^{2}} $$
	\end{theorem}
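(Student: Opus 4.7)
}
The plan is to differentiate under the integral sign, pushing both derivatives onto the Gaussian kernel (which is smooth and has all moments), and then use the fact that $0 \le g(\bv) \le 1$ to bound the resulting expectation from both sides. After a change of variables $\bw = \bv - \bu$, the smoothed function can be written as
\begin{align*}
\hat{g}(\bu) = \EE_{\bw \sim \cN(0, s^{2}\bI)}\bigl[g(\bu + \bw)\bigr],
\end{align*}
but for the purpose of computing derivatives it is more convenient to differentiate the kernel rather than $g$ (since $g$ is only an indicator and is not differentiable).

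The first step is to differentiate the kernel $\phi_{s}(\bv - \bu) = (2\pi s^{2})^{-n/2} \exp(-\|\bv - \bu\|^{2}/(2s^{2}))$ with respect to $\bu$ twice. A direct computation gives
\begin{align*}
\nabla_{\bu} \phi_{s}(\bv - \bu) &= \frac{\bv - \bu}{s^{2}} \phi_{s}(\bv - \bu),\\
\nabla^{2}_{\bu} \phi_{s}(\bv - \bu) &= \left(\frac{(\bv-\bu)(\bv-\bu)^{T}}{s^{4}} - \frac{\bI}{s^{2}}\right)\phi_{s}(\bv - \bu).
\end{align*}
Since $g$ is bounded in $[0,1]$ and the Gaussian kernel and its first two derivatives are integrable, dominated convergence justifies exchanging $\nabla^{2}_{\bu}$ with the integral, yielding, after the substitution $\bw = \bv - \bu$,
\begin{align*}
\nabla^{2}_{\bu}\hat{g}(\bu) = \EE_{\bw \sim \cN(0, s^{2}\bI)}\!\left[g(\bu+\bw)\!\left(\frac{\bw\bw^{T}}{s^{4}} - \frac{\bI}{s^{2}}\right)\right].
\end{align*}

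The final step is to split this expression into two pieces and bound each using $0 \le g \le 1$. Writing $A(\bu) := \EE_{\bw}[g(\bu+\bw)\,\bw\bw^{T}]/s^{4}$ and $c(\bu) := \EE_{\bw}[g(\bu+\bw)] \in [0,1]$, we have $\nabla^{2}_{\bu}\hat{g}(\bu) = A(\bu) - c(\bu)\bI/s^{2}$. For the upper bound, since $g \le 1$ and $\bw\bw^{T}$ is PSD, $A(\bu) \preccurlyeq \EE_{\bw}[\bw\bw^{T}]/s^{4} = \bI/s^{2}$, so $\nabla^{2}_{\bu}\hat{g}(\bu) \preccurlyeq (1 - c(\bu))\bI/s^{2} \preccurlyeq \bI/s^{2}$. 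For the lower bound, $g \ge 0$ gives $A(\bu) \succcurlyeq 0$, so $\nabla^{2}_{\bu}\hat{g}(\bu) \succcurlyeq -c(\bu)\bI/s^{2} \succcurlyeq -\bI/s^{2}$.

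The main (very mild) obstacle is simply justifying the interchange of differentiation and integration given that $g$ is merely an indicator; this is handled by noting that all the differentiation falls on the Gaussian kernel, whose derivatives of every order are dominated by integrable Gaussian-polynomial envelopes, so no regularity of $g$ beyond boundedness is required. Everything else is an algebraic identity plus a one-line use of $0 \le g \le 1$.
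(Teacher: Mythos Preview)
Your proposal is correct and follows essentially the same route as the paper's proof: differentiate the Gaussian kernel under the integral sign to obtain $\nabla^{2}_{\bu}\hat{g}(\bu)$ as the sum of a PSD piece (from $(\bv-\bu)(\bv-\bu)^{T}/s^{4}$) and an NSD piece (from $-\bI/s^{2}$), then use $0 \le g \le 1$ together with the Gaussian second-moment identity $\EE_{\bw}[\bw\bw^{T}] = s^{2}\bI$ to bound each piece. Your write-up is, if anything, a bit more careful than the paper's (you explicitly invoke dominated convergence and record the sharper intermediate bounds $(1-c(\bu))\bI/s^{2}$ and $-c(\bu)\bI/s^{2}$), but the argument is the same.
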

	\begin{proof}
		\begin{align*}
		&\nabla_{\bu}\ \hat{g}\left(\bu\right) \\
		&= \frac{1}{(2\pi s^{2})^{n/2}}\int_{\mathbb{R}^{D}}g(\bv)\frac{(\bv-\bu)}{s^{2}}\exp\left(-\frac{\|\bv-\bu\|^{2}}{2s^{2}}\right)d\bv \\
		&\nabla^{2}_{\bu}\ \hat{g}\left(\bu\right) \\
		&= \frac{1}{(2\pi s^{2})^{n/2}}\int_{\mathbb{R}^{D}}g(\bv)\frac{-\bI}{s^{2}}\exp\left(-\frac{\|\bv-\bu\|^{2}}{2s^{2}}\right)d\bv\\ &+ \frac{1}{(2\pi s^{2})^{n/2}}\int_{\mathbb{R}^{D}}g(\bv)\frac{(\bv-\bu)(\bv-\bu)^{T}}{s^{4}}\bigg[\\
		&\quad \exp\left(-\frac{\|\bv-\bu\|^{2}}{2s^{2}}\right)\bigg]d\bv
		\end{align*}
		Since $0 \leq g(\bv) \leq 1$,\ $-\bI/s^{2} \preccurlyeq 0$,\ $(\bv-\bu)(\bv-\bu)^{T} \succcurlyeq 0$ and $\exp(x) \geq 0\ \forall x$:
		\begin{align*}
		\nabla^{2}_{\bu}\ \hat{g}\left(\bu\right) &= \frac{1}{(2\pi s^{2})^{n/2}}\int_{\mathbb{R}^{D}}\underbrace{g(\bv)\frac{-\bI}{s^{2}}\exp\left(-\frac{\|\bv-\bu\|^{2}}{2s^{2}}\right)d\bv}_{\text{Negative Semi-Definite}}\\ &+ \frac{1}{(2\pi s^{2})^{n/2}}\int_{\mathbb{R}^{D}}\underbrace{g(\bv)\frac{(\bv-\bu)(\bv-\bu)^{T}}{s^{4}}}_{\text{Positive Semi-Definite}}\bigg[\nonumber\\
		&\exp\left(-\frac{\|\bv-\bu\|^{2}}{2s^{2}}\right)\bigg]d\bv
		\end{align*}
		\begin{align*}
		\nabla^{2}_{\bu}\ \hat{g}\left(\bu\right) &\preccurlyeq  \frac{1}{(2\pi s^{2})^{n/2}}\int_{\mathbb{R}^{D}}\frac{(\bv-\bu)(\bv-\bu)^{T}}{s^{4}}\bigg[\\
		&\exp\left(-\frac{\|\bv-\bu\|^{2}}{2s^{2}}\right)\bigg]d\bv\\
		\nabla^{2}_{\bu}\ \hat{g}\left(\bu\right) &\preccurlyeq \frac{1}{(2\pi s^{2})^{n/2}}\int_{\mathbb{R}^{D}}\frac{\bq\bq^{T}}{s^{4}}\exp\left(-\frac{\|\bq\|^{2}}{2s^{2}}\right)d\bq\\
		\nabla^{2}_{\bu}\ \hat{g}\left(\bu\right) &\preccurlyeq \frac{\bI}{s^{2}}\\
		\nabla^{2}_{\bu}\ \hat{g}\left(\bu\right) &\succcurlyeq  \frac{1}{(2\pi s^{2})^{n/2}}\int_{\mathbb{R}^{D}}\frac{-\bI}{s^{2}}\exp\left(-\frac{\|\bv-\bu\|^{2}}{2s^{2}}\right)d\bv\\
		\nabla^{2}_{\bu}\ \hat{g}\left(\bu\right) &\succcurlyeq -\frac{\bI}{s^{2}}
		\end{align*}
	\end{proof}

	\section{Computing $g,h,$ $h_{U}$ and $h_{L}$ for different activation functions}  \label{grad_hess_bounds_appendix}
	
	\subsection{Softplus activation}
	For softplus activation, we have the following. We use $S(x)$ to denote sigmoid:
	\begin{align*}
	&\sigma(x) = \log(1 + \exp(x))\\
	&\sigma^{'}(x) = S(x)\\
	&\sigma^{''}(x) = S(x)(1-S(x))
	\end{align*}
	To bound $S(x)(1-S(x))$, let $\alpha$ denote $S(x)$. We know that $0\leq \alpha \leq 1$: \\
	$$\alpha(1-\alpha) = \frac{1}{4} - \bigg(\frac{1}{2} - \alpha\bigg)^{2}$$
	Thus, $S(x)(1-S(x))$ is maximum at $S(x)=1/2$ and minimum at $S(x)=0$ and $S(x)=1$. The maximum value is 0.25 and minimum value is 0.
	$$ 0 \leq S(x)(1-S(x)) \leq 0.25 \implies 0 \leq \sigma^{''}(x) \leq 0.25$$
	Thus, $h_{U}=0.25,\ h_{L}=0$ (for use in Theorem \ref{thm:single_layer_p_n_theorem}) and $g=1,\ h=0.25$ (for use in Theorem \ref{thm:L_layer_p_n_theorem}). 
	\subsection{Sigmoid activation}
	For sigmoid activation, we have the following. We use $S(x)$ to denote sigmoid:
	\begin{align*}
	&\sigma(x) = S(x) = \frac{1}{1 + \exp(-x)}\\
	&\sigma^{'}(x) = S(x)(1-S(x))\\
	&\sigma^{''}(x) = S(x)(1-S(x))(1-2S(x))
	\end{align*}
	The second derivative of sigmoid $(\sigma^{''}(x))$ can be bounded using standard differentiation. Let $\alpha$ denote $S(x)$. We know that $0\leq \alpha \leq 1$:\\
	$$ h_{L} \leq \sigma^{''}(x) \leq h_{U}$$
	$$ h_{L} = \min_{0 \leq \alpha \leq 1} \alpha(1-\alpha)(1-2\alpha) $$
	$$ h_{U} = \max_{0 \leq \alpha \leq 1} \alpha(1-\alpha)(1-2\alpha) $$
	To solve for both $h_{L}$ and $h_{U}$, we first differentiate $\alpha(1-\alpha)(1-2\alpha)$ with respect to $\alpha$:
	\begin{align*}
	 &\grad_{\alpha}\left(\alpha(1-\alpha)(1-2\alpha)\right) = \grad_{\alpha}\left(2\alpha^{3} - 3\alpha^{2} + \alpha\right) \\
	 &= \left(6\alpha^{2} - 6\alpha + 1\right)
	 \end{align*}
	Solving for $6\alpha^{2} - 6\alpha + 1 = 0$, we get the solutions:
	$$\alpha = \bigg(\frac{3 + \sqrt{3}}{6}\bigg), \bigg(\frac{3 - \sqrt{3}}{6}\bigg)$$
	Since both $(3 + \sqrt{3}/6), (3 - \sqrt{3}/6)$ lie between 0 and 1, we check for the second derivatives:
	\begin{align*}
	\grad^{2}_{\alpha}\left(\alpha(1-\alpha)(1-2\alpha)\right) = \grad_{\alpha}\left(6\alpha^{2} - 6\alpha + 1\right) \\
	= 12\alpha-6 = 6(2\alpha-1)
	\end{align*}
	At $\alpha = (3 + \sqrt{3})/6$, $ \grad^{2}_{\alpha} = 6(2\alpha-1) = 2\sqrt{3} > 0$.\\
	At $\alpha = (3 - \sqrt{3})/6$, $ \grad^{2}_{\alpha} = 6(2\alpha-1) = -2\sqrt{3} < 0$.\\
	Thus $\alpha = (3 + \sqrt{3})/6$ is a local minima, $\alpha = (3 - \sqrt{3})/6$ is a local maxima.\\
	Substituting the two critical points into $\alpha(1-\alpha)(1-2\alpha)$, we get $h_{U} = 9.623 \times 10^{-2}$, $h_{L} = -9.623 \times 10^{-2}$.\\
	Thus, $h_{U}=9.623 \times 10^{-2},\ h_{L}=-9.623 \times 10^{-2}$ (for use in Theorem \ref{thm:single_layer_p_n_theorem}) and $g=0.25,\ h=0.09623$ (for use in Theorem \ref{thm:L_layer_p_n_theorem}). 
	
	\subsection{Tanh activation}
	For tanh activation, we have the following:
	\begin{align*}
	&\sigma(x) = \tanh(x) = \frac{\exp(x) - \exp(-x)}{\exp(x) + \exp(-x)}\\
	&\sigma^{'}(x) = \left(1 - \tanh(x)\right)\left(1 + \tanh(x)\right)\\
	&\sigma^{''}(x) = -2\tanh(x)\left(1 - \tanh(x)\right)\left(1 + \tanh(x)\right)
	\end{align*}
	The second derivative of tanh , i.e $(\sigma^{''}(x))$ can be bounded using standard differentiation. Let $\alpha$ denote $\tanh(x)$. We know that $-1\leq \alpha\leq 1$:
	$$ h_{L} \leq \sigma^{''}(x) \leq h_{U}$$
	$$ h_{L} = \min_{0 \leq \alpha \leq 1} -2\alpha(1-\alpha)(1+\alpha) $$
	$$ h_{U} = \max_{0 \leq \alpha \leq 1} -2\alpha(1-\alpha)(1+\alpha) $$
	To solve for both $h_{L}$ and $h_{U}$, we first differentiate $-2\alpha(1-\alpha)(1+\alpha)$ with respect to $\alpha$:
	$$ \grad_{\alpha}\left(-2\alpha(1-\alpha)(1+\alpha)\right) = \grad_{\alpha}\left(2\alpha^{3} - 2\alpha\right) = \left(6\alpha^{2} - 2\right)$$
	Solving for $6\alpha^{2} - 2 = 0$, we get the solutions:
	$$\alpha = -\frac{1}{\sqrt{3}}, \frac{1}{\sqrt{3}}$$
	Since both $-1/\sqrt{3}, 1/\sqrt{3}$ lie between -1 and 1, we check for the second derivatives:
	$$ \grad^{2}_{\alpha}\left(-2\alpha(1-\alpha)(1+\alpha)\right) = \grad_{\alpha}\left(6\alpha^{2} - 2\right) = 12\alpha$$
	At $\alpha = -1/\sqrt{3}$, $ \grad^{2}_{\alpha} = 12\alpha = -4\sqrt{3} < 0$.\\
	At $\alpha = 1/\sqrt{3}$, $ \grad^{2}_{\alpha} = 12\alpha = 4\sqrt{3} > 0$.\\
	Thus $\alpha = 1/\sqrt{3}$ is a local minima, $\alpha = -1/\sqrt{3}$ is a local maxima.\\
	Substituting the two critical points into $-2\alpha(1-\alpha)(1+\alpha)$, we get $h_{U} = 0.76981$, $h_{L} = -0.76981$.\\
	Thus, $h_{U}=0.76981,\ h_{L}=-0.76981$ (for use in Theorem \ref{thm:single_layer_p_n_theorem}) and $g=1,\ h=0.76981$ (for use in Theorem \ref{thm:L_layer_p_n_theorem}). 
	
	\section{Quadratic bounds for two-layer ReLU networks} \label{relu_quad_bound}
	For a 2 layer network with ReLU activation, such that the input $\bx$ lies in the ball $\left\|\bx - \bx^{(0)}\right\| \leq \rho$, we can compute the bounds over $\bz^{(1)}$ directly:
	\begin{align*}
	&\bWW^{(1)}_{i}\bx^{(0)} + \bb^{(1)}_{i} - \rho\left\|\bWW^{(1)}_{i}\right\| \leq \bz^{(1)}_{i} \\
	&\bz^{(1)}_{i} \leq \bWW^{(1)}_{i}\bx^{(0)} + \bb^{(1)}_{i} + \rho\left\|\bWW^{(1)}_{i}\right\| 
	\end{align*}
	Thus we can get a lower bound and upper bound for each $\bz^{(1)}_{i}$. We define $d_{i}$ and $u_{i}$ as the following:
	\begin{align}
	&d_{i} = \bWW^{(1)}_{i}\bx^{(0)} + \bb^{(1)}_{i} - \rho\left\|\bWW^{(1)}_{i}\right\| \label{l_bound_relu}\\ 
	&u_{i} = \bWW^{(1)}_{i}\bx^{(0)} + \bb^{(1)}_{i} + \rho\left\|\bWW^{(1)}_{i}\right\| \label{u_bound_relu}
	\end{align}
	We can derive the following quadratic lower and upper bounds for each $\ba^{(1)}_{i}$:
	\begin{align*}
	&\ba^{(1)}_{i} \leq 
	\begin{dcases}
	\frac{-d_{i}}{(u_{i}-d_{i})^2}\left(\bz^{(1)}_{i}\right)^2 + \frac{u_{i}^2 + d_{i}^2}{(u_{i}-d_{i})^2}\bz^{(1)}_{i} - \frac{u_{i}^2d_{i}}{(u_{i}-d_{i})^2} \\ \qquad \qquad \qquad \qquad \qquad \qquad \qquad \text{ if } |d_{i}| \leq |u_{i}|\\
	\frac{u_{i}}{(u_{i}-d_{i})^2}\left(\bz^{(1)}_{i}\right)^2 - \frac{2u_{i}d_{i}}{(u_{i}-d_{i})^2}\bz^{(1)}_{i} + \frac{u_{i}d_{i}^2}{(u_{i}-d_{i})^2} \\ \qquad \qquad \qquad \qquad \qquad \qquad \qquad \text{if } |d_{i}| \geq |u_{i}|
	\end{dcases}\\
	&\ba^{(1)}_{i} \geq 
	\begin{dcases}
	0 \qquad &2|d_{i}| \leq |u_{i}|\\
	\bz^{(1)}_{i} \qquad &|d_{i}| \geq 2|u_{i}|\\
	\frac{1}{u_{i} - d_{i}}\left(\bz^{(1)}_{i}\right)^{2} - \frac{d_{i}}{u_{i}-d_{i}}\bz^{(1)}_{i} \qquad & \text{otherwise}
	\end{dcases}
	\end{align*}
	The above steps are exactly the same as the quadratic upper and lower bounds used in \cite{NIPS2018_7742}.\\
	Using the above two inequalities and the identity: 
	$$\bz^{(2)}_{y}  - \bz^{(2)}_{t} = \sum_{i=1}^{N_{1}}\left(\bWW^{(2)}_{y,i} - \bWW^{(2)}_{t,i}\right) \ba^{(1)}_{i} $$ 
	we can compute a quadratic lower bound for $\bz^{(2)}_{y}  - \bz^{(2)}_{t}$ in terms of $\bz^{(1)}_{i}$ by taking the lower bound for $\ba^{(1)}_{i}$ when
	$\left(\bWW^{(2)}_{y,i} - \bWW^{(2)}_{t,i}\right) > 0$ and upper bound when $\left(\bWW^{(2)}_{y,i} - \bWW^{(2)}_{t,i}\right) <= 0$. Furthermore since $\bz^{(1)}_{i} = \bWW^{(1)}_{i}\bx + \bb^{(1)}_{i}$, we can express the resulting quadratic in terms of $\bx$. Thus, we get the following quadratic function :
	$$\bz^{(2)}_{y}  - \bz^{(2)}_{t} \geq \frac{1}{2}\bx^{T}\bP\bx + \bq + r $$ 
	The coefficients $\bP$, $\bq$ and $r$ can be determined using the above procedure. Note that unlike in \cite{NIPS2018_7742}, RHS can be a non-convex function.\\ 
	Thus, it becomes an optimization problem where the goal is to minimize the distance $1/2\left\|\bx - \bx^{(0)}\right\|^{2}$ subject to RHS (which is quadratic in $\bx$) being zero. That is both our objective and constraint are quadratic functions. In the optimization literature, this is called the S-procedure and is one of the few non-convex problems that can be solved efficiently \cite{Boyd:2004:CO:993483}.\\
	We start with two initial values called $\rho_{low}$ (initialized to 0) and $\rho_{high}$ (initialized to 5).\\
	We start with an initial value of $\rho$, initialized at $1/2\left(\rho_{low} + \rho_{high}\right)$ to compute $d_{i}$ (eq. \eqref{l_bound_relu}) and $u_{i}$ (eq. \eqref{u_bound_relu}). If the final distance after solving the S-procedure is less than $\rho$, we set $\rho_{low} = \rho$. if the final distance is greater than $\rho$, we set $\rho_{high}=\rho$. Set new $\rho=1/2\left(\rho_{low} + \rho_{high}\right)$. Repeat until convergence.
	
	\section{Additional experiments} 
	\label{additional_experiments}
	Empirical accuracy means the fraction of test samples that were correctly classified after running a PGD attack \cite{madry2018towards} with an $l_{2}$ bound on the adversarial perturbations. Certified accuracy means the fraction of test samples that were classified correctly initially and had the robustness certificate greater than a pre-specified attack radius $\rho$. Unless otherwise specified, for both empirical and certified accuracy, we use $\rho=0.5$. Unless otherwise specified, we use the class with the second largest logit as the attack target for the given input (i.e. the class $t$). Unless specified, the experiments were run on the MNIST dataset while noting that our results are scalable for more complex datasets. The notation ($L \times [1024]$, activation) denotes a neural network with $L$ layers with the specified activation function, ($\gamma=c$) denotes standard training with $\gamma$ set to $c$, (CRT, $c$) denotes CRT training with $\gamma=c$.  Certificates CROWN and CRC are computed over 150 correctly classified images.
	
	\subsection{Computing $K_{lb}$ and $K_{ub}$}\label{subsec:K_emp_the}
	
	First, note that $K$ does not depend on the input, but on network weights $\bWW^{(I)}$, label $y$ and target $t$. Different images may still have different $K$ because label $y$ and target $t$ may be different. 
	
	To compute $K_{lb}$ in the table, first for each pair $y$ and $t$, we find the largest eigenvalue of the Hessian of all test images that have label $y$ and second largest logit of class $t$. Then we take the max of the largest eigenvalue across all test images. This gives a rough estimate of the largest curvature in the vicinity of test images with label $y$ and target $t$. We can directly take the mean across all such pairs to compute $K_{lb}$. However, we find that some pairs $y$ and $t$ were infrequent (with barely 1,2 test images in them). Thus, for all such pairs we cannot get a good estimate of the largest curvature in vicinity. We select all pairs $y$ and $t$ that have at least 100 images in them and compute $K_{lb}$ by taking the mean across all such pairs.
	
	To compute $K_{ub}$ in the table, we compute $K$ for all pairs $y$ and $t$ that have at least 100 images, i.e at least 100 images should have label $y$ and target $t$. And then we compute the mean across all $K$ that satisfy this condition. This was done to do a fair comparison with $K_{lb}$.
	Figure \ref{fig:gamma_effect_4} shows a plot of the $K_{ub}$ and $K_{lb}$ with increasing $\gamma$ for a sigmoid network (with 4 layers).
	\begin{figure}[H]
		\centering
		\includegraphics[width=0.45\textwidth]{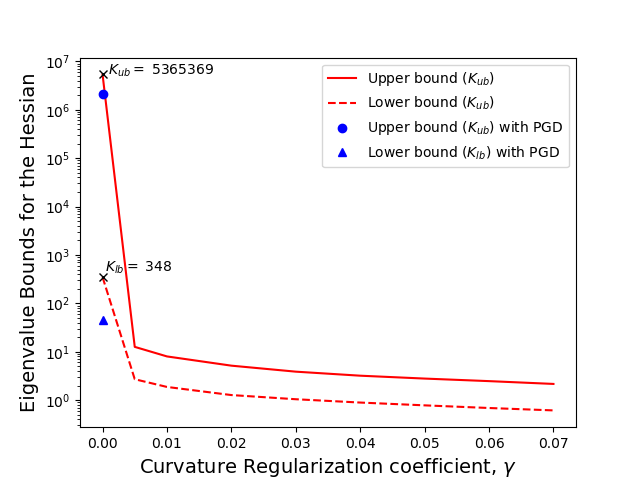}
		\caption{Effect of $\gamma$ on $K_{ub}$ and $K_{lb}$ for a 4 layer network. We observe a similar trend as in 2 and 3 layer networks (Figure \ref{fig:gamma_effect_2_3}). At $\gamma=0$, we observe $K_{ub} \approx 15418 \times K_{lb}$.}
		\label{fig:gamma_effect_4}
	\end{figure}
    \subsection{Comparison with provable defenses}
    In this section, we compare Curvature-based Robust Training (Ours) against state-of-the-art interval-bound propagation based adversarial training methods: COAP i.e Convex Outer Adversarial Polytope \cite{Wong2017ProvableDA} and CROWN-IBP \cite{ZhangCROWNIBP} with different attack radius on MNIST and Fashion-MNIST datasets. For CROWN-IBP, we vary the final\_beta parameter between 0.5 to 3 (using an interval of 0.1) and choose the model with best certified accuracy. 

	\begin{table}[ht!]
		\centering
		\renewcommand{\arraystretch}{1.15}
 		\caption{Comparison with interval-bound propagation based adversarial training methods with attack radius $\rho=0.5$ on MNIST dataset. Note that the certified accuracy of softplus network with CROWN-IBP is significantly less than that of a similar ReLU network.}
		\begin{tabular}{ | l | l | l | l | } 
			\hline
			\multicolumn{1}{|c|}{\multirow{3}{*}{Network}} & \multicolumn{1}{c|}{\multirow{3}{*}{Training}} & 
			\multicolumn{1}{c|}{\multirow{3}{4em}{Standard Accuracy}} & 
			\multicolumn{1}{c|}{\multirow{3}{4em}{Certified Accuracy}} \\ 
			& & & \\ 
			& & & \\ 
			\hline
			\multirow{2}{4em}{$2\times[1024]$, softplus} & \textbf{CRT, 0.01} & \textbf{98.69\%} & \textbf{95.5\%} \\ 
			\cline{2-4}
			& CROWN-IBP & 98.72\% & 89.31\% \\ 
			\hline 
			\multirow{2}{4em}{$2\times[1024]$, relu} & CROWN-IBP & 98.69\% & 91.38\% \\ 
			\cline{2-4}
			& COAP & 98.8\% & 90.2\% \\ 
			\hline 
			\hline
			\multirow{2}{4em}{$3\times[1024]$, softplus} & \textbf{CRT, 0.01} & \textbf{98.56\%} & \textbf{94.44\%} \\ 
			\cline{2-4}
			& CROWN-IBP & 98.55\% & 88.67\% \\
			\hline 
			\multirow{2}{4em}{$3\times[1024]$, relu} & CROWN-IBP & 98.9\% & 90.67\% \\ 
			\cline{2-4}
			& COAP & 98.9\% & 89.0\%\\ 
			\hline 
            \hline
			\multirow{2}{4em}{$4\times[1024]$, softplus} & \textbf{CRT, 0.01} & \textbf{98.43\%} & \textbf{93.35\%} \\ 
			\cline{2-4}
			& CROWN-IBP & 98.34\% & 87.41\% \\ 
			\hline
			\multirow{2}{4em}{$4\times[1024]$, relu} & CROWN-IBP & 98.78\% & 90.45\% \\ 
			\cline{2-4}
			& COAP & 98.9\% & 89.0\% \\ 
			\hline 
		\end{tabular}
		\label{table:empirical_provable_adversarial_0.5}
	\end{table}

	\begin{table}[h!]
		\centering
		\renewcommand{\arraystretch}{1.15}
		\caption{Comparison with interval-bound propagation based adversarial training methods with attack radius $\rho=0.5$  on Fashion-MNIST dataset. }
		\begin{tabular}{ | l | l | l | l | } 
			\hline
			\multicolumn{1}{|c|}{\multirow{3}{*}{Network}} & \multicolumn{1}{c|}{\multirow{3}{4em}{Training}} &		\multicolumn{1}{c|}{\multirow{3}{4em}{Standard Accuracy}} & 
			\multicolumn{1}{c|}{\multirow{3}{4em}{Certified Robust Accuracy}} \\ 
			& & & \\ 
			& & & \\ 
			\hline
			\multirow{2}{4em}{$2\times[1024]$, softplus} & \multirow{2}{4em}{CRT, 0.01} & \multirow{2}{4em}{88.45\%} & \multirow{2}{4em}{\textbf{78.45\%}} \\ 
			& & & \\ 
			\hline 
			\multirow{2}{4em}{$2\times[1024]$, relu} & COAP & 86.0\% & 74.0\%\\
			\cline{2-4}
			& CROWN-IBP & 85.89\% & 74.62\% \\ 
			\hline
			\hline 
			\multirow{2}{4em}{$3\times[1024]$, softplus} & \multirow{2}{4em}{CRT, 0.01} & \multirow{2}{4em}{86.21\%} & \multirow{2}{4em}{\textbf{76.94\%}} \\ 
			& & & \\
			\hline
			\multirow{2}{4em}{$3\times[1024]$, relu} & COAP & 85.9\% & 74.3\%\\		
			\cline{2-4}
			& CROWN-IBP & 86.27\% & 74.56\% \\ 
			\hline 
			\hline
			\multirow{2}{4em}{$4\times[1024]$, softplus} & \multirow{2}{4em}{CRT, 0.01} & \multirow{2}{4em}{86.37\%} & \multirow{2}{4.5em}{\textbf{75.02\%}} \\ 
			& & & \\
			\hline 
			\multirow{2}{4em}{$4\times[1024]$, relu} &  COAP & 85.9\% & 74.2\% \\ 
			\cline{2-4}
			& CROWN-IBP & 86.03\% & 74.38\% \\ 
			\hline 
		\end{tabular}
		\label{table:empirical_provable_adversarial_0.5_fashion_mnist}
	\end{table}

	\begin{table}[ht!]
		\centering
		\renewcommand{\arraystretch}{1.15}
		\caption{Comparison with interval-bound propagation based adversarial training methods with attack radius $\rho=1.58$ on MNIST dataset. We again observe that the certified accuracy of softplus network with CROWN-IBP is significantly less than that of a similar ReLU network.} 
		\begin{tabular}{ | l | l | l | l | } 
			\hline
			\multicolumn{1}{|c|}{\multirow{3}{*}{Network}} & \multicolumn{1}{c|}{\multirow{3}{*}{Training}} & \multicolumn{1}{c|}{\multirow{3}{3.5em}{Standard Accuracy}} & 
			\multicolumn{1}{c|}{\multirow{3}{3.5em}{Certified Robust Accuracy}} \\ 
			& & & \\ 
			& & & \\ 
			\hline
			\multirow{2}{4em}{$2\times[1024]$, softplus} & \textbf{CRT, 0.01} & \textbf{98.68\%} & \textbf{69.79\%} \\
			\cline{2-4}
			& CROWN-IBP & 88.48\% & 42.36\% \\ 
			\hline 
			\multirow{2}{4em}{$2\times[1024]$, relu} & COAP & 89.33\% & 44.29\% \\ 
			\cline{2-4}
			& CROWN-IBP & 89.49\% & 44.96\% \\ 
			\hline 
			\hline
			\multirow{4}{4em}{$3\times[1024]$, softplus} & CRT, 0.01 & 98.26\% & 14.21\% \\ 
			\cline{2-4}
			& CRT, $0.03$ & 97.82\% & 50.72\% \\ 
			\cline{2-4}
			& \textbf{CRT, 0.05} & \textbf{97.43\%} & \textbf{57.78\%} \\ 
            \cline{2-4}
            & CROWN-IBP & 86.58\% & 42.14\% \\ 
			\hline 
			\multirow{2}{4em}{$3\times[1024]$, relu} & COAP  & 89.12\% & 44.21\% \\ 
			\cline{2-4}
			& CROWN-IBP & 87.77\% & 44.74\% \\ 
			\hline 
			\hline
			\multirow{5}{4em}{$4\times[1024]$, softplus} & CRT, $0.01$ & 97.80\% & 6.25\% \\ 
			\cline{2-4}
			& CRT, $0.03$ & 97.09\% & 29.64\% \\ 
			\cline{2-4}
			& CRT, $0.05$ & 96.33\% & 44.44\% \\ 
			\cline{2-4}
			& \textbf{CRT, 0.07} & \textbf{95.60\%} & \textbf{53.19\%} \\ 
            \cline{2-4}
			& CROWN-IBP & 82.74\% & 41.34\% \\ 
			\hline 
			\multirow{2}{4em}{$4\times[1024]$, relu} & COAP & 90.17\% & 44.66\% \\ 
			\cline{2-4}
			& CROWN-IBP & 84.4\% & 43.83\% \\ 
			\hline 
		\end{tabular}
		\label{table:empirical_provable_adversarial_1.58}
	\end{table}
    \FloatBarrier
	\subsection{Comparing Randomized Smoothing with CRT}
	\begin{table}[h!]
		\centering
		\renewcommand{\arraystretch}{1.15} 
		\caption{Comparison between CRT and Randomized Smoothing\cite{Cohen2019CertifiedAR}. 
			$s$ denotes the standard deviation for smoothing. We use $\rho=0.5$. For CRT, we use $\gamma=0.01$}
		\begin{tabular}{ | l | l |  l | l | l | } 
			\hline
			\multicolumn{1}{|c|}{\multirow{2}{*}{Network}} & \multicolumn{3}{c|}{Randomized Smoothing} & \multicolumn{1}{c|}{CRT}\\ 
			\cline{2-5}
			& $s=0.25$ & $s=0.50$ & $s=1.0$ & \_ \\ 
			\hline
			\multirow{2}{4em}{$2\times[1024]$, sigmoid} 
			& \multirow{2}{*}{93.75\%} & \multirow{2}{*}{93.09\%} & \multirow{2}{*}{88.91\%} & \textbf{95.61\%} \\ 
			& & & & \\
			\hline
			\multirow{2}{4em}{$2\times[1024]$, tanh} & \multirow{2}{*}{94.61\%} & \multirow{2}{*}{93.08\%} & \multirow{2}{*}{82.26\%} & \multirow{2}{*}{\textbf{95.00\%}}\\ 
			& & & & \\
			\hline 
			\multirow{2}{4em}{$3\times[1024]$, sigmoid} 
			& \multirow{2}{*}{94.00\%} & \multirow{2}{*}{93.03\%} & \multirow{2}{*}{86.58\%} & \multirow{2}{*}{\textbf{94.99\%}} \\ 
            & & & & \\
			\hline
			\multirow{2}{4em}{$3\times[1024]$, tanh} & \multirow{2}{*}{93.69\%} & \multirow{2}{*}{91.68\%} & \multirow{2}{*}{80.55\%} & \multirow{2}{*}{\textbf{94.16\%}}\\ 
			& & & & \\
			\hline 
			\multirow{2}{4em}{$4\times[1024]$, sigmoid} & \multirow{2}{*}{\textbf{93.68\%}} & \multirow{2}{*}{92.45\%} & \multirow{2}{*}{84.99\%} & \multirow{2}{*}{93.41\%} \\ 
			& & & & \\
			\hline
			\multirow{2}{4em}{$4\times[1024]$, tanh} & \multirow{2}{*}{\textbf{93.57\%}} & \multirow{2}{*}{92.19\%} & \multirow{2}{*}{83.90\%} & \multirow{2}{*}{91.37\%}\\ 
			& & & & \\
			\hline 
		\end{tabular}
		\label{table:smoothing_adversarial}
	\end{table}	
	Since, randomized smoothing is designed to work in untargeted attack settings while CRT is for targeted attacks, we make the following changes in randomized smoothing. First, we use $n_{0}=100$ initial samples to select the label class ($l$) and false target class ($t$). The samples for estimation were $n=100,000$ and failure probability was $\alpha=0.001$. Then we use the binary version of randomized smoothing for estimation, i.e classify between $y$ and $t$. To find the adversarial example for adversarial training, we use the cross entropy loss for $2$ classes ($y$ and $t$).
    \subsection{Additional experiments}

	\begin{table}[h!]
		\centering
		\renewcommand{\arraystretch}{1.15} 
		\caption{Table showing success rates $(primal=dual)$ for different values of $\gamma$. Certificate success rate denotes the fraction of points ($\bx^{(0)})$ satisfying $\bz_{y}-\bz_{t}=0$, Attack success rate denotes the fraction of points ($\bx^{(0)})$ satisfying $\|\bx^{(attack)} - \bx^{(0)}\|_{2} = \rho$ implying $primal = dual$ in Theorems \ref{thm:certificate} and \ref{thm:attack} respectively. We observe that as we increase $\gamma$, the fraction of points satisfying $primal=dual$ increases for both the certificate and attack problems. This can be attributed to the curvature bound $K(\bWW, y, t)$ becoming tight on increasing $\gamma$.}
		\begin{tabular}{ | l | l |  l | l | l | } 
			\hline
			\multicolumn{1}{|c|}{\multirow{3}{*}{Network}} & \multicolumn{1}{c|}{\multirow{3}{*}{$\gamma$}} & \multicolumn{1}{c|}{\multirow{3}{*}{Accuracy}} & \multirow{3}{4em}{Attack success rate} & \multirow{3}{4em}{Certificate success rate} \\ 
			& & & & \\
			& & & & \\
			\hline
			\multirow{4}{4em}{$2\times[1024]$, sigmoid} & 0. & 98.77\%  & 5.05\% & 2.24\% \\ 
			\cline{2-5}
			& 0.01 & 98.57\% & 100\% & 15.68\%\\ 
			\cline{2-5}
			& 0.02 & 98.59\% & 100\% & 31.56\%\\ 
			\cline{2-5}
			& 0.03 & 98.30\% & 100\% & 44.17\%\\ 
			\hline
			\multirow{4}{4em}{$3\times[1024]$, sigmoid} & 0. & 98.52\% & 0.\% & 0.12\% \\ 
			\cline{2-5}
			& 0.01 & 98.23\% & 44.86\% & 3.34\%\\ 
			\cline{2-5}
			& 0.03 & 97.86\% & 100\%  & 11.51\% \\ 
			\cline{2-5}
			& 0.05 & 97.60\% & 100\% & 22.59\%\\ 
			\hline
			\multirow{6}{4em}{$4\times[1024]$, sigmoid} & 0. & 98.22\% & 0.\% & 0.01\%\\ 
			\cline{2-5}
			& 0.01 & 97.24\% & 24.42\% & 2.68\%\\ 
			\cline{2-5}
			& 0.03 & 96.27\% & 44.42\% & 6.45\%\\ 
			\cline{2-5}
			& 0.05 & 95.77\% & 99.97\% & 12.40\%\\ 
			\cline{2-5}
			& 0.06 & 95.52\% & 100\% & 15.87\%\\ 
			\cline{2-5}
			& 0.07 & 95.24\% & 100\% & 19.53\% \\ 
			\hline
		\end{tabular}
		\label{table:primal_dual_eq_fraction_full}
	\end{table}	

	\begin{table*}[ht!]
		\centering
		\renewcommand{\arraystretch}{1.15} 
		\caption{Results for CIFAR-10 dataset (only curvature regularization, no CRT training)}
		\begin{tabular}{ | l | l | l | l | l | l | l | l | } 
			\hline
			\multicolumn{1}{|c|}{\multirow{3}{*}{Network}} & \multicolumn{1}{c|}{\multirow{3}{*}{Training}} & \multicolumn{1}{c|}{\multirow{3}{3.5em}{Standard Accuracy}} & 
			\multicolumn{1}{c|}{\multirow{3}{3.5em}{Empirical Robust Accuracy}} &
			\multicolumn{1}{c|}{\multirow{3}{3.5em}{Certified Robust Accuracy}} & \multicolumn{2}{c|}{\multirow{2}{4em}{Certificate (mean)}} \\ 
			& & & \multicolumn{1}{c|}{} & \multicolumn{1}{c|}{} & \multicolumn{1}{c}{} & \multicolumn{1}{c|}{}\\
			\cline{6-7}
			& & & & & CROWN & CRC \\
			\hline
			\multirow{2}{*}{$2\times[1024]$, sigmoid}
			& standard & 46.23\% & 37.82\% & 14.10\% & 0.37219 & \textbf{0.38173} \\ 
			\cline{2-7}
			& $\gamma = 0.01$ & 45.42\% & 38.17\% & 26.50\% & 0.40540 & \textbf{0.55010}  \\ 
			\hline 
			\multirow{2}{*}{$3\times[1024]$, sigmoid}
			& standard & 48.57\% & 34.80\% & 0.00\% & 0.19127 & 0.01404 \\
			\cline{2-7} 
			& $\gamma = 0.01$ & 50.31\% & 39.87\% & 18.28\% & 0.24778 & \textbf{0.37895 }\\ 
			\hline 
			\multirow{2}{*}{$4\times[1024]$, sigmoid} &
			standard & 46.04\% & 34.38\% & 0.00\% & 0.19340 & 0.00191  \\ 
			\cline{2-7}
			& $\gamma = 0.01$ & 48.28\% & 40.10\% & 21.07\% & 0.29654 & \textbf{0.40005}  \\ 
			\hline 
		\end{tabular}
		\label{table:cifar_results}
	\end{table*}

	\begin{table*}[b!]
		\centering
		\renewcommand{\arraystretch}{1.15}
		\caption{Comparison between CRT, PGD \cite{madry2018towards} and TRADES \cite{Zhang2019TheoreticallyPT} for sigmoid and tanh networks. CRC outperforms CROWN significantly for 2 layer networks and when trained with our regularizer for deeper networks. CRT outperforms TRADES and PGD giving higher certified accuracy. }
		\begin{tabular}{ | l | l | l | l | l | l | l | } 
			\hline
			\multicolumn{1}{|c|}{\multirow{3}{*}{Network}} & \multicolumn{1}{c|}{\multirow{3}{*}{Training}} & \multicolumn{1}{c|}{\multirow{3}{3.5em}{Standard Accuracy}} & 
			\multicolumn{1}{c|}{\multirow{3}{3.5em}{Empirical Robust Accuracy}} &
			\multicolumn{1}{c|}{\multirow{3}{3.5em}{Certified Robust Accuracy}} & \multicolumn{2}{c|}{\multirow{2}{4em}{Certificate (mean)}} \\ 
			& & & \multicolumn{1}{c|}{} & \multicolumn{1}{c|}{} & \multicolumn{1}{c}{} & \multicolumn{1}{c|}{}\\
			\cline{6-7}
			& & & & & CROWN & CRC \\
						\hline
						\multirow{3}{4em}{$2\times[1024]$, sigmoid} & PGD & 98.80\% & 96.26\% & 93.37\% & 0.37595 & 0.82702 \\ 
						\cline{2-7} 
						& TRADES & 98.87\% & 96.76\% & 95.13\% & 0.41358 & 0.92300 \\ 
						\cline{2-7} 
						& CRT, $0.01$ & 98.57\% & 96.28\% & \textbf{95.59\%} & 0.43061 & \textbf{1.54673} \\ 
			\hline 
			\multirow{3}{4em}{$2\times[1024]$, tanh} & PGD & 98.76\% & 95.79\% & 84.11\% & 0.30833 & 0.61340 \\ 
			\cline{2-7} 
			& TRADES & 98.63\% & 96.20\% & 93.72\% & 0.40601 & 0.86287 \\ 
			\cline{2-7} 
			& CRT, $0.01$ & 98.52\% & 95.90\% & \textbf{95.00\%} & 0.37691 & \textbf{1.47016} \\ 
						\hline 
						\multirow{3}{4em}{$3\times[1024]$, sigmoid} & PGD & 98.84\% & 96.14\% & 0.00\% & 0.29632 & 0.07290 \\ 
						\cline{2-7} 
						& TRADES & 98.95\% & 96.79\% & 0.00\% & 0.30576 & 0.09108 \\ 
						\cline{2-7} 
						& CRT, $0.01$ & 98.23\% & 95.70\% & \textbf{94.99\%} & 0.39603 & \textbf{1.24100} \\ 
			\hline 
			\multirow{3}{4em}{$3\times[1024]$, tanh} & PGD & 98.78\% & 94.92\% & 0.00\% & 0.12706 & 0.03036 \\ 
			\cline{2-7} 
			& TRADES & 98.16\% & 94.78\% & 0.00\% & 0.15875 & 0.02983  \\ 
			\cline{2-7} 
			& CRT, $0.01$ & 98.15\% & 95.00\% & \textbf{94.16\%} & 0.28004 & \textbf{1.14995}  \\ 
						\hline 
						\multirow{3}{4em}{$4\times[1024]$, sigmoid} & PGD & 98.84\% & 96.26\% & 0.00\% & 0.25444 & 0.00658  \\ 
						\cline{2-7} 
						& TRADES & 98.76\% & 96.67\% & 0.00\% & 0.26128 & 0.00625 \\ 
						\cline{2-7} 
						& CRT, $0.01$ & 97.83\% & 94.65\% & \textbf{93.41\%} & 0.40327 & \textbf{1.06208}  \\ 
			\hline 
			\multirow{3}{4em}{$4\times[1024]$, tanh} & PGD & 98.53\% & 94.53\% & 0.00\% & 0.07439 & 0.00140\\ 
			\cline{2-7} 
			& TRADES & 97.08\% & 92.85\% & 0.00\% & 0.11889 & 0.00068 \\ 
			\cline{2-7} 
			& CRT, $0.01$ & 97.24\% & 93.05\% & \textbf{91.37\%} & 0.33649 & \textbf{0.93890} \\ 
			\hline 
		\end{tabular}
		\label{table:empirical_adversarial_appendix}
	\end{table*}
	
	\begin{table*}[h!]
		\centering
		\renewcommand{\arraystretch}{1.15} 
		\caption{Comparison between CRC and CROWN-general (CROWN-Ada for relu) for different targets. For CRT training, we use $\gamma=0.01$. We compare CRC with CROWN-general for different targets for 150 correctly classified images. Runner-up means class with second highest logit is considered as adversarial class. Random means any random class other than the label is considered adversarial. Least means class with smallest logit is adversarial. For 2-layer networks, CRC outperforms CROWN-general significantly even without adversarial training. For deeper networks (3 and 4 layers), CRC works better on networks that are trained with curvature regularization. Both CROWN and CRC are computed on CPU but the running time numbers mentioned here are not directly comparable because our CRC implementation uses a batch of images while the CROWN implementation uses a single image at a time.}
		\begin{tabular}{ | l | l | l | l | l | l | l | l | }
			\hline
			\multirow{2}{*}{Network} & \multirow{2}{*}{Training} & \multirow{2}{*}{Target} & \multicolumn{2}{c|}{Certificate (mean)} & \multicolumn{2}{c|}{Time per Image (s)} \\ 
			\cline{4-7}
			&  & & CROWN & CRC & CROWN & CRC \\ 
			\hline
			\multirow{3}{*}{$2\times[1024]$, relu} & \multirow{3}{*}{standard} & runner-up & 0.50110 & \textbf{0.59166} & 0.1359 & 2.3492 \\ 
			\cline{3-7} 
			& & random & 0.68506 & \textbf{0.83080} & 0.2213 & 3.5942 \\ 
			\cline{3-7} 
			& & least & 0.86386 & \textbf{1.04883} & 0.1904 & 3.0292 \\ 
			\hline
			\multirow{6}{*}{$2\times[1024]$, sigmoid} & \multirow{3}{*}{standard} & runner-up & 0.28395 & \textbf{0.48500} & 0.1818 & 0.1911 \\ 
			\cline{3-7} 
			& & random & 0.38501 & \textbf{0.69087} & 0.1870 & 0.1912 \\ 
			\cline{3-7} 
			& & least & 0.47639 & \textbf{0.85526} & 0.1857 & 0.1920 \\ 
			\cline{2-7} 
			& \multirow{3}{4.1em}{CRT,\ $0.01$} & runner-up & 0.43061 & \textbf{1.54673} & 0.1823 & 0.1910 \\ 
			\cline{3-7} 
			& & random & 0.52847 & \textbf{1.99918} & 0.1853 & 0.1911 \\ 
			\cline{3-7} 
			& & least & 0.62319 & \textbf{2.41047} & 0.1873 & 0.1911 \\ 
			\hline 
			\multirow{6}{*}{$2\times[1024]$, tanh} & \multirow{3}{*}{standard} & runner-up & 0.23928 & \textbf{0.40047} & 0.1672 & 0.1973 \\ 
			\cline{3-7} 
			& & random & 0.31281 & \textbf{0.52025} & 0.1680 & 0.1986 \\ 
			\cline{3-7} 
			& & least & 0.38964 & \textbf{0.63081} & 0.1726 & 0.1993 \\ 
			\cline{2-7} 
			& \multirow{3}{4.1em}{CRT, $0.01$} & runner-up & 0.37691 & \textbf{1.47016} & 0.1633 & 0.1963 \\ 
			\cline{3-7} 
			& & random & 0.45896 & \textbf{1.87571} & 0.1657 & 0.1982 \\ 
			\cline{3-7} 
			& & least & 0.52800 & \textbf{2.21704} & 0.1697 & 0.1981
			\\ 
			\hline
			\multirow{6}{*}{$3\times[1024]$, sigmoid} & \multirow{3}{*}{standard} & runner-up & \textbf{0.24644} & 0.06874 & 1.6356 & 0.5012 \\ 
			\cline{3-7} 
			& & random & \textbf{0.29496} & 0.08275 & 1.5871 & 0.5090 \\ 
			\cline{3-7} 
			& & least & \textbf{0.33436} & 0.09771 & 1.6415 & 0.5056 \\ 
			\cline{2-7} 
			& \multirow{3}{4.1em}{CRT, $0.01$} & runner-up & 0.39603 & \textbf{1.24100} & 1.5625 & 0.5013 \\ 
			\cline{3-7} 
			& & random & 0.46808 & \textbf{1.54622} & 1.6142 & 0.4974\\ 
			\cline{3-7} 
			& & least & 0.51906 & \textbf{1.75916} & 1.6054 & 0.4967 \\ 
			\hline 
			\multirow{6}{*}{$3\times[1024]$, tanh} & \multirow{3}{*}{standard} & runner-up & \textbf{0.08174} & 0.01169 & 1.4818 & 0.4908 \\ 
			\cline{3-7} 
			& & random & \textbf{0.10012} & 0.01432 & 1.5906 & 0.4963\\ 
			\cline{3-7} 
			& & least & \textbf{0.12132} & 0.01757 & 1.5888 & 0.5076\\ 
			\cline{2-7} 
			& \multirow{3}{4.1em}{CRT, $0.01$} & runner-up & 0.28004 & \textbf{1.14995} & 1.4832 & 0.4926 \\ 
			\cline{3-7} 
			& & random & 0.32942 & \textbf{1.41032} & 1.5637 & 0.4957 \\ 
			\cline{3-7} 
			& & least & 0.38023 & \textbf{1.65692} & 1.5626 & 0.4930 \\ 
			\hline
			\multirow{6}{*}{$4\times[1024]$, sigmoid} & \multirow{3}{*}{standard} & runner-up & \textbf{0.19501} & 0.00454 & 4.7814 & 0.8107 \\ 
			\cline{3-7} 
			& & random & \textbf{0.21417} & 0.00542 & 4.6313 & 0.8377 \\ 
			\cline{3-7} 
			& & least & \textbf{0.22706} & 0.00609 & 4.7973 & 0.8313 \\ 
			\cline{2-7} 
			& \multirow{3}{4.1em}{CRT, $0.01$} & runner-up & 0.40327 & \textbf{1.06208} & 4.1830 & 0.8088 \\ 
			\cline{3-7} 
			& & random & 0.47038 & \textbf{1.29095} & 4.3922 &  0.7333\\ 
			\cline{3-7} 
			& & least & 0.52249 & \textbf{1.49521} & 4.4676 & 0.7879 \\ 
			\hline
			\multirow{6}{*}{$4\times[1024]$, tanh} &  \multirow{3}{*}{standard} & runner-up & \textbf{0.03554} & 0.00028 & 5.7016 & 0.8836 \\ 
			\cline{3-7} 
			& & random & \textbf{0.04247} & 0.00036 & 5.8379 & 0.8602\\ 
			\cline{3-7} 
			& & least & \textbf{0.04895} & 0.00044 & 5.8298 & 0.9045\\ 
			\cline{2-7} 
			& \multirow{3}{4.1em}{CRT, $0.01$} & runner-up & 0.33649 & \textbf{0.93890} & 3.8815 & 0.8182 \\ 
			\cline{3-7} 
			& & random & 0.41617 & \textbf{1.18956} & 4.0013 & 0.8215 \\ 
			\cline{3-7} 
			& & least & 0.47778 & \textbf{1.41429} & 4.3856 & 0.8311 \\ 
			\hline 
		\end{tabular}
		\label{table:cert_compare_diff_targets}
	\end{table*}

	\begin{table*}[h!]
		\centering
		\renewcommand{\arraystretch}{1.15} 
		\caption{In this table, we measure the effect of increasing $\gamma$, when the network is trained with CRT on standard, empirical, certified robust accuracy, $K_{lb}$  and $K_{ub}$ (defined in subsection \ref{subsec:K_emp_the}) for different depths (2, 3, 4 layer) and activations  (sigmoid, tanh). We find that for all networks $\gamma=0.01$ works best. We find that the lower bound, $K_{lb}$ increases (for $\gamma=0$) for deeper networks suggesting that deep networks have higher curvature. Furthermore, for a given $\gamma$ (say $0.005$), we find that the gap between $K_{ub}$ and $K_{lb}$ increases as we increase the depth suggesting that $K$ is not a tight bound for deeper networks.} 
		\begin{tabular}{ | l | l | l | l | l | l | l |} 
			\hline
			\multirow{3}{*}{Network} & \multirow{3}{*}{$\gamma$} & \multirow{3}{4em}{Standard Accuracy} & \multirow{3}{4em}{Empirical Robust Accuracy} & \multirow{3}{4em}{Certified Robust Accuracy} & \multicolumn{2}{c|}{Curvature bound (mean)}\\ 
			\cline{6-7}
			& &  & & & \multirow{2}{4em}{$K_{lb}$} & \multirow{2}{4em}{$K_{ub}$} \\
			& & & & & & \\
			\hline
			\multirow{5}{4em}{$2\times[1024]$, sigmoid} & 0.0 & 98.77\% & 96.17\% & 95.04\% & 7.2031 & 72.0835 \\ 
			\cline{2-7} 
			& 0.005 & 98.82\% & 96.33\% & \textbf{95.61\%} & 3.8411 & 8.2656 \\ 
			\cline{2-7} 
			& 0.01 & 98.57\% & 96.28\% & \textbf{95.59\%} & 2.8196 & 5.4873 \\ 
			\cline{2-7} 
			& 0.02 & 98.59\% & 95.97\% & 95.22\% & 2.2114 & 3.7228 \\ 
			\cline{2-7} 
			& 0.03 & 98.30\% & 95.73\% & 94.94\% & 1.8501 & 2.9219 \\ 
			\cline{1-7}
			\multirow{5}{4em}{$2\times[1024]$, tanh} & 0.0 & 98.65\% & 95.48\% & 92.69\% & 12.8434 & 107.5689 \\ 
			\cline{2-7} 
			& 0.005 & 98.71\% & 95.88\% & 94.76\% & 4.8116 & 10.1860 \\ 
			\cline{2-7} 
			& 0.01 & 98.52\% & 95.90\% & \textbf{95.00\%} & 3.4269 & 6.3529 \\ 
			\cline{2-7} 
			& 0.02 & 98.35\% & 95.71\% & 94.77\% & 2.3943 & 4.1513 \\ 
			\cline{2-7} 
			& 0.03 & 98.29\% & 95.39\% & 94.54\% & 1.9860 & 3.933 \\ 
			\hline 
			\multirow{7}{4em}{$3\times[1024]$, sigmoid} 
			& 0. & 98.52\% & 90.26\% & 0.00\% & 19.2131 & 3294.9070 \\
			\cline{2-7} 		
			& 0.005 & 98.41\% & 95.81\% & 94.91\% & 2.6249 & 13.4985 \\ 
			\cline{2-7} 
			& 0.01 & 98.23\% & 95.70\% & \textbf{94.99\%} & 1.9902 & 8.6654 \\ 
			\cline{2-7} 
			& 0.02 & 97.99\% & 95.33\% & 94.64\% & 1.4903 & 5.4380 \\ 
			\cline{2-7} 
			& 0.03 & 97.86\% & 94.98\% & 94.15\% & 1.2396 & 4.1409 \\ 
			\cline{2-7} 
			& 0.04 & 97.73\% & 94.60\% & 93.88\% & 1.0886 & 3.3354 \\
			\cline{2-7} 
			& 0.05 & 97.60\% & 94.45\% & 93.65\% & 0.9677 & 2.7839 
			\\ 
			\hline 
			\multirow{7}{4em}{$3\times[1024]$, tanh} 
			& 0. & 98.19\% & 86.38\% & 0.00\% & 133.7992 & 17767.5918 \\ 
			\cline{2-7}
			& 0.005 & 98.13\% & 94.56\% & 93.01\% & 3.2461 & 17.5500 \\ 
			\cline{2-7} 
			& 0.01 & 98.15\% & 95.00\% & \textbf{94.16\%} & 2.2347 & 10.8635 \\ 
			\cline{2-7} 
			& 0.02 & 97.84\% & 94.79\% & 94.05\% & 1.6556 & 6.7072 \\ 
			\cline{2-7} 
			& 0.03 & 97.70\% & 94.19\% & 93.42\% & 1.3546 & 5.0533 \\ 
			\cline{2-7} 
			& 0.04 & 97.57\% & 94.04\% & 92.95\% & 1.1621  & 4.0071 \\
			\cline{2-7} 
			& 0.05 & 97.31\% & 93.66\% & 92.65\% & 1.0354 & 3.3439 
			\\ 
			\hline 
			\multirow{7}{4em}{$4\times[1024]$, sigmoid} 
			& 0. & 98.22\% & 83.04\% & 0.00\% & 86.9974 & 343582.3125 \\ 
			\cline{2-7} 
			& 0.01 & 97.83\% & 94.65\% & \textbf{93.41\%} & 1.6823 & 10.2289 \\ 
			\cline{2-7} 
			& 0.02 & 97.33\% & 94.02\% & 92.94\% & 1.2089 & 6.5573 \\ 
			\cline{2-7} 
			& 0.03 & 97.07\% & 93.52\% & 92.65\% & 1.0144 & 4.9576 \\ 
			\cline{2-7} 
			& 0.04 & 96.70\% & 92.78\% & 91.95\% & 0.8840 & 3.9967 \\
			\cline{2-7} 
			& 0.05 & 96.38\% & 92.29\% & 91.33\% & 0.7890 & 3.4183 \\ 
			\cline{2-7} 
			& 0.07 & 96.08\% & 91.83\% & 90.67\% & 0.6614 & 2.6905 
			\\ 
			\hline 
			\multirow{7}{4em}{$4\times[1024]$, tanh} 
			& 0. & 97.45\% & 75.18\% & 0.00\% & 913.6984 & 37148156 \\ 
			\cline{2-7} 
			& 0.01 & 97.24\% & 93.05\% & \textbf{91.37\%} & 1.9114 & 12.2148 \\ 
			\cline{2-7} 
			& 0.02 & 96.82\% & 92.65\% & 91.35\% & 1.3882 & 7.1771 \\ 
			\cline{2-7} 
			& 0.03 & 96.27\% & 91.43\% & 90.09\% & 1.1643 & 5.1671 \\ 
			\cline{2-7} 
			& 0.04 & 95.62\% & 90.69\% & 89.41\% & 0.9620 & 3.9061 \\
			\cline{2-7} 
			& 0.05 & 95.77\% & 90.69\% & 89.40\% & 0.9160 & 3.2909 \\ 
			\cline{2-7} 
			& 0.07 & 95.24\% & 89.51\% & 87.91\% & 0.7540 & 2.5635 
			\\ 
			\hline 
		\end{tabular}
		\label{table:hyperparam_search}
	\end{table*}
	
	\begin{table*}[h!]
		\centering
		\renewcommand{\arraystretch}{1.15} 
		\caption{In this table, we measure the impact of increasing curvature regularization $(\gamma)$ on accuracy, empirical robust accuracy, certified robust accuracy, CROWN-general and CRC when the network is trained without any adversarial training. We find that adding a very small amount of curvature regularization has a minimal impact on the accuracy but significantly increases CRC. Increase in CROWN certificate is not of similar magnitude. Somewhat surprisingly, we observe that even without any adversarial training, we can get nontrivial certified accuracies of $84.73\%, 88.66\%, 89.61\%$ on 2,3,4 layer sigmoid networks respectively.
		}
		\begin{tabular}{ | l | l | l | l | l | l | l | l | } 
			\hline
			\multirow{3}{*}{Network} & \multirow{3}{*}{$\gamma$} & \multirow{3}{4em}{Standard Accuracy} & \multirow{3}{4em}{Empirical Robust Accuracy} & \multirow{3}{4em}{Certified Robust Accuracy} & \multicolumn{2}{c|}{Certificate (mean)}\\ 
			\cline{6-7}
			& & & & & \multirow{2}{*}{CROWN} & \multirow{2}{*}{CRC} \\
			& & & & & & \\
			\hline
			\multirow{5}{*}{$2\times[1024]$, sigmoid}
			& 0. & 98.37\% & 76.28\% & 54.17\% & 0.28395 & 0.48500 \\ 
			\cline{2-7}
			& 0.005 & 97.96\% & 88.65\% & 82.68\% & 0.36125 & 0.83367 \\ 
			\cline{2-7} 
			& 0.01 & 98.08\% & 88.82\% & 83.53\% & 0.32548 & 0.84719  \\ 
			\cline{2-7} 
			& 0.02 & 97.88\% & 88.90\% & 83.68\% & 0.34744 & 0.86632  \\ 
			\cline{2-7} 
			& 0.03 & 97.73\% & 89.28\% & \textbf{84.73\%} & 0.35387 & 0.90490  \\ 
			\hline 
			\multirow{5}{*}{$2\times[1024]$, tanh}
			& 0. & 98.34\% & 79.10\% & 14.42\% & 0.23938 & 0.40047 \\ 
			\cline{2-7}
			& 0.005 & 98.01\% & 89.95\% & 85.70\% & 0.27262 & 0.89672 \\ 
			\cline{2-7} 
			& 0.01 & 97.99\% & 90.17\% & 86.18\% & 0.28647 & 0.93819  \\ 
			\cline{2-7} 
			& 0.02 & 97.64\% & 90.13\% & \textbf{86.40\%} & 0.30075 & 0.99166  \\ 
			\cline{2-7} 
			& 0.03 & 97.52\% & 89.96\% & 86.22\% & 0.30614 & 0.98771  \\ 
			\hline 
			\multirow{7}{*}{$3\times[1024]$, sigmoid}
			& 0. & 98.37\% & 85.19\% & 0.00\% & 0.24644 & 0.06874 \\
			\cline{2-7} 
			& 0.005 & 97.98\% & 91.93\% & \textbf{88.66\%} & 0.38030 & 0.99044 \\ 
			\cline{2-7} 
			& 0.01 & 97.71\% & 91.49\% & 88.33\% & 0.39799 & 1.07842 \\ 
			\cline{2-7} 
			& 0.02 & 97.50\% & 91.34\% & 88.38\% & 0.38091 & 1.08396  \\ 
			\cline{2-7} 
			& 0.03 & 97.16\% & 91.10\% & 88.63\% & 0.41015 & 1.15505  \\ 
			\cline{2-7} 
			& 0.04 & 97.03\% & 90.96\% & 88.48\% & 0.42704 & 1.18073  \\
			\cline{2-7} 
			& 0.05 & 96.76\% & 90.65\% & 88.30\% & 0.43884 & 1.19296 
			\\ 
			\hline 
			\multirow{7}{*}{$3\times[1024]$, tanh} & 0. & 97.91\% & 77.40\% & 0.00\% & 0.08174 & 0.01169  \\ 
			\cline{2-7}
			& 0.005 & 97.45\% & 91.32\% & \textbf{88.57\%} & 0.28196 & 0.95367  \\ 
			\cline{2-7} 
			& 0.01 & 97.29\% & 90.98\% & 88.31\% & 0.31237 & 1.05915  \\ 
			\cline{2-7} 
			& 0.02 & 97.04\% & 90.21\% & 87.77\% & 0.30901 & 1.08607  \\ 
			\cline{2-7} 
			& 0.03 & 96.88\% & 90.02\% & 87.52\% & 0.34148 & 1.11717  \\ 
			\cline{2-7} 
			& 0.04 & 96.53\% & 89.61\% & 86.87\% & 0.36583 & 1.11307  \\
			\cline{2-7} 
			& 0.05 & 96.31\% & 89.25\% & 86.26\% & 0.38519 & 1.11689 
			\\ 
			\hline 
			\multirow{7}{*}{$4\times[1024]$, sigmoid} &
			0. & 98.39\% & 83.27\% & 0.00\% & 0.19501 & 0.00454  \\ 
			\cline{2-7} 
			& 0.01 & 97.41\% & 91.71\% & \textbf{89.61\%} & 0.40620 & 1.05323  \\ 
			\cline{2-7} 
			& 0.02 & 96.47\% & 90.03\% & 87.77\% & 0.45074 & 1.14219  \\ 
			\cline{2-7} 
			& 0.03 & 96.24\% & 90.40\% & 88.14\% & 0.47961 & 1.30671  \\ 
			\cline{2-7} 
			& 0.04 & 95.65\% & 89.61\% & 87.54\% & 0.49987 & 1.35129  \\
			\cline{2-7} 
			& 0.05 & 95.36\% & 89.10\% & 87.09\% & 0.51187 & 1.36064  \\ 
			\cline{2-7} 
			& 0.07 & 95.23\% & 88.03\% & 85.93\% & 0.54754 & 1.27948 
			\\ 
			\hline 
			\multirow{7}{*}{$4\times[1024]$, tanh} & 
			0. & 97.65\% & 69.20\% & 0.00\% & 0.03554 & 0.00028  \\ 
			\cline{2-7} 
			& 0.01 & 96.52\% & 89.38\% & \textbf{86.40\%} & 0.34778 & 0.97365  \\ 
			\cline{2-7} 
			& 0.02 & 96.09\% & 88.79\% & 86.09\% & 0.41662 & 1.10860  \\ 
			\cline{2-7} 
			& 0.03 & 95.74\% & 88.36\% & 85.65\% & 0.44981 & 1.17400  \\ 
			\cline{2-7} 
			& 0.04 & 95.10\% & 87.50\% & 84.74\% & 0.48356 & 1.21957  \\
			\cline{2-7} 
			& 0.05 & 95.14\% & 87.72\% & 84.77\% & 0.49113 & 1.25076  \\ 
			\cline{2-7} 
			& 0.07 & 94.34\% & 86.67\% & 83.90\% & 0.49750 & 1.24198
			\\ 
			\hline 
		\end{tabular}
		\label{table:curv_reg}
	\end{table*}
    \FloatBarrier
	
	\bibliography{main}
	\bibliographystyle{icml2020}
	
\end{document}